%-----------------------------------------------------------------------
\documentclass[11pt]{article}

%-----------------------------------------------------------------------
\usepackage{amsfonts}
\usepackage{amsmath}
\usepackage{graphicx}
\usepackage{subfigure}
\usepackage{color}
\usepackage{vmargin}
%-----------------------------------------------------------------------
% FROM TOKE
\usepackage[percent]{overpic}
\usepackage{rotating}
\newtheorem{claimenv}{Lemma}
%-----------------------------------------------------------------------
\newenvironment{proof}{\noindent {\em Proof:}}{\\\hspace*{\fill}\mbox{$\diamond$}}
%-----------------------------------------------------------------------
\usepackage{algorithm}
\usepackage{algorithmic}
\usepackage{bm}
\usepackage{amsmath, amssymb, amsbsy}
\usepackage{contour}
\usepackage[usenames,dvipsnames]{xcolor}
\definecolor{light-gray}{gray}{0.75}

%-----------------------------------------------------------------------

\setpapersize{USletter}
\setmarginsrb{1in}{1.5in}{1in}{0.5in}{0in}{0in}{11pt}{0.4in} % left, top, right, bottom, header, head sep, footer, foot skip
%\setmarginsrb{1in}{0.75in}{1in}{0.6in}{0in}{0in}{11pt}{0.4in} % left, top, right, bottom, header, head sep, footer, foot skip
%
%\addtolength{\textwidth}{1.4in}
%\addtolength{\oddsidemargin}{-0.5in}
%\addtolength{\evensidemargin}{-0.5in}
%\addtolength{\topmargin}{-0.5in}
%\addtolength{\textheight}{1.7in}

%\newcommand{\red}[1]{\textcolor{red}{#1}}
% Change this to \newcommand{\red}[1]{#1} to remove the highlighting.

%\newcommand{\SDP}{{\sf SDP}\xspace}
%\newcommand{\SPECTRAL}{{\sf SPECTRAL}\xspace}

%-----------------------------------------------------------------------

\newcommand{\gs}[1]{#1_small.pdf}
%\newcommand{\gs}[2]{#1#2}

 %%small images
%\newcommand{\customgraphics}[1]{#1} %%large images

\newlength{\defbaselineskip}
\setlength{\defbaselineskip}{\baselineskip}

%-----------------------------------------------------------------------

\contournumber{5}

\begin{document}

%-----------------------------------------------------------------------
%%%
%\begin{titlepage}

\title{Semi-supervised Eigenvectors for 
 Large-scale \\  %% SAY LARGE-SCALE FOR JOURNAL IF HAVE LARGE-SCALE VARIANTS
Locally-biased Learning%
\footnote{A preliminary version of parts of this paper appeared in the
Proceedings of the 2012 NIPS Conference~\cite{HM12}.}
}

\author{
Toke J. Hansen
\thanks{
Department of Applied Mathematics and Computer Science,
Technical University of Denmark,
{\tt tjha@imm.dtu.dk}.
}
\and
Michael W. Mahoney
\thanks{
Department of Mathematics,
Stanford University.  %Stanford, CA 94305.
{\tt mmahoney@cs.stanford.edu}.
}
}

\date{}
\maketitle

%\begin{center}
%{\bf
%\emph{
%NOTE:  Draft version from \today.  \\
%PLEASE DO NOT DISTRIBUTE THIS VERSION.
%}
%}
%\end{center}

%\vspace{3mm}
\begin{abstract}
In many applications, one has side information, \emph{e.g.}, labels that are 
provided in a semi-supervised manner, about a specific target region of a 
large data set, and one wants to perform machine learning and data analysis 
tasks ``nearby'' that prespecified target region.  
For example, one might be interested in the clustering structure of a data 
graph near a prespecified ``seed set'' of nodes, or one might be interested
in finding partitions in an image that are near a prespecified ``ground 
truth'' set of pixels.
Locally-biased problems of this sort are particularly challenging for 
popular eigenvector-based machine learning and data analysis tools.
At root, the reason is that eigenvectors are inherently global quantities, 
thus limiting the applicability of eigenvector-based methods in situations 
where one is interested in very local properties of the~data.

In this paper, we address this issue by providing a methodology to construct 
\emph{semi-supervised eigenvectors} of a graph Laplacian, and we illustrate 
how these locally-biased eigenvectors can be used to perform 
\emph{locally-biased machine learning}.
These semi-supervised eigenvectors capture successively-orthogonalized 
directions of maximum variance, conditioned on being well-correlated with an 
input seed set of nodes that is assumed to be provided in a semi-supervised 
manner.
We show that these semi-supervised eigenvectors can be computed quickly as 
the solution to a system of linear equations; and we also describe several 
variants of our basic method that have improved scaling properties.
We provide several empirical examples demonstrating how these 
semi-supervised eigenvectors can be used to perform locally-biased learning; 
and we discuss the relationship between our results and recent machine 
learning algorithms that use global eigenvectors of the graph Laplacian.
\end{abstract}
%Keywords: Semi-supervised eigenvectors, spectral, local, kernel, machine learning
%\vspace{3mm}

\section{Introduction}
\label{sxn:intro}

In many applications, one has information about a specific target region of 
a large data set, and one wants to perform common machine learning and data 
analysis tasks ``nearby'' the pre-specified target region.  
In such situations, eigenvector-based methods such as those that have been 
popular in machine learning in recent years tend to have serious 
difficulties.
At root, the reason is that eigenvectors, \emph{e.g.}, of Laplacian matrices 
of data graphs, are inherently \emph{global} quantities, and thus they might 
not be sensitive to very \emph{local} information.
Motivated by this, we consider the problem of finding a set of 
locally-biased vectors---we will call them \emph{semi-supervised 
eigenvectors}---that inherit many of the ``nice'' properties that the 
leading nontrivial global eigenvectors of a graph Laplacian have---for 
example, that capture ``slowly varying'' modes in the data, that are 
fairly-efficiently computable, that can be used for common machine learning 
and data analysis tasks such as kernel-based and semi-supervised learning, 
etc.---so that we can perform what we will call \emph{locally-biased machine 
learning} in a principled manner.

\subsection{Locally-biased Learning}

By \emph{locally-biased machine learning}, we mean that we have a data set, 
\emph{e.g.}, represented as a graph, and that we have information, 
\emph{e.g.}, given in a semi-supervised manner, that certain ``regions'' of 
the data graph are of particular interest.
In this case, we may want to focus predominantly on those regions and 
perform data analysis and machine learning, \emph{e.g.}, classification, 
clustering, ranking, etc., that is ``biased toward'' those pre-specified 
regions.
Examples of this include the~following.
\begin{itemize}
\item
\emph{Locally-biased community identification.}
In social and information network analysis, one might have a small ``seed
set'' of nodes that belong to a cluster or community of 
interest~\cite{andersen06seed,LLDM08_communities_CONF}; 
in this case, one might want to perform link or edge prediction, or one 
might want to ``refine'' the seed set in order to find other nearby members.
\item
\emph{Locally-biased image segmentation.}
In computer vision, one might have a large corpus of images along with a 
``ground truth'' set of pixels as provided by a face detection 
algorithm~\cite{EOK07,MOV12-JMLR,MVM11};
in this case, one might want to segment entire heads from the background for 
all the images in the corpus in an automated manner.
\item
\emph{Locally-biased neural connectivity analysis.}
In functional magnetic resonance imaging applications, one might have small 
sets of neurons that ``fire'' in response to some external experimental 
stimulus~\cite{NPDH06}; 
in this case, one might want to analyze the subsequent temporal dynamics of 
stimulation of neurons that are ``nearby,'' either in terms of connectivity 
topology or functional response, members of the original set.
\end{itemize}
In each of these examples, the data are modeled by a graph---which is either
``given'' from the application domain or is ``constructed'' from feature 
vectors obtained from the application domain---and one has information that
can be viewed as semi-supervised in the sense that it consists of 
exogeneously-specified ``labels'' for the nodes of the graph.
In addition, there are typically a relatively-small number of labels and one
is interested in obtaining insight about the data graph nearby those labels.

These examples present considerable challenges for standard global spectral 
techniques and other traditional eigenvector-based methods.
(Such eigenvector-based methods have received attention in a wide range of 
machine learning and data analysis applications in recent years.
They have been useful, for example, 
in non-linear dimensionality reduction~\cite{BN03,CLLMNWZ05a};
in kernel-based machine learning~\cite{SS01-book};
in Nystr\"{o}m-based learning methods~\cite{WS01,TalRos10}; 
spectral partitioning~\cite{pothen90partition,ShiMalik00_NCut,NJW01_spectral}, 
and so on.)
At root, the reason is that eigenvectors are inherently global quantities, 
thus limiting their applicability in situations where one is interested in 
very local properties of the data.  
That is, very local information can be ``washed out'' and essentially 
invisible to these globally-optimal vectors.
For example, a sparse cut in a graph may be poorly correlated with the 
second eigenvector and thus invisible to a method based only on eigenvector 
analysis. 
Similarly, if one has semi-supervised information about a specific target 
region in the graph, as in the above examples, one might be interested in 
finding clusters near this prespecified local region in a semi-supervised 
manner; but this local region might be essentially invisible to a method 
that uses only global eigenvectors.
Finally, one might be interested in using kernel-based methods to find 
``local correlations'' or to regularize with respect to a ``local 
dimensionality'' in the data, but this local information might be destroyed
in the process of constructing kernels with traditional kernel-based methods.

\subsection{Semi-supervised Eigenvectors}

In this paper, we provide a methodology to construct what we will call
\emph{semi-supervised eigenvectors} of a graph Laplacian; and we illustrate 
how these locally-biased eigenvectors (locally-biased in the sense that they 
will be well-correlated with the input seed set of nodes or that most of 
their ``mass'' will be on nodes that are ``near'' that seed set) inherit 
many of the properties that make the leading nontrivial global eigenvectors 
of the graph Laplacian so useful in applications.
In order to make this method useful, there should ideally be a ``knob'' that 
allows us to interpolate between very local and the usual global 
eigenvectors, depending on the application at hand; 
we should be able to use these vectors in common machine learning pipelines 
to perform common machine learning tasks; and 
the intuitions that make the leading $k$ nontrivial global eigenvectors of 
the graph Laplacian useful should, to the extent possible, extend to the
locally-biased setting.
To achieve this, we will formulate an optimization ansatz that is a variant 
of the usual global spectral graph partitioning optimization problem that 
includes a natural locality constraint as well as an orthogonality 
constraint, and we will iteratively solve this problem.

In more detail, assume that we are given as input a (possibly weighted) data 
graph $G=(V,E)$, an indicator vector $s$ of a small ``seed set'' of nodes, a 
\emph{correlation parameter} $\kappa \in [0,1]$, and a positive integer $k$.
Then, informally, we would like to construct $k$ vectors that satisfy the 
following bicriteria:
first, each of these $k$ vectors is well-correlated with the input seed set;
and 
second, those $k$ vectors describe successively-orthogonalized directions of
maximum variance, in a manner analogous to the leading $k$ nontrivial global 
eigenvectors of the graph Laplacian.
(We emphasize that the seed set $s$ of nodes, the integer $k$, and the 
correlation parameter $\kappa$ are part of the input; and thus they should 
be thought of as being available in a semi-supervised manner.)
Somewhat more formally, our main algorithm, Algorithm~\ref{alg_new_1} in 
Section~\ref{sxn:main-alg}, returns as output $k$ semi-supervised 
eigenvectors; each of these is the solution to an optimization problem of
the form of \textsc{Generalized LocalSpectral} in Figure~\ref{fig:objective},
and thus each ``captures'' (say) $\kappa/k$ of the correlation with the seed 
set.
Our main theoretical result, described in Section~\ref{sxn:main-alg}, states 
that these vectors define successively-orthogonalized directions of maximum
variance, conditioned on being $\kappa/k$-well-correlated with an input seed 
set $s$; and that each of these $k$ semi-supervised eigenvectors can be 
computed quickly as the solution to a system of linear equations.
To extend the practical applicability of this basic result, we will in 
Section~\ref{sxn:main-alg-extensions} describe several heuristic extensions 
of our basic framework that will make it easier to apply the method of 
semi-supervised eigenvectors at larger size scales.
These extensions involve using the so-called Nystr{\"{o}}m method, 
computing one locally-biased eigenvector and iteratively ``peelling off'' 
successive components of interest, as well as performing random walks that 
are ``local'' in a stronger sense than our basic method considers.
%XXX.  MWM TO FIX AFTER WE HAVE THAT SECTION FINALIZED.

Finally, in order to illustrate how the method of semi-supervised 
eigenvectors performs in practice, we also provide a detailed empirical 
evaluation using a wide range of both small-scale as well as larger-scale 
data.
In particular, we consider two small data sets, one consisting of graphs 
generated from a popular network generation model and the other data drawn 
from Congressional roll call voting patterns, in order to illustrate the 
basic method; we consider graphs constructed from the widely-studied MNIST 
digit data, in order to illustrate how the method performs on a data set 
that is widely-known in the machine learning community; and we consider 
two larger data sets, one consisting of Internet graphs and the other 
consisting of graphs constructed from fMRI medical imaging, in order to 
illustrate how the method performs in larger-scale applications.
%XXX.  MWM TO FIX AFTER WE HAVE THAT SECTION FINALIZED.

\subsection{Related Work} 

From a technical perspective, the work most closely related to ours is the 
recently-developed ``local spectral method'' of 
Mahoney \emph{et al.}~\cite{MOV12-JMLR}.
The original algorithm of Mahoney \emph{et al.}~\cite{MOV12-JMLR} introduced 
a methodology to construct a locally-biased version of the 
\emph{leading} nontrivial eigenvector of a graph Laplacian and also showed 
(theoretically and empirically in a social network analysis application) 
that that the resulting vector could be used to partition a graph in a 
locally-biased manner.
From this perspective, our extension incorporates a natural orthogonality 
constraint that successive vectors need to be orthogonal to previous vectors.
Subsequent to the work of \cite{MOV12-JMLR}, \cite{MVM11} applied the 
algorithm of \cite{MOV12-JMLR} to to the problem of finding locally-biased 
cuts in a computer vision application.
Similar ideas have also been applied somewhat differently.
For example, \cite{andersen06seed} use locally-biased random walks, 
\emph{e.g.}, short random walks starting from a small seed set of nodes, to 
find clusters and communities in graphs arising in Internet advertising 
applications; 
\cite{LLDM08_communities_CONF} used locally-biased random walks to 
characterize the local and global clustering structure of a wide range of 
social and information networks; and
\cite{Joa03} developed the Spectral Graph Transducer, which performs 
transductive learning via spectral graph partitioning.

The objectives in both~\cite{Joa03} and~\cite{MOV12-JMLR} are constrained 
eigenvalue problems that can be solved by finding the smallest eigenvalue 
of an asymmetric generalized eigenvalue problem; but in practice this 
procedure can be highly unstable~\cite{Gander1989}. 
The algorithm of~\cite{Joa03} reduces the instabilities by performing all 
calculations in a subspace spanned by the $d$ smallest eigenvectors of the 
graph Laplacian; 
whereas the algorithm of~\cite{MOV12-JMLR} performs a binary search, 
exploiting the monotonic relationship between a control parameter and the 
corresponding Lagrange multiplier.
The form of our optimization problem also has similarities to other work in 
computer vision applications: \emph{e.g.},~\cite{YS01} and~\cite{EOK07} find 
good conductance clusters subject to a set of linear constraints.

In parallel, \cite{BN03} and a large body of subsequent work including 
\cite{CLLMNWZ05a} used (the usual global) eigenvectors of the graph 
Laplacian to perform dimensionality reduction and data representation, in 
unsupervised and semi-supervised settings~\cite{TSL00,RS00,ZBLWS04}.
Typically, these methods construct some sort of local neighborhood 
structure around each data point, and they optimize some sort of global 
objective function to go ``from local to global''~\cite{SWHSL06}.
In some cases, these methods can be understood in terms of data drawn from 
an hypothesized manifold~\cite{BN08}, and more generally they have proven 
useful for denoising and learning in semi-supervised 
settings~\cite{BN04,BNS06}.
These methods are based on spectral graph theory~\cite{Chung:1997}; 
and thus many of these methods have a natural interpretation in terms of 
diffusions and kernel-based 
learning~\cite{SS01-book,KL02,SJ02,chapelle2002,HLMS04}.
These interpretations are important for the usefulness of these global 
eigenvector methods in a wide range of applications.
As we will see, many (but not all) of these interpretations can be ported 
to the ``local'' setting, an observation that was made previously in a 
different context~\cite{CM11_TR}.

Many of these diffusion-based spectral methods also have a natural 
interpretation in terms of spectral ranking~\cite{Vig09_TR}.
``Topic sensitive'' and ``personalized'' versions of these spectral ranking 
methods have also been studied~\cite{haveliwala03_topicpr,JW03}; and these
were the motivation for diffusion-based methods to find locally-biased
clusters in large graphs~\cite{Spielman:2004,andersen06local,MOV12-JMLR}.
Our optimization ansatz is a generalization of the linear equation 
formulation of the PageRank procedure~\cite{PBMW99,MOV12-JMLR,Vig09_TR}; and
its solution involves Laplacian-based linear equation solving, which has 
been suggested as a primitive is of more general interest in large-scale 
data analysis~\cite{Teng10}.

\subsection{Outline of the Paper}

In the next section, Section~\ref{sxn:background}, we will provide notation 
and some background and discuss related work.
Then, in Section~\ref{sxn:main-alg} we will present our main algorithm and
our main theoretical result justifying the algorithm; and in 
Section~\ref{sxn:main-alg-extensions} we will present several extensions of 
our basic method that will help for certain larger-scale applications of the 
method of semi-supervised eigenvectors.
In Section~\ref{sxn:empirical}, we present an empirical analysis, including
both toy data to illustrate how the ``knobs'' of our method work, as well as
applications to realistic machine learning and data analysis problems; 
and in Section~\ref{sxn:conclusion} we present a brief discussion and 
conclustion.

\section{Background and Notation}
\label{sxn:background}

\newcommand{\vol}{\mathrm{vol}}
\newcommand{\defeq}{\stackrel{\textup{def}}{=}}

Let $G=(V,E,w)$ be a connected undirected graph with $n=|V|$ vertices and
$m=|E|$ edges, in which edge $\{i,j\}$ has weight $w_{ij}.$
For a set of vertices $S \subseteq V$ in a graph, the \emph{volume of $S$}
is $\vol(S) \defeq \sum_{i \in S}d_i$, in which case the \emph{volume of the
graph $G$} is $\vol(G) \defeq \vol(V)=2m$.
In the following, $A_G \in \mathbb{R}^{V \times V}$ will denote the
adjacency matrix of $G$, while $D_G \in \mathbb{R}^{V \times V}$ will denote
the diagonal degree matrix of $G$, \emph{i.e.},
$D_G(i,i)=d_i = \sum_{\{i,j\} \in E} w_{ij}$, the weighted degree of vertex $i$.
The Laplacian of $G$ is defined as $L_G \defeq D_G-A_G$.
(This is also called the combinatorial Laplacian, in which case the
normalized Laplacian of $G$ is $\mathcal{L}_G\defeq D_G^{-1/2}L_GD_G^{-1/2}$.)

The Laplacian is the symmetric matrix having quadratic form
$x^T L_G x = \sum_{ij \in E} w_{ij} (x_i - x_j)^2$, for $x \in  \mathbb{R}^V$.
This implies that $L_G$ is positive semidefinite and that the all-one vector
$1 \in  \mathbb{R}^V$ is the eigenvector corresponding to the smallest
eigenvalue $0$.
The generalized eigenvalues of $ L_G x = \lambda_i D_G x$ are 
$0=\lambda_1 < \lambda_2 \leq \cdots \leq \lambda_N$.
We will use $v_2$ to denote smallest non-trivial eigenvector, \emph{i.e.},
the eigenvector corresponding to $\lambda_2$; $v_3$ to denote the next
eigenvector; and so on.
%
%% NO NEED %% For a symmetric matrix $A$, we will use $A \succeq 0$ to denote that it is
%% NO NEED %% positive semi-definite; and on occasion we 
We 
will overload notation to 
use 
$\lambda_2(A)$ to denote the smallest non-zero generalized eigenvalue of 
$A$ with respect to $D_G$. 
%% NO NEED %% Moreover, given two symmetric matrices $A$ and $B$, the expression
%% NO NEED %% $A \succeq B$ will mean $A - B \succeq 0$.
%% NO NEED %% Further, for two $n\times n$ matrices $A$ and $B$, we let $A \circ B$
%% NO NEED %% denote ${\rm Tr}\:(A^TB)$.
Finally, for a matrix $A,$ let $A^{+}$ denote its (uniquely defined)
Moore-Penrose pseudoinverse.
%% NO NEED %% 
For two vectors $x,y \in \mathbb{R}^{n}$, and the degree matrix $D_G$ for a
graph $G$, we define the \emph{degree-weighted inner product} as
$ x^T D_G y \defeq  \sum_{i=1}^{n} x_{i}y_{i}d_{i}$.
In particular, if a vector $x$ has unit norm, then $x^T D_G x = 1$.
Given a subset of vertices $S \subseteq V$, we denote by $1_S$ the indicator
vector of $S$ in $\mathbb{R}^V$ and by $1$ the vector in $\mathbb{R}^V$
having all entries set equal to $1$.
%% NO NEED %% We consider the following definition of the complete graph $K_n$ on the
%% NO NEED %% vertex set $V$: $ A_{K_n} \defeq  \frac{1}{\vol(G)} D_G 11^T D_G$.
%% NO NEED %% Note that this is not the standard complete graph, but a weighted version of
%% NO NEED %% it, where the weights depend on $D_G$.
%% NO NEED %% With this scaling we have $D_{K_n} = D_G$.
%% NO NEED %% Hence, the Laplacian of the complete graph defined in this manner becomes
%% NO NEED %% $ L_{K_n} = D_G - \frac{1}{\vol(G)} D_G 11^T D_G $.

\section{Optimization Approach to Semi-supervised Eigenvectors}
\label{sxn:main-alg}

In this section, we provide our main technical results: a motivation and 
statement of our optimization ansatz; our main algorithm for computing 
semi-supervised eigenvectors; and an analysis that our algorithm computes 
solutions of our optimization ansatz.

\subsection{Motivation for the Program}

Recall the optimization perspective on how one computes the leading 
nontrivial global eigenvectors of the normalized Laplacian $\mathcal{L}_G$ 
or, equivalently, of the leading nontrivial generalized eigenvectors of 
$L_G$.
The first nontrivial eigenvector $v_2$ is the solution to the problem
\textsc{GlobalSpectral} that is presented on the left of 
Figure~\ref{fig:objective}.
Equivalently, although \textsc{GlobalSpectral} is a non-convex optimization 
problem, strong duality holds for it and it's solution may be computed as 
$v_2$, the leading nontrivial generalized eigenvector of $L_G$.
(In this case, the value of the objective is $\lambda_2$, and global 
spectral partitioning involves then doing a ``sweep cut'' over this vector 
and appealing to Cheeger's inequality.)
The next eigenvector $v_3$ is the solution to \textsc{GlobalSpectral},
augmented with the constraint that $x^TD_Gv_2=0$; and in general the 
$t^{th}$ generalized eigenvector of $L_G$ is the solution to 
\textsc{GlobalSpectral}, augmented with the constraints that $x^TD_Gv_i=0$, 
for $i\in\{2,\ldots,t-1\}$.
Clearly, this set of constraints and the constraint $x^TD_G1=0$ can be 
written as $x^TD_GX=0$, where $0$ is a $(t-1)$-dimensional all-zeros vector, 
and where $X$ is an $n \times (t-1)$ orthogonal matrix whose $i^{th}$ column 
equals $v_i$ (where $v_1=1$, the all-ones vector, is the first column of $X$).

%\begin{figure*}[hbt!]
\begin{figure*}[t]
\begin{minipage}[t]{0.3\linewidth}
\centering
\textsc{GlobalSpectral}\\
\;
\begin{align*}
\text{minimize} \quad & x^T L_G x \\
\text{s.t}\quad & x^T D_G x = 1 \\
& x^T D_G 1 = 0 
\end{align*}
\end{minipage}
\begin{minipage}[t]{0.3\linewidth}
\centering
\textsc{LocalSpectral}\\
\;
\begin{align*}
\text{minimize} \quad & x^T L_G x \\
\text{s.t}\quad & x^T D_G x = 1 \\
& x^T D_G 1 = 0 \\
& x^T D_G s \geq \sqrt{\kappa}
\end{align*}
\end{minipage}
\hspace{0.5cm}
\begin{minipage}[t]{0.3\linewidth}
\centering
\textsc{Generalized LocalSpectral}
\begin{align*}
\text{minimize} \quad & x^T L_G x \\
\text{s.t}\quad & x^T D_G x = 1 \\
& x^T D_G X = 0 \\
& x^T D_G s \geq \sqrt{\kappa}
\end{align*}
\end{minipage}
\caption{%
Left: The usual \textsc{GlobalSpectral} partitioning optimization problem; 
the vector achieving the optimal solution is $v_2$, the leading nontrivial 
generalized eigenvector of $L_G$ with respect to $D_G$.
Middle: The \textsc{LocalSpectral} optimization problem, which was 
originally introduced in~\cite{MOV12-JMLR}; for $\kappa=0$, this coincides 
with the usual global spectral objective, while for $\kappa > 0$, this 
produces solutions that are biased toward the seed vector $s$.
Right: The \textsc{Generalized LocalSpectral} optimization problem we 
introduce that includes both the locality constraint and a more general 
orthogonality constraint.
Our main algorithm for computing semi-supervised eigenvectors will 
iteratively compute the solution to \textsc{Generalized LocalSpectral} for a 
sequence of $X$ matrices.
In all three cases, the optimization variable is $x \in \mathbb{R}^{n}$.
}
\label{fig:objective}
\end{figure*}

Also presented in Figure~\ref{fig:objective} is \textsc{LocalSpectral}, 
which includes a constraint that the solution be well-correlated with an 
input seed set.
This \textsc{LocalSpectral} optimization problem was introduced 
in~\cite{MOV12-JMLR}, where it was shown that the solution to 
\textsc{LocalSpectral} may be interpreted as a locally-biased version of the 
second eigenvector of the Laplacian.%
\footnote{In \cite{MOV12-JMLR}, the locality constraint was actually a 
quadratic constraint, and thus a somewhat involved analysis was required.
In retrospect, given the form of the solution, and in light of the 
discussion below, it is clear that the quadratic part was not ``real,'' and 
thus we present this simpler form of \textsc{LocalSpectral} here.
This should make the connections with our \textsc{Generalized LocalSpectral} 
objective more immediate.}
In particular, although \textsc{LocalSpectral} is not convex, it's solution 
can be computed efficiently as the solution to a set of linear equations 
that generalize the popular Personalized PageRank procedure; 
in addition, by performing a sweep cut and appealing to a variant of 
Cheeger's inequality, this locally-biased eigenvector can be used to perform 
locally-biased spectral graph partitioning~\cite{MOV12-JMLR}.

\subsection{Our Main Algorithm}

We will formulate the problem of computing semi-supervised vectors in terms 
of a primitive optimization problem of independent interest.
Consider the \textsc{Generalized LocalSpectral} optimization problem, as 
shown in Figure~\ref{fig:objective}.
For this problem, we are given a graph $G=(V,E)$, with associated Laplacian 
matrix $L_G$ and diagonal degree matrix $D_G$; 
an indicator vector $s$ of a small ``seed set'' of nodes; 
a \emph{correlation parameter} $\kappa \in [0,1]$; and
an $n \times \nu$ constraint matrix $X$ that may be assumed to be an 
orthogonal matrix.
We will assume (without loss of generality) that $s$ is properly normalized
and orthogonalized so that $s^T D_{G} s =1$ and $s^T D_{G} 1 =0$.
While $s$ can be a general unit vector orthogonal to $1$, it may be helpful
to think of $s$ as the indicator vector of one or more vertices in $V$,
corresponding to the target region of the graph.

In words, the problem \textsc{Generalized LocalSpectral} asks us to find a 
vector $x \in \mathbb{R}^{n}$ that minimizes the variance $x^TL_Gx$ subject 
to several constraints: that $x$ is unit length; that $x$ is orthogonal to 
the span of $X$; and that $x$ is $\sqrt{\kappa}$-well-correlated with 
the input seed set vector $s$.
In our application of \textsc{Generalized LocalSpectral} to the computation 
of semi-supervised eigenvectors, we will iteratively compute the solution 
to \textsc{Generalized LocalSpectral}, updating $X$ to contain the 
already-computed semi-supervised eigenvectors.
That is, to compute the first semi-supervised eigenvector, we let $X=1$, 
\emph{i.e.}, the $n$-dimensional all-ones vector, which is the trivial 
eigenvector $L_G$, in which case $X$ is an $n \times 1$ matrix; and
to compute each subsequent semi-supervised eigenvector, we let the columns 
of $X$ consist of $1$ and the other semi-supervised eigenvectors found in 
each of the previous iterations.

To show that \textsc{Generalized LocalSpectral} is efficiently-solvable, 
note that it is a quadratic program with only one quadratic constraint and
one linear equality constraint.%
\footnote{Alternatively, note that it is an example of an constrained 
eigenvalue problem~\cite{Gander1989}.
We thank the numerous individuals who pointed this out to us subsequent to 
our dissemination of the original version of this paper.}
In order to remove the equality constraint, which will simplify the problem, 
let's change variables by defining the $n \times (n-\nu)$ matrix $F$ as
$$
\{ x: X^TD_Gx = 0 \} = \{ x: x=F\hat x \} .
$$
That is, $F$ is a span for the null space of $X^T$; and we will take $F$ to 
be an orthogonal matrix.
In particular, this implies that $F^TF$ is an $(n-\nu)\times(n-\nu)$ 
Identity and $FF^T$ is an $n \times n$ Projection.
Then, with respect to the $\hat x$ variable, \textsc{Generalized LocalSpectral} 
becomes 
\begin{equation}
\begin{aligned}
  & \underset{y}{\text{minimize}}
  & & \hat x^T F^T L_G F y \\ 
  & \text{subject to}
  & & \hat x^T F^T D_G F \hat x = 1, \\
  & & & \hat x^T F^T D_G s \geq \sqrt{\kappa}  .
\end{aligned}
\label{eqn:toke-vp-2}
\end{equation}
In terms of the variable $x$, the solution to this optimization problem is 
of the form 
\begin{eqnarray}
\nonumber
x^{*} &=& c F \left( F^T \left( L_G - \gamma D_G \right) F \right)^{+} F^T D_Gs  \\
      &=& c \left( FF^T \left( L_G - \gamma D_G \right) FF^T \right)^{+} D_Gs   ,
\label{eqn:xstar}
\end{eqnarray}
for a normalization constant $c \in (0,\infty)$ and for some $\gamma$ that 
depends on $\sqrt{\kappa}$.
The second line follows from the first since $F$ is an $n \times (n-\nu)$ 
orthogonal matrix.
This so-called ``S-proceudre'' is described in greater detail in 
Chapter 5 and Appendix B of~\cite{Boyd04}.
The significance of this is that, although it is a non-convex optimization 
problem, the \textsc{Generalized LocalSpectral} problem can be solved by 
solving a linear equation, in the form given in Eqn.~(\ref{eqn:xstar}).

Returning to our problem of computing semi-supervised eigenvectors, recall 
that, in addition to the input for the \textsc{Generalized LocalSpectral} 
problem, we need to specify a positive integer $k$ that indicates the number 
of vectors to be computed.
In the simplest case, we would assume that we would like the correlation to 
be ``evenly distributed'' across all $k$ vectors, in which case we will 
require that each vector is $\sqrt{\kappa/k}$-well-correlated with the input 
seed set vector $s$; but this assumption can easily be relaxed, and thus  
Algorithm~\ref{alg_new_1} is formulated more generally as taking a 
$k$-dimensional vector $\kappa = [\kappa_1,\ldots, \kappa_k]^T$ of 
correlation coefficients as input.

To compute the first semi-supervised eigenvector, we will let $X=1$, the 
all-ones vector, in which case the first nontrivial semi-supervised 
eigenvector is
\begin{equation}
x_1^{*} = c \left( L_G - \gamma_1 D_G \right)^{+} D_Gs   ,
\label{eqn:first-step}
\end{equation}
where $\gamma_1$ is chosen to saturate the part of the correlation constraint
along the first direction. 
(Note that the projections $FF^T$ from Eqn.~(\ref{eqn:xstar}) are not 
present in Eqn.~(\ref{eqn:first-step}) since by design $s^TD_G1=0$.)
That is, to find the correct setting of $\gamma_1$, it suffices to perform a 
binary search over the possible values of $\gamma_1$ in the interval 
$(-\vol(G), \lambda_2(G))$ until the correlation constraint is satisfied, 
that is, until $(s^T D_{G} x_1)^2$ is sufficiently close to $\kappa_1$.
%(This first step is effectively simply calling the algorithm 
%of~\cite{MOV12-JMLR}.) 

To compute subsequent semi-supervised eigenvectors, \emph{i.e.}, at steps 
$t = 2,\ldots, k$ if one ultimately wants a total of $k$ semi-supervised 
eigenvectors, then one lets $X$ be the $n \times t$ matrix of the form
\begin{align}
X = [1 , x^{*}_1 , \ldots , x^{*}_{t-1} ] ,
\end{align}
where $ x^{*}_1 , \ldots ,  x^{*}_{t-1}$ are successive semi-supervised eigenvectors; and the 
%To compute subsequent semi-supervised eigenvectors, \emph{i.e.}, at steps 
%$t = 2,\ldots, k$ if one ultimately wants a total of $k$ semi-supervised 
%eigenvectors, then one lets $X$ be the $n \times t$ matrix with first 
%column equal to $1$ and with $j^{th}$ column, for $i=2,\ldots,t-1$, equal 
%to $x^{*}_{j-1}$ (where we emphasize that $x^{*}_{j-1}$ is a vector not an 
%element of a vector).
%That is, $X$ is of the form
%$$
%X = [ 1 , x^{*}_1 , \ldots , x^{*}_{t-1} ] ,
%$$
%where $x^{*}_i$ are successive semi-supervised eigenvectors; and the 
projection matrix $FF^T$ is of the form 
$$
FF^T = I - D_GX(X^T D_G D_G X)^{-1}X^T D_G,
$$
due to the the degree-weighted inner norm.

Then, by Eqn.~(\ref{eqn:xstar}), the $t^{th}$ semi-supervised 
eigenvector takes the form
\begin{eqnarray}
x_{t}^{*} 
%   &=& c \left( (I-\Pi) \left( L_G - \gamma D_G \right) (I-\Pi) \right)^{+} D_Gs \\
%   &=& c \left(  \left( L_G - \gamma D_G \right) 
%           - \Pi \left( L_G - \gamma D_G \right) \Pi \right)^{+} D_Gs    \\
 %  &=& c \left( (I-\Pi) P_{\gamma_t} (I-\Pi) \right)^{+} D_Gs  . %%CHOP%% \\
    &=& c \left( FF^T (L_G-\gamma_tD_G) FF^T \right)^{+} D_Gs  . \notag %%CHOP%% \\
%%CHOP%%    &=& c \left(  P_{\gamma} - \Pi P_{\gamma} \Pi \right)^{+} D_Gs    
%%CHOP%% \label{eqn:intermed10}
\end{eqnarray}

\begin{algorithm}                      % enter the algorithm environment
\caption{Main algorithm to compute semi-supervised eigenvectors}          % give the algorithm a caption
\label{alg_new_1}                           % and a label for \ref{} commands later in the document
\begin{algorithmic}[1]                    % enter the algorithmic environment
\REQUIRE $L_G, D_G, s, \kappa = [\kappa_1,\ldots, \kappa_k]^T,\epsilon$  
such that $s^T D_G 1 = 0$, $s^T D_G s = 1$, $\kappa^T 1 \leq 1$
%% MWM: I don't know why I had to do that, but it wouldn't compile otherwise.
%% \INPUT $\bm L_G, \bm D_G, \bm s, \bm \kappa = [\kappa_1,\ldots, \kappa_k]^\top,\epsilon$
%% \REQUIRE $\bm s^\top \bm D_G \bm 1 = 0, \bm s^\top \bm D_G \bm s = 1, \bm \kappa^\top \bm 1 \leq 1$
\STATE $X=[1]$ 
\FOR{$t= 1$ to $k$}
\STATE $FF^T \leftarrow I - D_GX(X^T D_G D_G X)^{-1}X^T D_G$
\STATE $\top \leftarrow \lambda_2$ where $FF^T L_G FF^T v_2 = \lambda_2 FF^T D_G FF^T v_2$
\STATE $\bot \leftarrow -\text{vol}(G)$
\REPEAT  
\STATE $\gamma_t \leftarrow (\bot+\top)/2$ (Binary search over $\gamma_t$)
\STATE $x_t \leftarrow (FF^T( L_G - \gamma_t D_G) FF^T)^{+} F F^T D_G s$
\STATE Normalize $x_t$ such that $x_t^T D_G x_t=1$
\STATE \textbf{if} $( x_t^T D_G s)^2>\kappa_t$ \textbf{then} $\bot \leftarrow \gamma_t$ \textbf{else}  $\top \leftarrow \gamma_t$ \textbf{end if}
%\IF{$( x_t^T D_G s)^2>\kappa_t$}
%\STATE $\bot \leftarrow \gamma_t$
%\ELSE
%\STATE $\top \leftarrow \gamma_t$
%\ENDIF
\UNTIL{$\|( x_t^T D_G s)^2-\kappa_t\| \leq \epsilon$ \textbf{or} $\|(\bot+\top)/2 -\gamma_t \| \leq \epsilon$}
%% MWM: don't know why this didn't work:
%% \UNTIL{$\|(\bm x_i^\top \bm D_G \bm s)^2-\kappa_i\| \leq \epsilon$ \OR $\|(\bot+\top)/2 -\gamma_i \| \leq \epsilon$}
\STATE Augment $X$ with $x_t^*$ by letting $X = [X,x_t^*]$.
%\STATE Insert $x_i$ into $X$
\ENDFOR
\end{algorithmic}
\end{algorithm}

% (Note also that there should not be any issue about requiring that successive 
% $\gamma_i$ are different, basically since they will be regularizing with 
% respect to being orthogonal to different spaces---the issue you had was that
% you did the computation in the original space and then projected stuff out.)
% Note also that might also be able to do tricks like the following: 
% precomute $P_{\gamma}^{+} Ds$ for a bunch of values of $\gamma$, so when 
% we search for the value that saturates $\kappa$ at each step, we only need 
% to do the finer steps of the iteration---there may also be other related 
% tricks we can do.

In more detail, Algorithm~\ref{alg_new_1} presents pseudo-code for our main 
algorithm for computing semi-supervised eigenvectors.
The algorithm takes as input a graph $G=(V,E)$, a seed set $s$ (which could 
be a general vector $s\in\mathbb{R}^{n}$, subject for simplicity to the
normalization constraints $s^T D_G 1 = 0$ and $s^T D_G s = 1$, but which is 
most easily thought of as an indicator vector for the local ``seed set'' of 
nodes), a number $k$ of vectors we want to compute, and a vector of locality 
parameters $(\kappa_1,\ldots,\kappa_k)$, where $\kappa_i \in [0,1]$ and 
$\sum_{i=1}^{k} \kappa_i = 1$ (where, in the simplest case, one could 
choose $\kappa_i = \kappa/k$, $\forall i$, for some $\kappa \in [0,1]$).
Several things should be noted about our implementation of our main 
algorithm.
First, as we will discuss in more detail below, we compute the projection 
matrix $FF^T$ only \emph{implicitly}.  
Second, a na\"{i}ve approach to Eqn.~(\ref{eqn:xstar}) does not immediately 
lead to an efficient solution, since $ D_G s$ will not be in the span of 
$( F F^T( L_G - \gamma D_G) F F^T)$, thus leading to a large residual. 
By changing variables so that $ x = F F^T y$, the solution becomes 
$$
x_t^{*} \propto FF^T (FF^T( L_G - \gamma_t D_G) FF^T)^{+} FF^T D_G s . 
$$
Since $FF^T$ is a projection matrix, this expression is equivalent to 
\begin{align}
x_t^{*} \propto \left (FF^T( L_G - \gamma_t D_G) FF^T \right)^{+} FF^T D_G s \label{eq:semisupeigs} .
\end{align}
Third, regarding the solution $x_i$, we exploit that 
$FF^T( L_G - \gamma_i D_G)FF^T$ is an SPSD matrix, and we apply the conjugate 
gradient method, rather than computing the explicit pseudoinverse. 
That is, in the implementation we never explicitly represent the dense 
matrix $FF^T$, but instead we treat it as an operator and we 
simply evaluate the result of applying a vector to it on either side. 
Fourth, we use that $\lambda_2$ can never decrease (here we refer to 
$\lambda_2$ as the smallest non-zero eigenvalue of the modified matrix), 
so we only recalculate the upper bound for the binary search when an 
iteration saturates without satisfying 
$\|( x_t^T D_G s)^2-\kappa_t\| \leq \epsilon$. Estimating the bound is critical for the semi-supervised eigenvectors to be able to interpolate all the way to the global eigenvectors of the graph, so in Section \ref{sec:bounding} we return to a discussion on efficient strategies for computing the leading nontrivial eigenvalue of $L_G$ projected down onto the space perpendicular to the previously computed solutions.

From this discussion, it should be clear that Algorithm~\ref{alg_new_1} 
solves the semi-supervised eigenvector problem by solving in an iterative 
manner optimization problems of the form of 
\textsc{Generalized LocalSpectral}; and
that the running time of Algorithm~\ref{alg_new_1} boils down to solving 
a sequence of linear equations.

\subsection{Discussion of Our Main Algorithm}\label{sec:discussion}

There is a natural ``regularization'' interpretation underlying our 
construction of semi-supervised eigenvectors.
To see this, recall that the first step of our algorithm can be computed as 
the solution of a set of linear equations 
\begin{equation}
x^{*} = c \left( L_G - \gamma D_G \right)^{+} D_Gs   ,
\label{eq:mahoney}
\end{equation}
for some normalization constant $c$ and some $\gamma$ that can be 
determined by a binary search over $(-\vol(G), \lambda_2(G))$; and that
subsequent steps compute the analogous quantity, subject to 
additional constraints that the solution be orthogonal to the 
previously-computed vectors.
The quantity $\left( L_G - \gamma D_G \right)^{+}$ can be interpreted as a 
``regularized'' version of the pseudoinverse of $L$, where 
$\gamma\in(-\infty,\lambda_2(G))$ serves as the regularization parameter.
This interpretation has recently been made precise:
\cite{MO11-implementing} show that running a PageRank computation---as well 
as running other diffusion-based procedures---\emph{exactly} optimizes a
regularized version of the \textsc{GlobalSpectral} (or 
\textsc{LocalSpectral}, depending on the input seed vector) problem; 
and \cite{PM11} provide a precise statistical framework justifying this.

The usual interpretation of PageRank involves ``random walkers'' who 
uniformly (or non-uniformly, in the case of Personalized PageRank) 
``teleport'' with a probability $\alpha\in(0,1)$.
As described in~\cite{MOV12-JMLR},  choosing $\alpha\in(0,1)$ corresponds to choosing 
$\gamma \in (-\infty,0)$.
By rearranging Eqn.~(\ref{eq:mahoney}) as 
\begin{eqnarray*}
x^{*} &=& c \left( (D_G-A_G) - \gamma D_G \right)^{+} D_Gs   \\
 &=& \frac{c}{1- \gamma} \left( D_G- \frac{1}{1- \gamma} A_G \right)^{+} D_Gs   \\
 &=& \frac{c}{1- \gamma} D_G^{-1} \left( I- \frac{1}{1- \gamma} A_G D_G^{-1}  \right)^{+} D_Gs,  
\end{eqnarray*}
we recognize $A_GD_G^{-1} $ as the standard random walk matrix, and it becomes immediate that the solution based on random walkers,
\begin{eqnarray*}
x^{*} = \frac{c}{1- \gamma} D_G^{-1} \left ( I+ \sum_{i=1}^\infty \left ( \frac{1}{1-\gamma} D_G^{-1}A_G \right)^i \right) D_G s  ,
\end{eqnarray*}
is divergent for $\gamma>0$.
Since $\gamma=\lambda_2(G)$ corresponds to no regularization and
$\gamma\rightarrow-\infty$ corresponds to heavy regularization, viewing this 
problem in terms of solving a linear equation is formally more powerful than 
viewing it in terms of random walkers.
That is, while all possible values of the regularization parameter---and in 
particular the (positive) value $\lambda_2(\cdot)$---are achievable 
algorithmically by solving a linear equation, only values in $(-\infty,0)$ 
are achievable by running a PageRank~diffusion.
In particular, if the optimal value of $\gamma$ that saturates the 
$\kappa$-dependent locality constraint is negative, then running the 
PageRank diffusion could find it; otherwise, the ``best'' one could do will 
still not saturate the locality constraint, in which case some of the 
intended correlation is ``unused.''

An important technical and practical point has to do with the precise manner 
in which the $i^{th}$ vector is well-correlated with the seed set $s$.
In our formulation, this is captured by a \emph{locality parameter} 
$\gamma_i$ that is chosen (via a binary search) to ``saturate'' the 
correlation condition, \emph{i.e.}, so that the $i^{th}$ vector is 
$\kappa/k$-well-correlated with the input seed set.
As a general rule, successive $\gamma_i$s need to be chosen that successive 
vectors are \emph{less} well-localized around the input seed set.
(Alternatively, depending on the application, one could choose this parameter 
so that successive $\gamma_i$s are equal; but this will involve 
``sacrificing'' some amount of the $\kappa/k$ correlation, which will lead 
to the last or last few eigenvectors being very poorly-correlated with the 
input seed set.
These tradeoffs will be described in more detail below.)
Informally, if $s$ is a seed set consisting of a small number of nodes that
are ``nearby'' each other, then to maintain a given amount of correlation, 
we must ``view'' the graph over larger and larger size scales as we compute
more and more semi-supervised eigenvectors.
More formally, we need to let the value of the regularization parameter 
$\gamma$ at the $i^{th}$ round, we call it $\gamma_i$, vary for each 
$i\in\{1,\ldots,k\}$.
That is, $\gamma_i$ is not pre-specified, but it is chosen via a binary 
search over the region $( -\vol(G) ,\lambda_2(\cdot))$, where 
$\lambda_2(\cdot)$ is the leading nontrivial eigenvalue of $L_G$ projected 
down onto the space perpendicular to the previously-computed vectors (which
is in general larger than $\lambda_2(G)$).
In this sense, our semi-supervised eigenvectors are both 
``locally-biased'', in a manner that depends on the input seed vector and
correlation parameter, and ``regularized'', in a manner that depends on the 
local graph structure.

 \begin{figure}[!hbt]
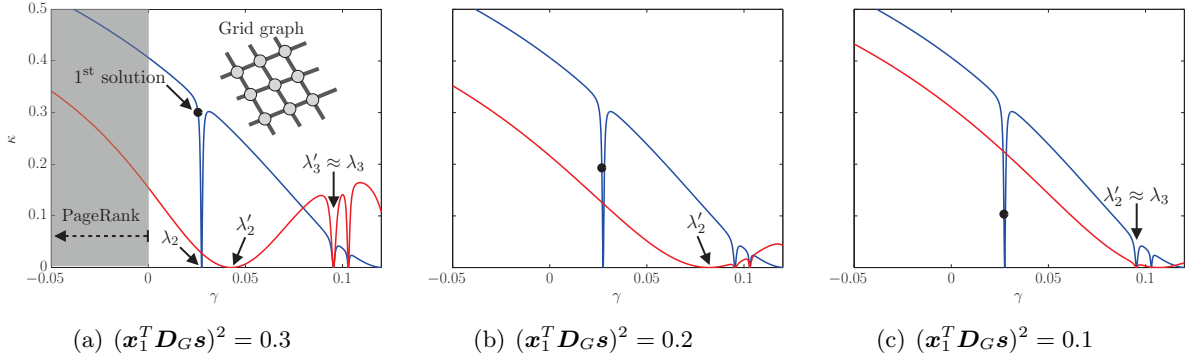

  \centering
\subfigure[$(\bm x_1^T \bm D_{G} \bm s)^2=0.3$]{
\begin{minipage}[b]{0.3\linewidth}
\centering
  \includegraphics[width=5.7cm]{\gs{sseigs_conceptual_a}}\\
\end{minipage}
\label{fig:sseigs_conceptualA}
}
\subfigure[$(\bm x_1^T \bm D_{G} \bm s)^2=0.2$]{
\begin{minipage}[b]{0.3\linewidth}
\centering
  \includegraphics[width=5.7cm]{\gs{sseigs_conceptual_b}}\\
\end{minipage}
\label{fig:sseigs_conceptualB}
}
\subfigure[$(\bm x_1^T \bm D_{G} \bm s)^2=0.1$]{
\begin{minipage}[b]{0.3\linewidth}
\centering
  \includegraphics[width=5.7cm]{\gs{sseigs_conceptual_c}}\\
\end{minipage}
\label{fig:sseigs_conceptualC}
}
  \caption{
  %Visualizes the interplay between the $\gamma$ parameter and the resulting correlation $\kappa$ that a semi-supervised eigenvector will have with the seed $\bm s$.  In Figure %\ref{fig:sseigs_conceptualA}-\ref{fig:sseigs_conceptualC} we vary the locality of the leading solution as marked by the black dot on the blue curve. This allows us to visualize the influence on the %relationship between $\gamma$ and $\kappa$ for the second solution, that is found by choosing $\gamma \in (-\vol(G), \lambda_2'(G))$. 
% Finally, \ref{fig:sseigs_conceptualA} also highlights the range in which Personalized PageRank can be used for computing solutions to semi-supervised eigenvectors.
Interplay between the $\gamma$ parameter and the correlation $\kappa$ that
a semi-supervised eigenvector has with a seed $\bm s$ on a two-dimensional
grid. In Figure \ref{fig:sseigs_conceptualA}-\ref{fig:sseigs_conceptualC},
we vary the locality parameter for the leading semi-supervised 
eigenvector, which in each case leads to a value of $\gamma$ which is 
marked by the black dot on the blue curve. 
This allows us to illustrate the influence on the relationship between 
$\gamma$ and $\kappa$ on the next semi-supervised eigenvector.
Figure~\ref{fig:sseigs_conceptualA} also highlights the range ($\gamma<0$)
in which Personalized PageRank can be used for computing solutions to 
semi-supervised eigenvectors.
}\label{fig:sseigs_conceptual}
\end{figure}

%To exemplify the previous discussion Figure \ref{fig:sseigs_conceptual} visualizes experiments on a 2-dimensional grid. The blue curve shows the correlation with a single seed node as a function of $\gamma$ for the leading semi-supervised eigenvector. This relationship is in general non-convex but monotonic for $\gamma \in (-\vol(G), \lambda_2(G))$. The red curve shows the decay for the second semi-supervised eigenvector, conditioned on being perpendicular with respect to the first solution as marked by the black dot, \emph{i.e.}, the second solution will be in the interval $\gamma \in (-\vol(G), \lambda_2'(G))$, and $\lambda_2 < \lambda_2' \leq \lambda_3$.  In Figure \ref{fig:sseigs_conceptualA} the leading solution is not ``too'' close to $\lambda_2$, corresponding to $\bm v_2$, and so  $\lambda_2'$ increases just slightly. In Figure \ref{fig:sseigs_conceptualB} we move the leading solution closer to $\lambda_2$ increasing the value of $\lambda_2'$, and finally in Figure \ref{fig:sseigs_conceptualC} the leading solution almost coincides with $\bm v_2$ resulting in $\lambda_2' \approx \lambda_3$, as required if we were to reproduce the entire sequence of global eigenvectors. 
%%Obviously, a good estimate of $\lambda_2$ is important to guarantee monotonicity in the search range.

To illustrate the previous discussion, Figure \ref{fig:sseigs_conceptual} 
considers the two-dimensional grid. 
In each subfigure, the blue curve shows the correlation with a single seed
node as a function of $\gamma$ for the leading semi-supervised 
eigenvector, and the black dot illustrates the value of $\gamma$ for three
different values of the locality parameter $\kappa$. 
This relationship between $\kappa$ and $\gamma$ is in general non-convex, 
but it is monotonic for $\gamma \in (-\vol(G), \lambda_2(G))$. 
The red curve in each subfigure shows the decay for the second 
semi-supervised eigenvector.
Recall that it is perpendicular to the first semi-supervised eigenvector,
that the decay is monotonic for 
$\gamma \in (-\vol(G), \lambda_2'(G))$, and that 
$\lambda_2 < \lambda_2' \leq \lambda_3$.  
In Figure~\ref{fig:sseigs_conceptualA}, the first semi-supervised 
eigenvector is not ``too'' close to $\lambda_2$, and so $\lambda_2'$ 
(\emph{i.e.}, the second eigenvalue of the next semi-supervised 
eigenvector) increases just slightly. 
In Figure~\ref{fig:sseigs_conceptualB}, we consider a locality parameter 
that leads to a value of $\gamma$ that is closer to $\lambda_2$, thereby 
increasing the value of $\lambda_2'$.
Finally, in Figure~\ref{fig:sseigs_conceptualC}, the locality parameter is
such that the leading semi-supervised eigenvector almost coincides with 
$\bm v_2$; this results in $\lambda_2' \approx \lambda_3$, as required if 
we were to compute the global eigenvectors.

\subsection{Bounding the Binary Search}\label{sec:bounding}
For the following derivations it is more convenient to consider the normalized graph Laplacian, in which case we define the first solution as
\begin{align}
y_1 = c\left( \mathcal{L}_G - \gamma_1 I \right)^{+} D_G^{1/2}s \label{eq:sol_y}
\end{align}
where $x_1^{*}=D_G^{-1/2}y_1$. This approach is convenient since the projection operator with null space defined by previous solutions can be expressed as $FF^T=I-YY^T$, assuming that $Y^TY=1$. That is, $Y$ is of the form
$$
Y = [ D_G^{1/2} , y^{*}_1 , \ldots , y^{*}_{t-1} ] ,
$$
where $y^{*}_i$ are successive solutions to Eqn. (\ref{eq:sol_y}).
In the following the type of projection operator will be implicit from the context, \emph{i.e.}, when working with the combinatorial graph Laplacian $FF^T = I - D_GX(X^T D_G D_G X)^{-1}X^T D_G$, whereas for the normalized graph Laplacian $FF^T=I-YY^T$. 

For the normalized graph Laplacian $\mathcal{L}_G$, the eigenvalues of $\mathcal{L}_G v = \lambda v$ equal the eigenvalues of the generalized eigenvalue problem $L_G v=\lambda D_G v$.
The binary search employed in Algorithm \ref{alg_new_1} uses a monotonic relationship between the $\gamma \in ( -\vol(G) ,\lambda_2(\cdot))$ parameter and the correlation with the seed $x^TD_Gs$, that can be deduced from the KKT-conditions \cite{MOV12-JMLR}. Note, that if the upper bound for the binary search $\top=\lambda_2(FF^T \mathcal{L}_GFF^T)$ is not determined with sufficient precision, the search will (if we underestimate $\top$) fail to satisfy the constraint, or (if we overestimate $\top$) fail to converge because the monotonic relationship no longer hold. 

By Lemma \ref{proof1} in Appendix \ref{app:proofs} it follows that $\lambda_2(FF^T \mathcal{L}_GFF^T)=\lambda_2(\mathcal{L}_G + \omega YY^T)$ when $\omega \rightarrow \infty$. Since the latter term is a sum of two PSD matrices, the value of the upper bound can only increase as stated by Lemma \ref{proof2} in Appendix \ref{app:proofs}. This is an important property, 
%because that ensures we only need to recalculate $\lambda_2$ when the binary search fails at reaching $( x^T D_G s)^2=\kappa$, \emph{i.e.}, 
because if we do not recalculate $\top$, the previous value is guaranteed to be an underestimate, meaning that the objective will remain convex. Thus, it may be more efficient to first recompute $\top$ when the binary search fails to satisfy $( x^T D_G s)^2=\kappa$, meaning that $\top$ must be recomputed to increase the search range.

%and \ref{proof2} in Appendix \ref{}    , we can deduce that the leading nontrivial eigenvalue of $\mathcal{L}_G$ can never decrease in magnitude, when projected down onto the space perpendicular to the previously computed solutions, \emph{i.e.}, $\lambda_2(FF^T \mathcal{L}_GFF^T) \geq \lambda_2(\mathcal{L}_G)$. Of course the matrix $FF^T \mathcal{L}_GFF^T$ will have multiple eigenvalues that are zero, but our definition of $\lambda_2$ refers to the smallest non-zero eigenvalue.
%Also note that this important relationship implies that we only need to recalculate $\lambda_2$ when the binary search fails at reaching $( x^T D_G s)^2=\kappa$. 

%A straightforward (but inefficient) approach for computing $\lambda_2$ is by the inverse iteration, that simply involves iterating 
%$$
%%v_2^{k+1} \propto (FF^T L_G FF^T - \lambda_2^{\text{est}} FF^T D_G FF^T)^{+}  FF^T D_G FF^T v_2^{k}  , 
%v_2^{k+1} \propto (FF^T \mathcal{L}_G FF^T - \lambda_2^{\text{est}} FF^T)^{+}  FF^T  v_2^{k}  , 
%$$
%and normalizing such that $( v_2^{k+1})^T v_2^{k+1}=1$. 
%The inverse iteration is guaranteed to converge to the eigenvalue closest to $\lambda_2^{\text{est}}$, however this immediate approach may lead to slow convergence if the estimate is bad. A faster (but non-robust) approach is by the cubically convergent Rayleigh quotient iteration, where $\lambda_2^{\text{est}}$ is updated in each iteration, but this algorithm will not guarantee to converge to the eigenvalue closest to the initial $\lambda_2^{\text{est}}$. 

We compute the value for the upper bound of the binary search by transforming the problem in such a way that we can determine the greatest eigenvalue of a new system (fast and robust), and from that, deduce the new value of $\top=\lambda_2(FF^T \mathcal{L}_GFF^T)$.  
We do so by expanding the expression as
\begin{align*}
FF^T\mathcal{L}_GFF^T&=FF^T\left (I-D_G^{-1/2}A_GD_G^{-1/2} \right)FF^T\\
&=FF^T-FF^TD_G^{-1/2}A_GD_G^{-1/2}FF^T\\
&=I-\left (FF^TD_G^{-1/2}A_GD_G^{-1/2}FF^T+YY^T \right ).
\end{align*}
Since all columns of $Y$ will be eigenvectors of $FF^T\mathcal{L}_GFF^T$ with zero eigenvalue, these will all be eigenvectors of $FF^TD_G^{-1/2}A_GD_G^{-1/2}FF^T+YY^T$ with eigenvalue $1$. Hence, the largest algebraic eigenvalue $\lambda_{\text{LA}}(FF^TD_G^{-1/2}A_GD_G^{-1/2}FF^T)$ can be used to compute the upper bound for the binary search as 
\begin{align}
\top=\lambda_2(FF^T \mathcal{L}_GFF^T)=1-\lambda_{\text{LA}}(FF^TD_G^{-1/2}A_GD_G^{-1/2}FF^T).\label{eq:upperbound}
\end{align}
The reason for not considering the largest magnitude eigenvalue, is that $A_G$ may be indefinite. Finally, with respect to our implementation we emphasize that $FF^T$ is used as a projection operator, and not represented explicitly.
%Our implementation leverages from this latter result, and as in Section \ref{sxn:main-alg} we emphasize that we treat $FF^T$ as a projection operator so that all matrices are kept sparse. 

\section{Extension of Our Main Algorithm and Implementation Details}
\label{sxn:main-alg-extensions}
%In this section, we present two variants of our main algorithm that are well-suited for large-scale data analysis.
%The first is: Kernel machines often exploit low-rank techniques in machine learning, such as the Nystr\"{o}m approximation where columns are selected uniformly 
%or nonuniformly. In Section \ref{sec:nystrom} we present a general solution to semi-supervised eigenvectors, where low-rankness is exploited for very efficient computation.
%
%The second variant is: If, rather than iteratively computing 
%locally-biased semi-supervised eigenvectors using the procedure described in 
%Algorithm~\ref{alg_new_1}, 
%we instead compute solutions to \textsc{LocalSpectral}  
%and then construct the semi-supervised eigenvectors by ``projecting away''
%pieces of these solutions. In some cases, these computations will be able to be replaced by random 
%walks, and this will have interesting algorithmic 
%and statistical implications of more general interest. In Section \ref{sec:reid} we exploit these relations and introduce a ``Push-peeling heuristic'', based on the efficient Push algorithm by~\cite{andersen06local}.
In this section, we present two variants of our main algorithm that are 
more well-suited for very large-scale applications; the first uses a 
column-based low-rank approximation, and the second uses random walk 
ideas.
In Section \ref{sec:nystrom}, we describe how to use the Nystr\"{o}m 
method, which constructs a low-rank approximation to the kernel matrix by 
sampling columns, to construct a general solution for semi-supervised 
eigenvectors, where the low-rankness is exploited for very efficient 
computation.
Then, in Section \ref{sec:reid}, we describe a ``Push-peeling heuristic,''
based on the efficient Push algorithm by~\cite{andersen06local}.
The basic idea is that if, rather than iteratively computing 
locally-biased semi-supervised eigenvectors using the procedure described 
in Algorithm~\ref{alg_new_1}, we instead compute solutions to 
\textsc{LocalSpectral} and then construct the semi-supervised eigenvectors
by ``projecting away'' pieces of these solutions, then we can take 
advantage of local random walks that have improved algorithmic properties.

%The first is:
%By the Nystr\"{o}m method we can select columns uniformly 
%or nonuniformly, and in the latter case the way to do it has to do with 
%eigenvector localization.
%Several of these extensions have the advantage that we can take advantage 
%of the well-known connections between random walks and other diffusion-based 
%procedures and eigenvectors; and this can be a powerful knob in practice, see Section \ref{sec:nystrom}.
%The second is:
%
%The third is:
%A strongly local version due to Reid, where we truncate things;
%this is of interest by itself as well as when coupled with the previous
%extension, see Section \ref{sec:reid}.

%We will also describe several implementation issues related to making this 
%scale to larger problems.
%XXX.  STRONGLY-LOCAL VIA REID'S ALGORITHM, BUT ONLY IF WE (1) HAVE DISCUSSION OF WHETHER WE GET ORTHOGONALITY FROM LOCAL OR GLOBAL PROPERTIES AND (2) WE HAVE EMPIRICAL RESULTS THAT LOOK GOOD COMPARED TO THE NOT-STRONGLY-LOCAL ALGORITHM.  
%(((
%Thus, our algorithm boils down to solving a bunch of linear 
%equations---several at a given step to fix each $\gamma_i$, and then at 
%every step iteratively.
%In some cases, these computations will be able to be replaced by random 
%walks, as we will discuss below, and this will have interesting algorithmic 
%and statistical implications of more general interest.
%)))

\subsection{A Nystr{\"{o}}m-based Low-rank Approach}\label{sec:nystrom}
%Low-rank matrix decompositions have recently gained popularity in scaling up kernels methods to huge amounts of data, under the assumption that the kernel matrix used for encoding the similarity between data samples can be well-approximated by a few eigenvectors due to a rapidly decaying spectrum \cite{Williams00theeffect}. 
%
%We are going to use the recently-popular Nystr{\"{o}}m method to speed up the computation of semi-supervised eigenvectors, and we do so by consider how a low-rank decomposition can be exploited to yield solutions to the \textsc{Generalized LocalSpectral} objective in Figure \ref{fig:objective}, where the running time largely depends on a matrix-vector product.
%
Here we describe the use of the recently-popular Nystr{\"{o}}m method to 
speed up the computation of semi-supervised eigenvectors.
We do so by considering how a low-rank decomposition can be exploited to 
yield solutions to the \textsc{Generalized LocalSpectral} objective in 
Figure~\ref{fig:objective}, where the running time largely depends on a 
matrix-vector product.
These methods are most appropriate when the kernel matrix is reasonably 
well-approximated by a low-rank matrix~\cite{drineas2005nystrom,gittens2012revisiting,Williams00theeffect}.

Given some low-rank approximation $\mathcal{L}_G \approx I- V \Lambda V^T$, we apply the Woodbury matrix identity, and we derive an explicit solution for the leading semi-supervised eigenvector 
\begin{align*}
y_1&\approx c\left ((1-\gamma) I - V \Lambda V^T\right)^+D_G^{1/2}s\\
&\approx c\left ( \frac{1}{1-\gamma}I + \frac{1}{(1-\gamma)^2}V \left(\Lambda^{-1} -\frac{1}{1-\gamma}I  \right)^{-1}V^T \right ) D_G^{1/2}s \\
&\approx\frac{c}{1-\gamma}\left(I+V \Sigma V^T\right)D_G^{1/2}s,
\end{align*}
where $\Sigma_{ii}=\frac{1}{\frac{1-\gamma}{\lambda_i}-1}$. 
In order to compute efficiently the subsequent semi-supervised eigenvectors we must accommodate for the projection operator $FF^T=I-YY^T$, while yet exploiting the explicit closed-form inverse 
$(\mathcal{L}_G-\gamma I)^+ \approx \frac{1}{1-\gamma}\left(I+V \Sigma V^T\right)$.
However, the projection operator complicates the expression, since the previous solution can be spanned by multiple global eigenvectors, so leveraging from the low-rank decomposition is more difficult for the inverse $(FF^T(\mathcal{L}_G-\gamma I)FF^T)^+$. 

Conveniently, we can decouple the projection operator by treating the orthogonality constraint using a Lagrangian approach, such that the solution can be expressed as
%Now, it turns out that handling the orthogonality constraint using Lagrange multipliers, decouples the terms so that the solution takes the following form 
\begin{align*}
y_t=c \left(\mathcal{L}_G-\gamma I+\omega YY^T \right)^+D_G^{1/2}s, 
\end{align*}
where $\omega\geq 0$ denotes the associated Lagrange multiplier, and where the sign is deduced from the KKT conditions. Applying the Woodbury matrix identity is now straightforward  
\renewcommand{\P}{{P_\gamma}}
\begin{align}
\left (\P+\omega YY^T \right)^+ &= \P^+-\omega \P^+ Y\left (I+\omega Y^T\P^+Y\right)^+Y^T\P^+, \notag
\intertext{where for notational convenience we have introduced $P_\gamma=\mathcal{L}_G-\gamma I$. By decomposing $Y^T\P^+Y$ with an eigendecomposition $USU^T$ the equation simplifies as follows}
\left (\P+\omega YY^T \right)^+&= \P^+-\omega \P^+ Y\left(I+\omega USU^T\right)^+Y^T\P^+ \notag \\
&=\P^+ -\P^+ Y U\Omega U^T Y^T\P^+, \notag
\intertext{where $\Omega_{ii}=\frac{1}{\frac{1}{\omega}+S_{ii}}$. Note how this result gives a well defined way of controlling the amount of ``orthogonality'', and by Lemma \ref{proof1} in Appendix \ref{app:proofs}, we get exact orthogonality in the limit of $\omega \rightarrow \infty$, in which case the expression simplifies to}
\left (\P+\omega YY^T \right)^+&= \P^+- \P^+Y (Y^T\P^+Y)^+ Y^T\P^+. \notag
\intertext{Using the explicit expression for $\P^+$, the solution now only involves matrix-vector products and the inverse of a small matrix}
y_t&=c \left (\P^+- \P^+Y (Y^T\P^+Y)^+ Y^T\P^+ \right )D_G^{1/2}s \label{eq:lagrange_exact}.
\end{align}

To conclude this section, let us also consider how we can optimize the efficiency of the calculation of $\lambda_2(FF^T \mathcal{L}_GFF^T)$ used for bounding the binary search in Algorithm \ref{alg_new_1}. 
According to Eqn. (\ref{eq:upperbound}) the bound can be calculated efficiently as $\top=1-\lambda_{\text{LA}}(FF^TD_G^{-1/2}A_GD_G^{-1/2}FF^T)$. However, by substituting with $D_G^{-1/2}A_GD_G^{-1/2}\approx V \Lambda V^T$, we can exploit low-rankness since
\begin{align*}
\top=1-\lambda_\text{LA}(FF^TV \Lambda V^TFF^T) = 1-\lambda_\text{LA}(\Lambda^{1/2} V^TFF^TV \Lambda^{1/2} ),
\end{align*}
where the latter is a much smaller system.

\subsection{A Push-peeling Heuristic}\label{sec:reid}
Here we present a variant of our main algorithm that exploits 
the connections between diffusion-based procedures 
and eigenvectors, allowing semi-supervised eigenvectors to be efficiently computed for large networks. 
This is most well-known for the leading nontrivial eigenvectors of the
graph Laplacian~\cite{Chung:1997}; but recent work has exploited these 
connections in the context of performing locally-biased spectral graph
partitioning~\cite{Spielman:2004,andersen06local,MOV12-JMLR}.
In particular, we can compute the locally-biased vector using the first step
of Algorithm~\ref{alg_new_1}, or alternatively we can compute it using a 
locally-biased random walk of the form used 
in~\cite{Spielman:2004,andersen06local}. Here we present a heuristic that works by peeling off components from a solution to the PageRank problem, and by exploiting the regularization interpretation of $\gamma$, we can from these components obtain the subsequent semi-supervised eigenvectors. 

Specifically, we focus on the Push algorithm by~\cite{andersen06local}. This algorithm approximates the solution to PageRank very efficiently, 
by exploiting the local modifications that occur when the seed is highly concentrated. This makes our algorithm very scalable and applicable for large-scale data, since only the local neighborhood near the seed set will be touched by the algorithm. In comparison, by solving the linear system of equations we explicitly touch all nodes in the graph, even though most spectral rankings will be below the computational precision~\cite{Vig11_TR}.

We adapt a similar notation as in~\cite{andersen06local} and start by defining the usual PageRank vector $\text{pr}(\alpha,s_\text{pr})$ as the unique solution of the linear system
 \begin{align}
\text{pr}(\alpha,s_\text{pr})&=\alpha s_\text{pr} + (1-\alpha)A_GD_G^{-1}\text{pr}(\alpha,s_\text{pr})\label{eq:pagerank},
\intertext{where $\alpha$ is the teleportation parameter, and $s_\text{pr}$ is the sparse starting vector. 
%By rearranging the equation we see that the solution takes the following form
%\text{pr}&=\alpha (I-(1-\alpha)A_GD_G^{-1})^+s_\text{pr}.  
%\intertext{To make the relationship between the above expression and the solution for the leading semi-supervised eigenvector explicit, recall that Eq. \ref{eq:mahoney} can be rewritten as }
%x^{*} &= \frac{c}{1-\gamma} D_G^{-1} \left(I-\frac{1}{1-\gamma}A_GD_G^{-1}  \right)^{+} D_Gs.
%\intertext{Hence, for $\gamma=\frac{\alpha}{\alpha -1}\Leftrightarrow \alpha=\frac{\gamma}{\gamma-1}$ this expression becomes}
%x^{*} &= c(1- \alpha) D_G^{-1} \left( I- (1- \alpha) A_G D_G^{-1}  \right)^{+} D_Gs, 
%and it is immediate that for $s_\text{pr}=D_Gs$ and $c=\frac{\alpha}{1-\alpha} \Leftrightarrow c=-\gamma$ the solution to the first semi-supervised eigenvector will be given by $x^{*} = D_G^{-1} \text{pr}$.  
For comparison, the push algorithm by~\cite{andersen06local} computes an approximate PageRank vector $\text{pr}_\epsilon(\alpha',s_\text{pr})$ for a slightly different system}
\text{pr}_\epsilon(\alpha',s_\text{pr})&=\alpha' s_\text{pr} + (1-\alpha')W\text{pr}_\epsilon(\alpha',s_\text{pr}),\notag
\end{align}
where $W=\frac{1}{2}(I+A_GD_G^{-1})$ and not the usual random walk matrix $AD_G^{-1}$ as used in Eqn. (\ref{eq:pagerank}). However, these equations are only superficially different, and equivalent up to a change of the respective teleportation parameter. Thus, it is straightforward to verify that these teleportation parameters and the $\gamma$ parameter of Eqn. (\ref{eq:mahoney}) are related as 
\begin{align*}
\alpha=\frac{2\alpha'}{1+\alpha'}\Leftrightarrow \alpha'=\frac{\alpha}{2-\alpha}\Leftrightarrow \alpha'=\frac{\gamma}{\gamma -2}, 
\end{align*}
and that the leading semi-supervised eigenvector for $\gamma \in (-\infty,0)$ can be approximated as %takes the form\footnote{We use $\leftarrow$ to denote variable assignments of function arguments.}
\begin{align*}
x_1^{*}\approx \frac{c}{-\gamma} D_G^{-1} \text{pr}_\epsilon \left(\frac{\gamma}{\gamma-2},D_Gs \right).
%x_1^{*}=\frac{c}{-\gamma} D_G^{-1}\textsc{Push}\left \{A_G\leftarrow A_G,s_\text{pr}\leftarrow D_Gs,\alpha'\leftarrow\frac{\gamma}{\gamma-2} \right \}
\end{align*}
%where $\textsc{Push}$ refers to the push algorithm introduced in~\cite{andersen06local}.
To generalize subsequent semi-supervised eigenvectors to this diffusion based framework, we need to accommodate for the projection operator such that subsequent solutions can be expressed in terms of graph diffusions. By requiring distinct values of $\gamma$ for all semi-supervised eigenvectors, we may use the solution for the leading semi-supervised eigenvector and then systematically ``peel off'' components, thereby obtaining the solution of one of the consecutive semi-supervised eigenvectors. 
%The approximation is characterized in Appendix \ref{sec:peeling}  stating that the general solution in Eqn. (\ref{eq:semisupeigs}) is approximated by
By Lemma \ref{claim:peeling}, in Appendix \ref{app:proofs} the general solution in Eqn. (\ref{eq:semisupeigs}) can be approximated by
%in the proof of Lemma \ref{}, stating that the general solution in Eq. \ref{} is well-approximated by
\begin{align}
x_t^{*}  \approx c \left ( I -XX^TD_G \right)( L_G - \gamma_t D_G)^{+} D_G s,\label{eq:approxpeeling}
\end{align}
under the assumption that all $\gamma_k$ for $1<k\leq t$ are sufficiently apart. If we think about $\gamma_k$ as being distinct eigenvalues of the generalized eigenvalue problem $L_Gx_k=\gamma_k D_G x_k$, then it is clear that Eqn. (\ref{eq:approxpeeling}), correctly computes the sequence of generalized eigenvectors. This is explained by the fact that $( L_G - \gamma_t D_G)^{+} D_G s$ can be interpreted as the first step of the Rayleigh quotient iteration, where $\gamma_t$ is the estimate of the eigenvalue, and $D_G s$ is the estimate of the eigenvector. Given that the estimate of the eigenvalue is right, this algorithm will in the initial step compute the corresponding eigenvector, and the operator $\left ( I -XX^TD_G \right)$ will be superfluous, as the global eigenvectors are already orthogonal in the degree-weighted norm. To quantify the failure modes of the approximation, let us consider what happens when $\gamma_2$ starts to approach $\gamma_1$. What constitutes the second solution for a particular value of $\gamma_2$ is the perpendicular component with respect to the projection onto the solution given by $\gamma_1$. As $\gamma_2$ approaches $\gamma_1$, this perpendicular part  diminishes and the solution becomes ill-posed. Fortunately, we can easily detect such issues during the binary search in Algorithm \ref{alg_new_1}, and in general the approximation has turned out to work very well in practice as our experimental results in Section \ref{sxn:empirical} show. 

In terms of the approximate PageRank vector $\text{pr}_\epsilon(\alpha',s_\text{pr})$ , the general approximate solution takes the following form
\begin{align}
x_t^{*}\approx c \left ( I-XX^TD_G \right) D_G^{-1} \text{pr}_\epsilon \left(\frac{\gamma_t}{\gamma_t-2},D_Gs \right)\label{eq:pushslow}.
%x_t^{*}\approx c' \left ( I-XX^TD_G \right) D_G^{-1} \text{pr}_\epsilon \left(\frac{\gamma_t}{\gamma_t-2},D_Gs \right),\label{eq:pushslow}
\end{align}
%where $c'=-\frac{c}{\gamma}$.
As already stated in Section \ref{sec:discussion}, the impact of using a diffusion based procedure is that we cannot interpolate all the way to the global eigenvectors, and that the main challenge is that the solutions do not become too localized. The $\epsilon$ parameter of the Push algorithm controls the threshold for propagating mass away from the seed set and into the adjacent nodes in the graph. If the threshold is too high, the solution will be very localized and make it difficult to find more than a few semi-supervised eigenvectors, as characterized by Lemma \ref{claim:corrsum} in Appendix \ref{app:proofs}, because the leading ones will then span the entire space of the seed set. 
As the choice of $\epsilon$ is important for the applicability of our algorithm, we will in Section \ref{sxn:empirical} investigate the influence of this parameter on large data graphs. 
%As the choice of $\epsilon$ is important for the applicability in large-scale networks, we will in Section \ref{sxn:empirical} return to this discussion, and investigate the impact on some real datasets.

To conclude this section, we consider an important implementation detail that have been omitted so far. In the work of~\cite{MOV12-JMLR} the seed vector was defined to be perpendicular to the all-ones vector, and for the sake of consistency we have chosen to define it in the same way. The impact of projecting the seed set to a space that is perpendicular to the all-ones vector is that the resulting seed vector is no longer sparse, making the use of the Push algorithm in Eqn. (\ref{eq:pushslow}) inefficient.
The seed vector can, however, without loss of generality, be defined as $s\propto D_G^{-1/2} \left (I-v_0v_0^T \right )s_0$ where $s_0$ is the sparse seed, and $v_0\propto \text{diag} \left (D_G^{1/2} \right )$ is the leading eigenvector of the normalized graph Laplacian (corresponding to the all-ones vector of the combinatorial graph Laplacian).
If we substitute with this expression for the seed in Eqn. (\ref{eq:pushslow}), it follows by plain algebra (see Appendix \ref{app:sparsediffusion}) that
\begin{align}
x_t^{*}&\approx c \left (I-XX^TD_G \right )\left ( D_G^{-1}    \text{pr}_\epsilon \left(\frac{\gamma_t}{\gamma_t-2},D_G^{1/2}s_0 \right)  - D_G^{-1/2} v_0 v_0^Ts_0 \right )\label{eq:pushfast}.
\end{align}
Now the Push algorithm is only defined on the sparse seed set making the the expression very scalable. Finally, the Push algorithm maintains a queue of high residual nodes that are yet to be processed. The order in which nodes are processed influences the overall running time, and in ~\cite{Vig11_TR} preliminary experiments showed that a FIFO queue resulted in the best performance for large values of $\gamma$,  as compared to a priority queue that scales logarithmically. For this reason we have chosen to use a FIFO queue in our implementation.

%Due to the requirement that $s$ is perpendicular to the trivial eigenvector, it will most likely be dense which of course is highly undesirable in terms of the push algorithm (we want to touch as few nodes as possible). Let us therefore factorize our expression in such a way that \textsc{ApproximatePR} is used only for sparse seed indicators.

\section{Empirical Results}
\label{sxn:empirical}

In this section, we provide a detailed empirical evaluation of the method of 
semi-supervised eigenvectors and how it can be used for locally-biased 
machine learning.
Our goal is two-fold: 
first, to illustrate how the ``knobs'' of the method work; and
second, to illustrate the usefulness of the method in real applications.
To do this, we consider several classes of data.
\begin{itemize}
\item
\textbf{Toy data.}
In Section~\ref{sxn:empirical-toy}, we consider one-dimensional examples of 
the popular ``small world'' model~\cite{watts98collective}.
This is a parameterized family of models that interpolates between 
low-dimensional grids and random graphs; and, as such, it allows us to 
illustrate the behavior of the method and its various parameters in a 
controlled setting.
\item
\textbf{Congressional voting data.}
In Section~\ref{sxn:empirical-congress}, we consider roll call voting 
data from the United States Congress that are based on~\cite{PR91}.
This is an example of realistic data set that has relatively-simple global 
structure but nontrivial local structure that varies with 
time~\cite{CM11_TR}; and thus it allows us to illustrate the method in 
a realistic but relatively-clean setting.
\item
\textbf{Handwritten image data.}
In Section~\ref{sxn:empirical-digits}, we consider data from the 
MNIST digit data set~\cite{mnistlecun}.
These data  have been widely-studied in machine learning and related areas 
and they have substantial ``local heterogeneity.''
Thus, these data allow us to illustrate how the method may be used to 
perform locally-biased versions of common machine learning tasks such as 
smoothing, clustering, and kernel construction.
\item
\textbf{Functional brain imaging data.}
In Section~\ref{sxn:empirical-fmri}, we consider functional magnetic 
resonance imaging (fMRI) data.
Single subject fMRI data is characterized by high dimensionality and relatively few samples, in contrast to the MNIST data 
that consist of many samples and a relatively low dimensionality. 
We demonstrate how our semi-supervised eigenvectors 
can be applied to construct a data-driven spatially-biased basis by incorporating \emph{a priori}
knowledge from a functional brain atlas~\cite{Eickhoff2005}.
\item
\textbf{Large-scale network data.}
In Section~\ref{sec:largescalenetworkdata}, we consider large-scale network data, and demonstrate significant performance improvements of the
push-peeling heuristic compared to solving the same equations using a conjugate gradient solver.
These improvements are demonstrated on datasets from the DIMACS implementation challenge, as well as on large web-crawls with more then 3 billion non-zeros in the adjacency matrix~\cite{BCSU3,BRSLLP,BoVWFI}.  
\end{itemize}

\subsection{Small-world Data}
\label{sxn:empirical-toy}

The first data sets we consider are networks constructed from the so-called 
small-world model. 
This model can be used to demonstrate how semi-supervised eigenvectors focus 
on specific target regions of a large data graph to capture slowest modes of 
local variation; and it can also be used to illustrate how the ``knobs'' of 
the method work, \emph{e.g.}, how $\kappa$ and $\gamma$ interplay, in a 
practical setting.
In Figure~\ref{fig:smallw}, we plot the usual global eigenvectors, as well 
as locally-biased semi-supervised eigenvectors, around illustrations of 
non-rewired and rewired realizations of the small-world graph, \emph{i.e.}, 
for different values of the rewiring probability $p$ and for different 
values of the locality parameter $\kappa$.

\begin{figure*}[hbt!]
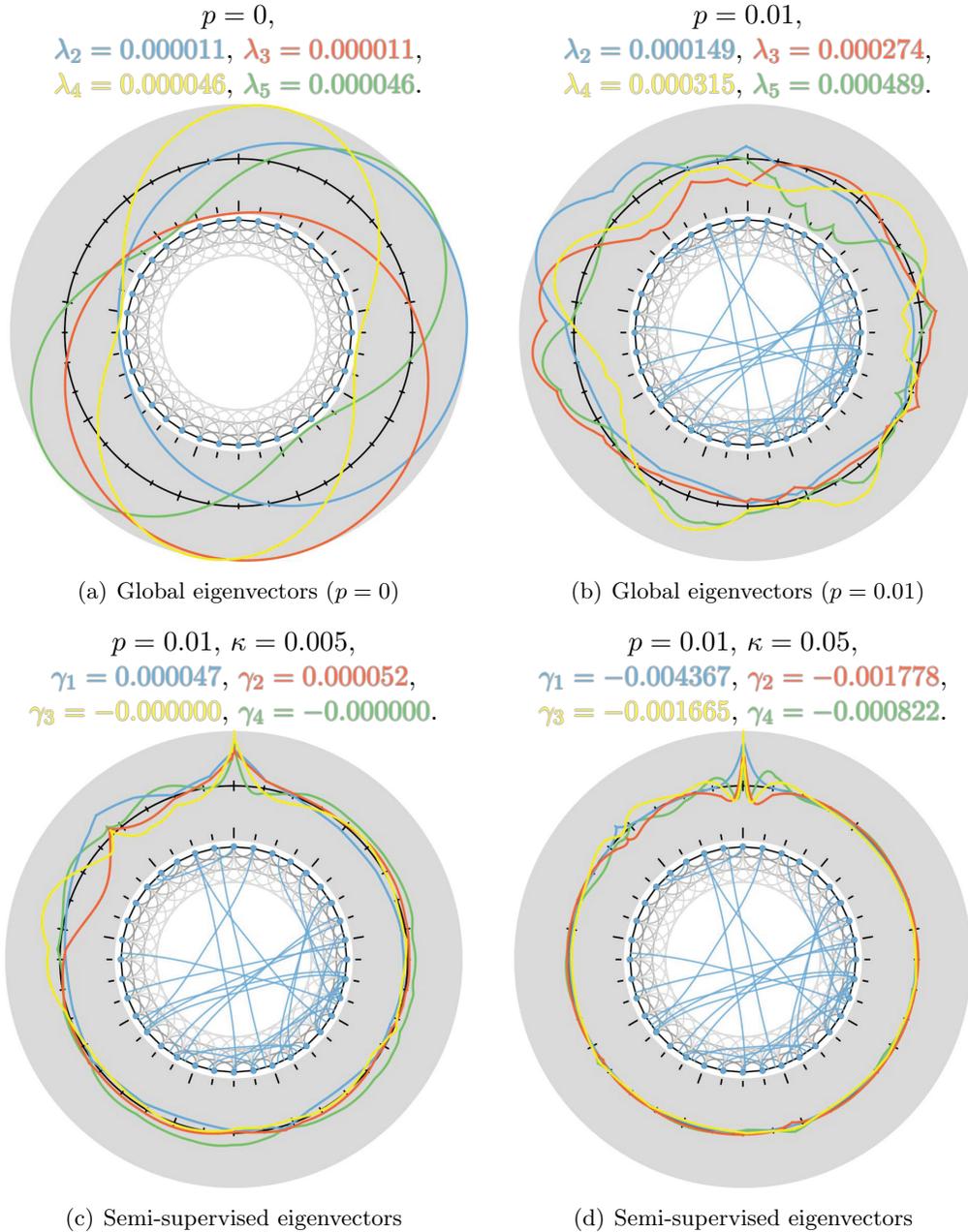

\centering
\subfigure[Global eigenvectors ($p=0$)]{
\begin{minipage}[b]{0.40\linewidth}
\centering
\input{circos-n=3600,p=0,k=0.tex}
\includegraphics[scale=0.2]{\gs{circos-n=3600,p=0,k=0}}\\
\end{minipage}
\label{fig:smallwA}
}
\subfigure[Global eigenvectors ($p=0.01$)]{
\begin{minipage}[b]{0.40\linewidth}
\centering
\input{circos-n=3600,p=0_01,k=0.tex}
\includegraphics[scale=0.2]{\gs{circos-n=3600,p=0_01,k=0}}\\
\end{minipage}
\label{fig:smallwB}
}
\subfigure[Semi-supervised eigenvectors]{
\begin{minipage}[b]{0.40\linewidth}
\centering
\input{circos-n=3600,p=0_01,k=0_005.tex} 
\includegraphics[scale=0.2]{\gs{circos-n=3600,p=0_01,k=0_005}}\\
\end{minipage}
\label{fig:smallwC}
}
\subfigure[Semi-supervised eigenvectors]{
\begin{minipage}[b]{0.40\linewidth}
\centering
\input{circos-n=3600,p=0_01,k=0_05.tex}
\includegraphics[scale=0.2]{\gs{circos-n=3600,p=0_01,k=0_05}}\\
\end{minipage}
\label{fig:smallwD}
}
\caption[*]{Illustration of small-world graphs with rewiring probability of 
$p=0$ or $p=0.01$ and with different values of the $\kappa$ parameter.
For each subfigure, the data consist of $3600$ nodes, each connected to it's 
$8$ nearest-neighbors. 
In the center of each subfigure, we show the nodes 
(\textcolor[rgb]{0.4196,0.6824,0.8392}{blue}) and edges (black and light gray 
are the local edges, and \textcolor[rgb]{0.4196,0.6824,0.8392}{blue} 
are the randomly-rewired edges).
We wrap around the plots (black x-axis and gray background), 
visualizing the $4$ smallest semi-supervised eigenvectors.
% allowing us to 
%see the effect of random edges (different values of rewiring probability $p$) 
%and degree of localization (different values of $\kappa$). 
Eigenvectors are color coded as 
\textcolor[rgb]{0.4196,0.6824,0.8392}{\contour{light-gray}{blue}}, 
\textcolor[rgb]{0.9843,0.4157,0.2902}{\contour{light-gray}{red}}, 
\textcolor[rgb]{1.0000,0.9569,0.3137}{\contour{light-gray}{yellow}}, and 
\textcolor[rgb]{0.4549,0.7686,0.4627}{\contour{light-gray}{green}}, 
starting with the one having the smallest eigenvalue. 
}
\label{fig:smallw}
\end{figure*}

To start, in Figure~\ref{fig:smallwA} that we show a graph with no 
randomly-rewired edges ($p=0$) and a parameter $\kappa$ such that the global 
eigenvectors are obtained. 
This yields a symmetric graph with eigenvectors corresponding to orthogonal 
sinusoids, \emph{i.e.}, the first two capture the slowest mode of variation 
and correspond to a sine and cosine with equal random phase-shift (up to a 
rotational ambiguity). 
In Figure~\ref{fig:smallwB}, random edges have been added with probability 
$p=0.01$ and the parameter $\kappa$ is still chosen such that the global 
eigenvectors---now of the rewired graph---are obtained.
Note the many small kinks in the eigenvectors at the location of the 
randomly added edges. 
Note also the slow mode of variation in the interval on the top left; a 
normalized-cut based on the leading global eigenvector would extract this 
region, since the remainder of the ring is more well-connected due to the 
random rewiring.

In Figure~\ref{fig:smallwC}, we see the same graph realization as in 
Figure~\ref{fig:smallwB}, except that the semi-supervised eigenvectors 
have a seed node at the top of the circle, \emph{i.e.}, at ``12 o-clock,'' 
and the locality parameter $\kappa_t=0.005$, which corresponds to 
moderately well-localized eigenvectors. 
As with the global eigenvectors, the locally-biased semi-supervised 
eigenvectors are of successively-increasing (but still localized) variation.
Note also that the neighborhood around ``11 o-clock'' contains more 
mass, \emph{e.g.}, when compared with the same parts of the circle in 
Figure~\ref{fig:smallwB} or with other parts of the circle in 
Figure~\ref{fig:smallwC}, even though it is not very near the seed node in 
the original graph geometry.  
The reason for this is that this region is well-connected with the seed via 
a randomly added edge, and thus it is close in the modified graph topology. 
Above this visualization, we also show the value of $\gamma_t$ that saturates 
$\kappa_t$, \emph{i.e.}, $\gamma_t$ is the Lagrange multiplier that defines 
the effective locality $\kappa_t$. 
Not shown is that if we kept reducing $\kappa_t$, then $\gamma_t$ would tend 
towards $\lambda_{t+1}$, and the respective semi-supervised eigenvectors 
would tend towards the global eigenvectors that are illustrated in 
Figure~\ref{fig:smallwB}. 
Finally, in Figure~\ref{fig:smallwD}, the desired locality is increased 
to $\kappa=0.05$ (which has the effect of decreasing the value of 
$\gamma_t$), making the semi-supervised eigenvectors more localized in the 
neighborhood of the seed. 
It should be clear that, in addition to being determined by the locality 
parameter, we can think of $\gamma$ as a regularizer biasing the global 
eigenvectors towards the region near the seed set.
That is, variation in eigenvectors that are near the initial seed (in the 
modified graph topology) are most important, while variation that is far 
away from the initial seed matters much less.

 %~1MB

\subsection{Congressional Voting Data}
\label{sxn:empirical-congress}

The next data set we consider is a network constructed from a time 
series of roll call voting patterns from the United States Congress that are 
based on~\cite{PR91}.
This is a particularly well-structured social network for which there is a 
great deal of meta-information, and it has been studied recently with 
graph-based methods~\cite{multiplex_Mucha,WPFMP09_TR,CM11_TR}.
Thus, it permits a good illustration of the method of semi-supervised 
eigenvectors in a real application~\cite{Poole05}.
This data set is known to have nontrivial time-varying structure at 
different time steps, and we will illustrate how the method of 
semi-supervised eigenvectors can perform locally-biased classification with 
a traditional kernel-based algorithm.

In more detail, 
we evaluate our method by considering the known Congress data-set containing 
the roll call voting patterns in the U.S Senate across time. 
We considered Senates in the $70^{th}$ Congress through the $110^{th}$ 
Congress, thus covering the years $1927$ to $2008$. 
During this time, the U.S went from $48$ to $50$ states, hence the number of
senators in each of these $41$ Congresses was roughly the same. 
We constructed an $N \times N$ adjacency matrix, with $N = 4196$ ($41$ 
Congresses each with $\approx 100$ Senators) where $A_{ij}\in [0,1]$ 
represents the extent of voting agreement between legislators $i$ and $j$, 
and where identical senators in adjacent Congresses are connected with an 
inter-Congress connection strength. 
We then considered the Laplacian matrix of this graph, constructed in the 
usual way~\cite{CM11_TR}.

\begin{figure*}[hbt!] 
\centering
\includegraphics[scale=0.4]{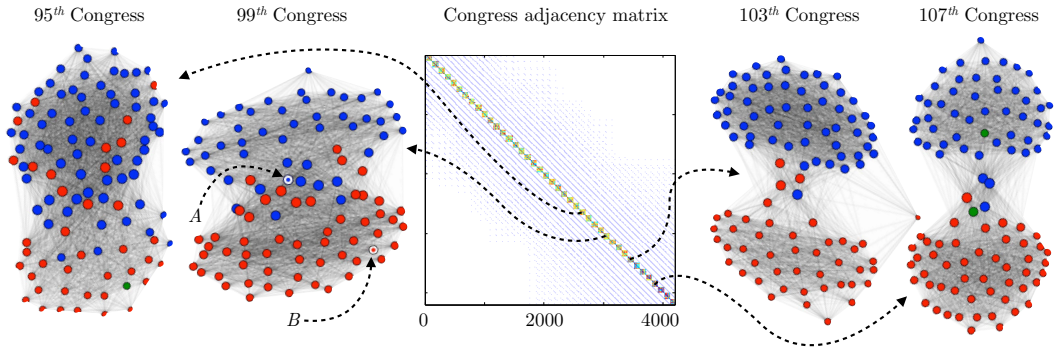}
\caption{Shows the Congress adjacency matrix, along with four of the 
individual Congresses. Nodes are scaled according to their degree, blue 
nodes correspond to Democrats, red to Republicans, and green to 
Independents.
%XXX.  JMLR NEEDS COLOR INSENSITIVITY IN FIGS AND TEXT.
}
\label{fig:cong-conceptual}
\end{figure*}

Figure \ref{fig:cong-conceptual} visualizes the adjacency matrix, along with 
four of the individual Congresses, color coded by party.
This illustrates that these data should be viewed---informally---as a 
structure (depending on the specific voting patterns of each Congress) 
evolving along a one-dimensional temporal axis, confirming the results 
of~\cite{CM11_TR}.
Note that the latter two Congresses are significantly better described by
a simple two-clustering than the former two Congresses, and an examination 
of the clustering properties of each of the $40$ Congresses reveals 
significant variation in the local structure of individual Congresses, in a 
manner broadly consistent with \cite{Poole05} and \cite{PR91}.
In particular, the more recent Congresses are significantly more polarized.

\begin{figure*}[hbt!]
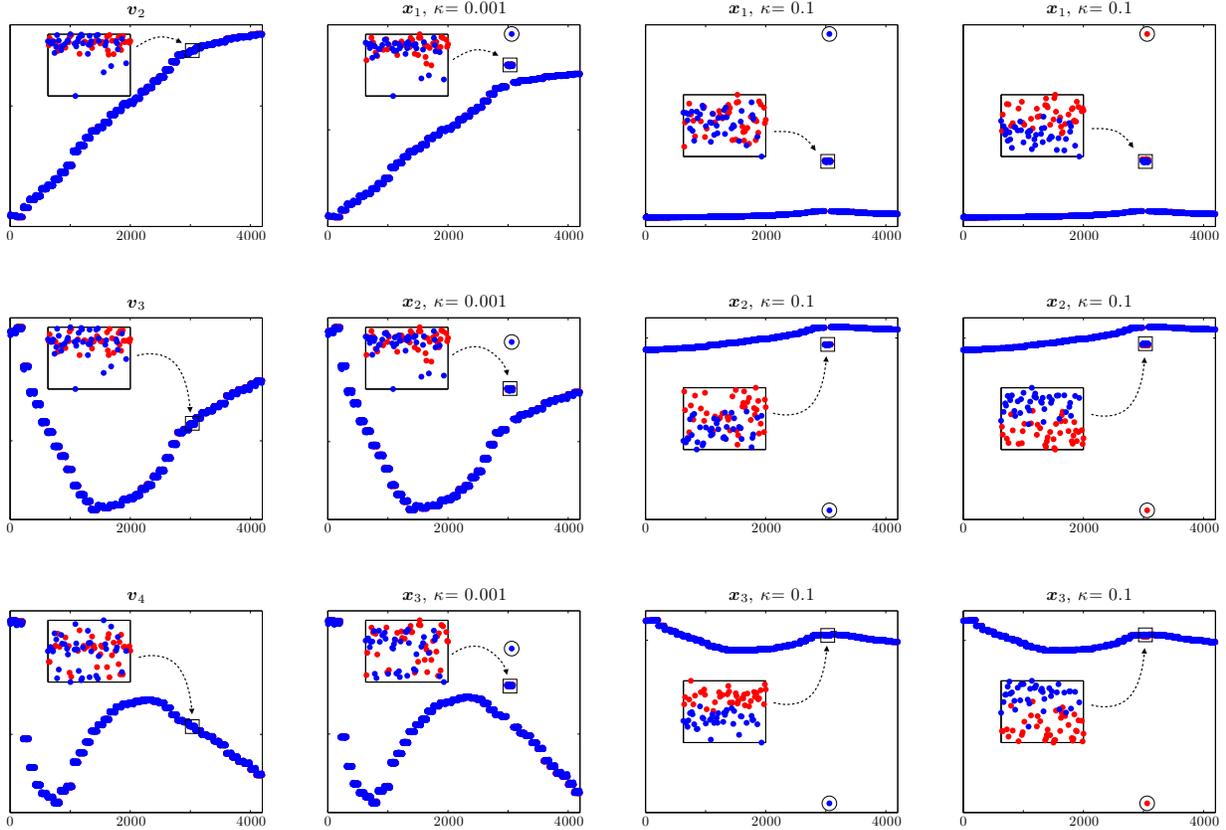

\begin{minipage}[b]{0.21\linewidth}
\centering
\includegraphics[scale=0.22]{\gs{v1-semi-0_0-crop}}
\end{minipage}
\hspace{0.5cm}
\begin{minipage}[b]{0.21\linewidth}
\centering
\includegraphics[scale=0.22]{\gs{v1-semi-0_001-crop}}
\end{minipage}
\hspace{0.5cm}
\begin{minipage}[b]{0.21\linewidth}
\centering
\includegraphics[scale=0.22]{\gs{v1-semi-0_1-crop}}
\end{minipage}
\hspace{0.5cm}
\begin{minipage}[b]{0.21\linewidth}
\centering
\includegraphics[scale=0.22]{\gs{v1-semi2-0_1-crop}}
\end{minipage}
\newline 
\medskip
\newline
\begin{minipage}[b]{0.21\linewidth}
\centering
\includegraphics[scale=0.22]{\gs{v2-semi-0_0-crop}}
\end{minipage}
\hspace{0.5cm}
\begin{minipage}[b]{0.21\linewidth}
\centering
\includegraphics[scale=0.22]{\gs{v2-semi-0_001-crop}}
\end{minipage}
\hspace{0.5cm}
\begin{minipage}[b]{0.21\linewidth}
\centering
\includegraphics[scale=0.22]{\gs{v2-semi-0_1-crop}}
\end{minipage}
\hspace{0.5cm}
\begin{minipage}[b]{0.21\linewidth}
\centering
\includegraphics[scale=0.22]{\gs{v2-semi2-0_1-crop}}
\end{minipage}
\newline 
\medskip
\newline
\begin{minipage}[b]{0.21\linewidth}
\centering
\includegraphics[scale=0.22]{\gs{v3-semi-0_0-crop}}
\end{minipage}
\hspace{0.5cm}
\begin{minipage}[b]{0.21\linewidth}
\centering
\includegraphics[scale=0.22]{\gs{v3-semi-0_001-crop}}
\end{minipage}
\hspace{0.5cm}
\begin{minipage}[b]{0.21\linewidth}
\centering
\includegraphics[scale=0.22]{\gs{v3-semi-0_1-crop}}
\end{minipage}
\hspace{0.5cm}
\begin{minipage}[b]{0.21\linewidth}
\centering
\includegraphics[scale=0.22]{\gs{v3-semi2-0_1-crop}}
\end{minipage}
\caption{First column: The leading three nontrivial global eigenvectors. 
Second column: The leading three semi-supervised eigenvectors seeded 
(circled node) in an articulation point between the two parties in the 
$99^{th}$ Congress (see Figure \ref{fig:cong-conceptual}), for correlation 
$\kappa = 0.001$.  Third column: Same seed as previous column, but for a 
correlation of $\kappa=0.1$. 
Notice the localization on the third semi-supervised eigenvector. 
Fourth column: Same correlation as the previous column, but for another seed 
node well within the cluster of Republicans. 
Notice the localization on all three semi-supervised eigenvectors.
%XXX.  JMLR NEEDS COLOR INSENSITIVITY IN FIGS AND TEXT.
}
\label{fig:cong-eigenvects}
\end{figure*}

The first vertical column of Figure~\ref{fig:cong-eigenvects} illustrates 
the first three global eigenvectors of the full data set, illustrating 
fluctuations that are sinusoidal and consistent with the one-dimensional 
temporal scaffolding.
Also shown in the first column are the values of that eigenfunction for
the members of the $99^{th}$ Congress, illustrating that there is \emph{not} 
a good separation based on party affiliations.
The next three vertical columns of Figure~\ref{fig:cong-eigenvects} 
illustrate various localized eigenvectors computed by starting with a seed 
node in the $99^{th}$ Congress.
For the second column, we visualize the semi-supervised eigenvectors for a 
very low correlation ($\kappa=0.001$), which corresponds to only a weak 
localization---in this case one sees eigenvectors that look very similar to 
the global eigenvectors, and the elements of the eigenvector on that 
Congress do not reveal partitions based on the party~cuts.

The third and fourth column of Figure~\ref{fig:cong-eigenvects} illustrate 
the semi-supervised eigenvectors for a much higher correlation 
($\kappa=0.1$), meaning a much stronger amount of locality.
In particular, the third column starts with the seed node marked $A$ in
Figure \ref{fig:cong-conceptual}, which is at the articulation point between 
the two parties, while the fourth column starts with the seed node marked 
$B$, which is located well within the cluster of Republicans.
In both cases the eigenvectors are much more strongly localized on the
$99^{th}$ Congress near the seed node, and in both cases one observes the
partition into two parties based on the elements of the localized 
eigenvectors.
Note, however, that when the initial seed is at the articulation point 
between two parties then the situation is much noisier: in this case, this
``partitionability'' is seen only on the third semi-supervised eigenvector, 
while when the initial seed is well within one party then this is seen on 
all three eigenvectors.
Intuitively, when the seed set is strongly within a good cluster, then that
cluster tends to be found with semi-supervised eigenvectors (and we will 
observe this again below).
This is consistent with the diffusion interpretation of eigenvectors.
This is also consistent with~\cite{CM11_TR}, who observed that the 
properties of eigenvector localization depended on the local structure of 
the data around the seed node, as well as the larger scale structure around 
that local cluster.

\begin{figure*}[hbt]
\begin{minipage}[b]{0.3\linewidth}
\centering
\includegraphics[scale=0.34]{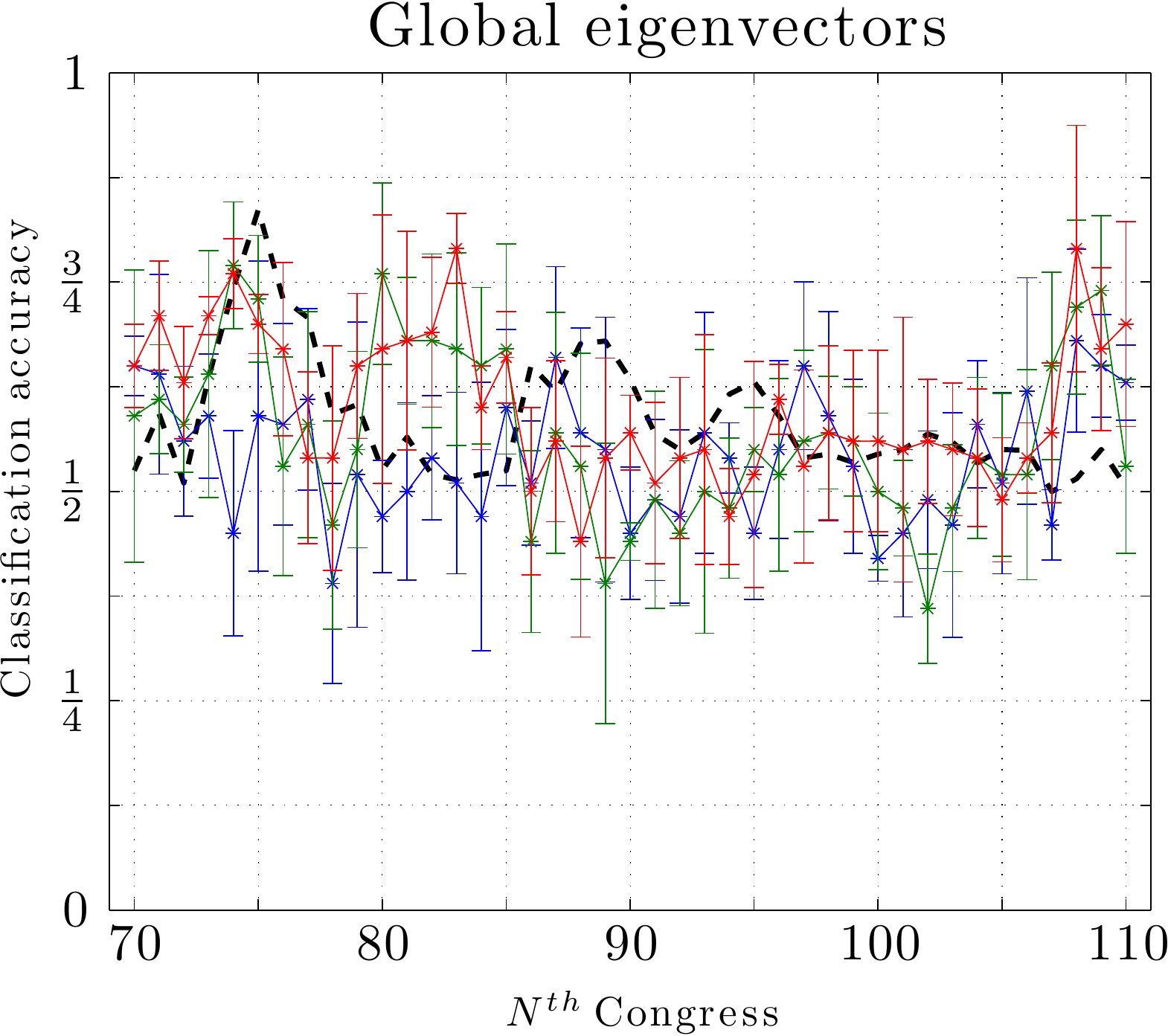}
\end{minipage}
\hspace{0.5cm}
\begin{minipage}[b]{0.3\linewidth}
\centering
\includegraphics[scale=0.34]{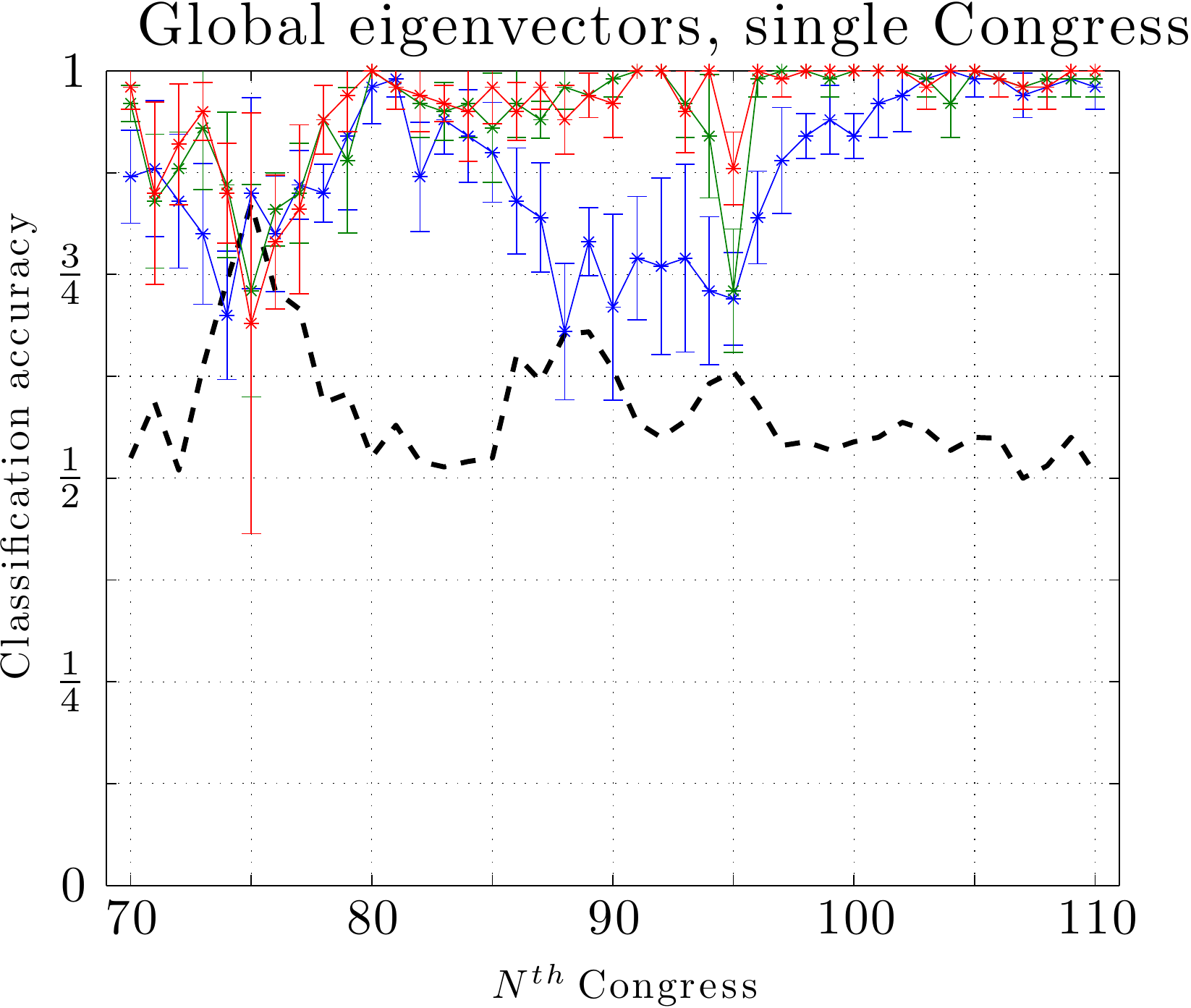}
\end{minipage}
\hspace{0.5cm}
\begin{minipage}[b]{0.3\linewidth}
\centering
\includegraphics[scale=0.34]{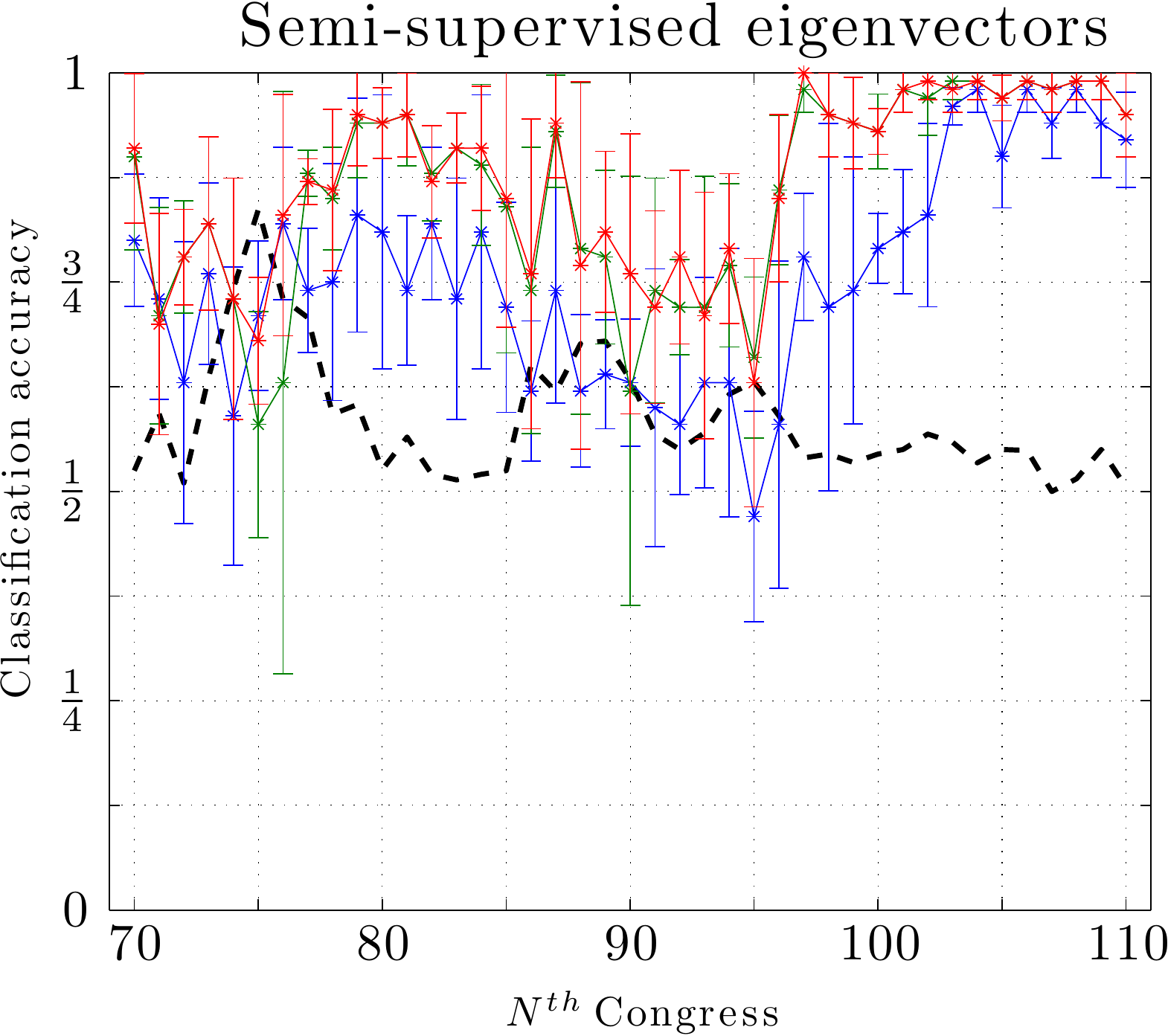}
\end{minipage}
\caption{Classification accuracy measured in individual Congresses. For each 
Congress we perform $5$-fold cross validation based on $\approx 80$ samples 
and leave out the remaining $20$ samples to estimate an unbiased test error. 
Error bars are obtained by resampling and they correspond to $1$ standard 
deviation. For each approach we consider features based on the $1^{st}$ 
(blue), $2^{nd}$ (green), and $3^{rd}$ (red) smallest eigenvector(s), 
excluding the all-one vector. We also plot the probability of the most 
probable class as a baseline measure (black) as some Congresses are very 
imbalanced.
%XXX.  JMLR NEEDS COLOR INSENSITIVITY IN FIGS AND TEXT.
}
\label{fig:cong-classification}
\end{figure*}

To illustrate how these structural properties manifest themselves in a
more traditional machine learning task, we also consider the classification 
task of discriminating between Democrats and Republicans in single 
Congresses, \emph{i.e.}, we measure to what extent we can extract local 
discriminative features. 
To do so, we apply $L_2$-regularized $L_2$-loss support vector 
classification with a linear kernel, where features are extracted using the 
global eigenvectors of the entire data set, global eigenvectors from a 
single Congress (best case measure), and our semi-supervised eigenvectors.
Figure~\ref{fig:cong-classification} illustrates the classification accuracy 
for $1$, $2$, and $3$ eigenvectors. 
As reported by~\cite{CM11_TR}, locations that exhibit discriminative 
information are best found on low-order eigenvectors of this data, 
explaining why the classifier based global eigenvectors performs poorly. 
In the classifier based on global eigenvectors in the single Congress we 
exploit \textit{a priori} knowledge to extract the relevant data, that in a 
usual situation would be impossible. 
Hence, this is simply to define a baseline point of reference for the best case 
classification accuracy. 
The classifier based on semi-supervised eigenvectors is seeded using a few 
training samples and performs in-between the two other approaches.
Compared to our point of reference, Congresses in the range $88$ to $96$ do 
worse with the semi-supervised eigenvectors; whereas for Congresses after 
$100$ the semi-supervised approach almost performs on par, even for a single 
single eigenvector.
This is consistent with the visualization in 
Figure~\ref{fig:cong-conceptual} illustrating that earlier Congresses are 
less cleanly separable, as well as with empirical evidence indicating 
heterogeneity due to Southern Democrats in earlier Congresses and the recent 
increase in party polarization in more recent Congresses, as described 
in~\cite{Poole05} and~\cite{PR91}.

 %~0.9MB

\subsection{MNIST Digit Data}
\label{sxn:empirical-digits}

The next data set we consider is the well-studied MNIST data set containing 
$60,000$ training digits and $10,000$ test digits ranging from $0$ to $9$; 
and, with these data, we demonstrate the use of semi-supervised eigenvectors 
as a feature extraction preprocessing step in a traditional machine learning 
setting. 
%All digits are normalized and centered in a gray-level image of size $20 \times 20$, corresponding to 400 pixels. 
We construct the full $70,000\times 70,000$ $k$-NN graph, with $k=10$ and 
with edge weights given by 
$w_{ij}=\exp({-\frac{4}{\sigma_i^2}\| x_i- x_j\|^2})$, where $\sigma_i^2$ 
is the Euclidian distance of the $i^{th}$ node to it's nearest neighbor; and 
from this we define the graph Laplacian in the usual way.
We then evaluate the semi-supervised eigenvectors in a transductive learning 
setting by disregarding the majority of labels in the entire training data. 
We use a few samples from each class to seed our semi-supervised 
eigenvectors as well as a few others to train a downstream classification 
algorithm. 
For this evaluation, we use the Spectral Graph Transducer (SGT) 
of~\cite{Joa03}; and we choose to use it for two main reasons. 
First, the transductive classifier is inherently designed to work on a 
subset of global eigenvectors of the graph Laplacian, making it ideal for 
validating that the localized basis constructed by the semi-supervised 
eigenvectors can be more informative when we are solely interested in the 
``local heterogeneity'' near a seed set. 
Second, using the SGT based on global eigenvectors is a good point of 
comparison, because we are only interested in the effect of our subspace 
representation.
(If we used one type of classifier in the local setting, and another in the 
global, the classification accuracy that we measure would obviously be 
confounded.) 
As in~\cite{Joa03}, we normalize the spectrum of both global and 
semi-supervised eigenvectors by replacing the eigenvalues with some 
monotonically increasing function. 
We use $\lambda_i=\frac{i^2}{k^2}$, \emph{i.e}., focusing on ranking among 
smallest cuts; see \cite{chapelle2002}. 
Furthermore, we fix the regularization parameter of the SGT to $c=3200$, and 
for simplicity we fix $\gamma=0$ for all semi-supervised eigenvectors, 
implicitly defining the effective $\kappa = [\kappa_1,\ldots, \kappa_k]^T$. 
Clearly, other correlation distributions $\kappa$ and other values of 
$\gamma$ parameter may yield subspaces with even better discriminative 
properties (which is an issue to which we will return in
Section~\ref{sec:kappa_impact} in greater detail).

 \begin{table*}[ht]
\scriptsize
\centering
\begin{tabular}{ccccccccccccccc}
\hline
& \multicolumn{6}{c}{\#Semi-supervised eigenvectors for SGT} & & \multicolumn{6}{c}{\#Global eigenvectors for SGT} \\
\cline{2-7} \cline{9-14}
Labeled points & 1 & 2 & 4 & 6 & 8 & 10 &   & 1 & 5 & 10 & 15 & 20 & 25 \\
\hline 
$1:1\;\:$ & 0.39 & 0.39 & 0.38 & 0.38 & 0.38 & 0.36 &      & 0.50 & 0.48 & 0.36 & 0.27 & 0.27 & 0.19\\
$1:10$ & 0.30 & 0.31 & 0.25 & 0.23 & 0.19 & 0.15 &    &0.49 & 0.36 & 0.09 & 0.08 & 0.06 & 0.06\\
$5:50$ & 0.12 & 0.15 & 0.09 & 0.08 & 0.07 & 0.06 &     &0.49 & 0.09 & 0.08 & 0.07 & 0.05 & 0.04\\
$10:100$ & 0.09 & 0.10 & 0.07 & 0.06 & 0.05 & 0.05 &     & 0.49 & 0.08 & 0.07 & 0.06 & 0.04 & 0.04\\
$50:500$ & 0.03 & 0.03 & 0.03 & 0.03 & 0.03 & 0.03 &     & 0.49 &  0.10 & 0.07 & 0.06 & 0.04 & 0.04\\
%$100:1000$ &0.97 & 0.98 & 0.97 & 0.97 & 0.97 & 0.98 & 0.98 & 0.98 & 0.98 & 0.98 & 0.96\\
\hline
\end{tabular} 
\label{tab:auc_irm}
\caption{Classification error for discriminating between $4$s and $9$s for 
the SGT based on, respectively, semi-supervised eigenvectors and global 
eigenvectors. 
The first column from the left encodes the configuration, \emph{e.g.}, 
1:10 interprets as 1 seed and 10 training samples from each class (total of 
22 samples---for the global approach these are all used for training). 
When the seed is well-determined and the number of training samples 
moderate (50:500), then  a single semi-supervised eigenvector is sufficient; 
whereas for less data, we benefit from using multiple semi-supervised 
eigenvectors. All experiments have been repeated 10 times.}
\label{tab:performance}
\end{table*}

\subsubsection{Discriminating between pairs of digits}

Here, we consider the task of discriminating between two digits; and, in 
order to address a particularly challenging task, we work with \emph{$4$s} 
and \emph{$9$s}.
(This is particularly challenging since these two classes tend to overlap 
more than other combinations since, \emph{e.g.}, a closed $4$ can resemble a
$9$ more than an open $4$.)
Hence, we expect that the class separation axis will not be evident in the 
leading global eigenvector, but instead it will be ``buried'' further down 
the spectrum; and we hope to find a ``locally-biased class separation axis''
with locally-biased semi-supervised eigenvectors.
Thus, this example will illustrate how semi-supervised eigenvectors can 
represent relevant heterogeneities in a local subspace of low dimensionality. 
See Table~\ref{tab:performance}, which summarizes our classification results 
based on, respectively, semi-supervised eigenvectors and global eigenvectors, 
when we use the SGT.
See also Figure~\ref{fig:good_seed} and Figure~\ref{fig:bad_seed}, which 
illustrate two realizations for the 1:10 configuration.
In these two figures, the training samples are fixed; and, to demonstrate 
the influence of the seed, we have varied the seed nodes.
In particular, in Figure~\ref{fig:good_seed} the seed nodes $s_+$ and $s_-$ 
are located well-within the respective classes; while in 
Figure~\ref{fig:bad_seed}, they are located much closer to the boudary 
between the two classes.
As intuitively expected, when the seed nodes fall well within the classes to 
be differentiated, the classification is much better than when the seed 
nodes are located closer to the boundary between the two classes.
See the caption in these figures for further details.

\begin{figure*}[hbt!]
\begin{minipage}[t]{0.24\linewidth}
\centering
\vspace{0pt} 
\begin{overpic}[scale=0.3]
{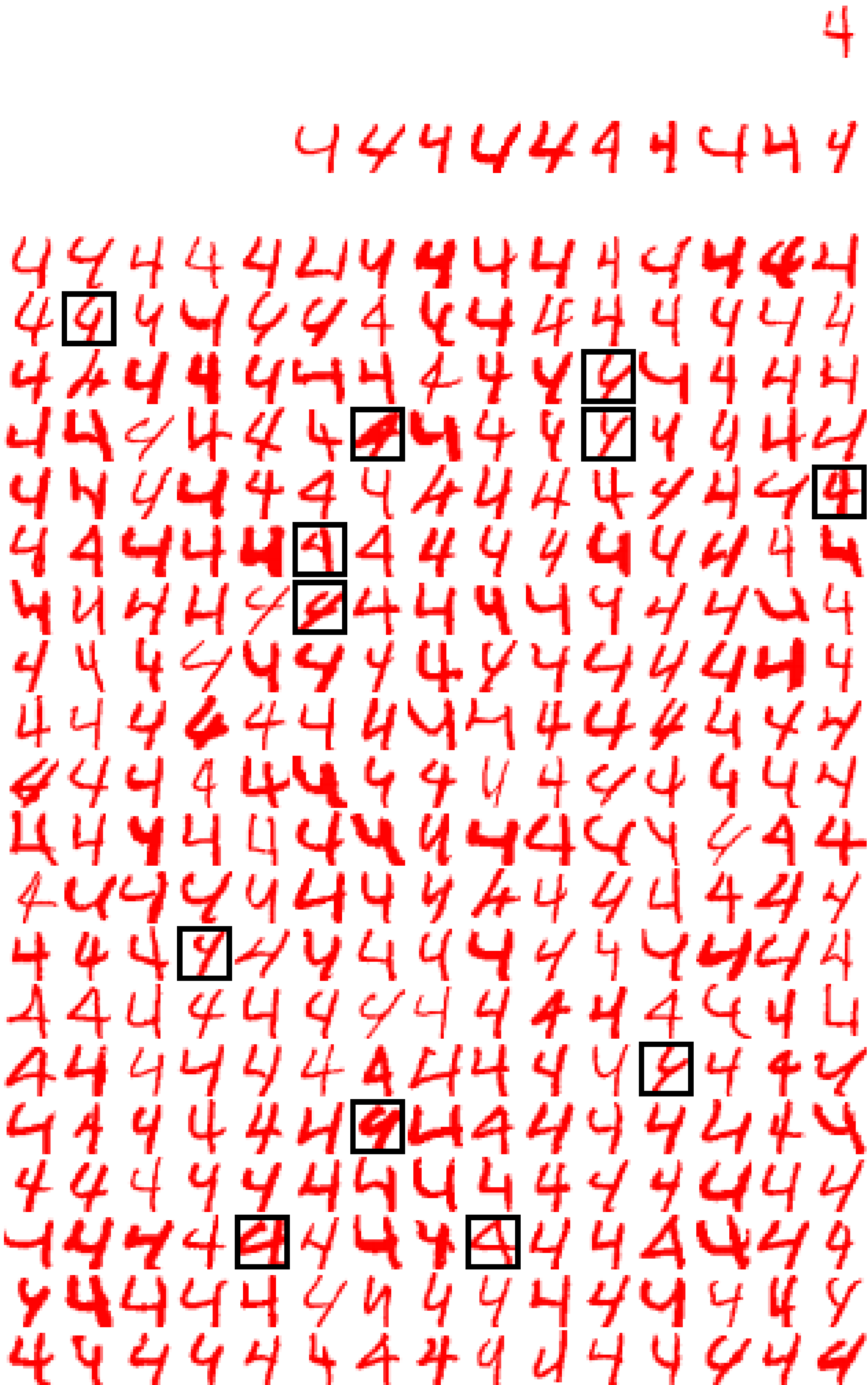}
\put(43,96.5){\scriptsize $s_+=\{$} 
\put(62,96.5){\scriptsize $\}$}
\put(-5,0){\scriptsize \begin{sideways} $\xleftarrow{\hspace*{1.5cm}}$ Test data $\xrightarrow{\hspace*{1.5cm}}$ \end{sideways}}
\put(6,88.5){\scriptsize $l_+=\{$}
\put(62,88.5){\scriptsize $\}$}
\end{overpic}
\end{minipage}
\begin{minipage}[t]{0.24\linewidth}
\centering
\vspace{0pt} 
\begin{overpic}[scale=0.3]
{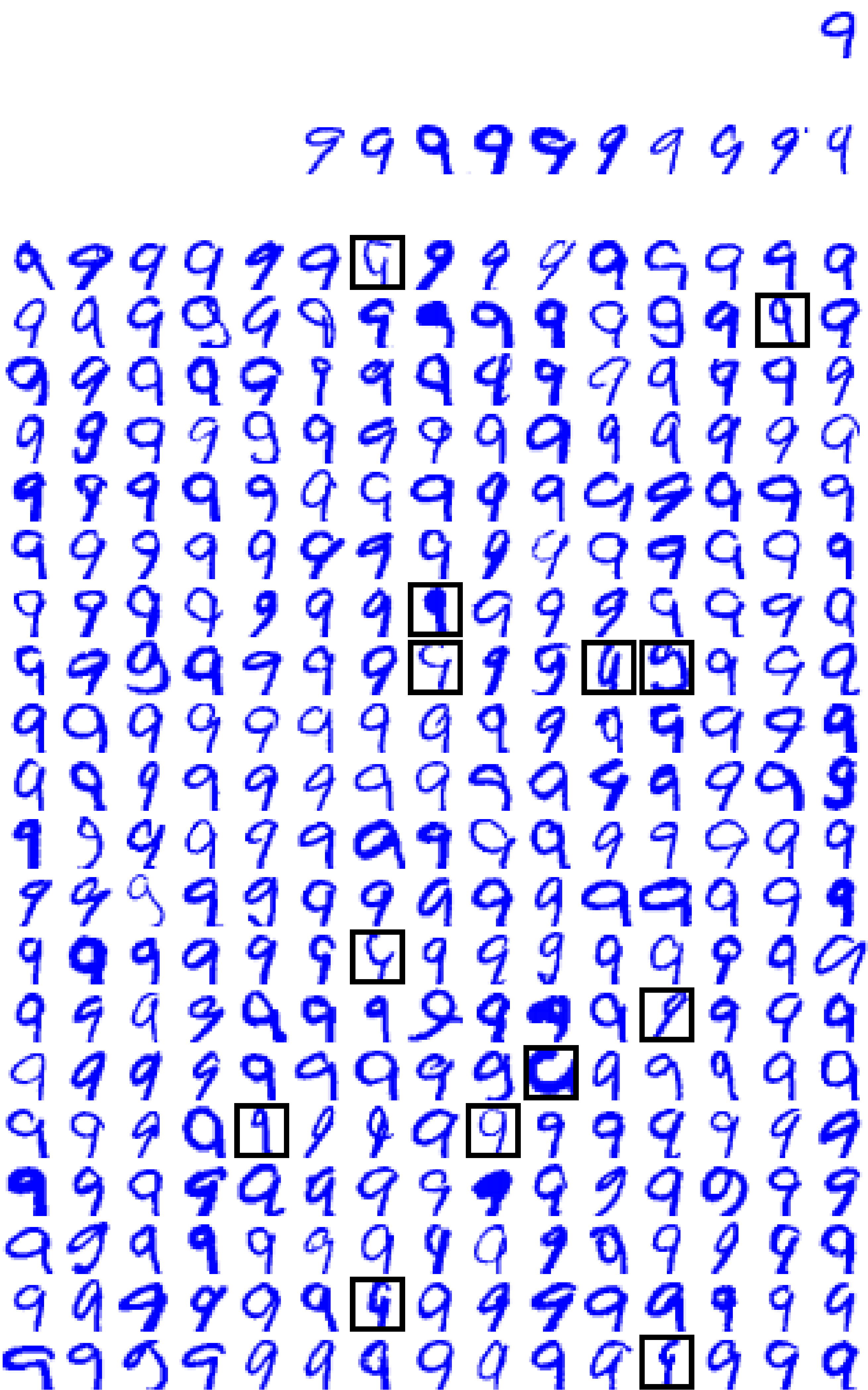}
\put(43,96.5){\scriptsize $s_-=\{$} 
\put(62,96.5){\scriptsize $\}$}
\put(6,88.5){\scriptsize $l_-=\{$}
\put(62,88.5){\scriptsize $\}$}
\end{overpic}
\end{minipage}
\begin{minipage}[t]{0.49\linewidth}
\centering
\vspace{0pt} 
\includegraphics[scale=0.38]{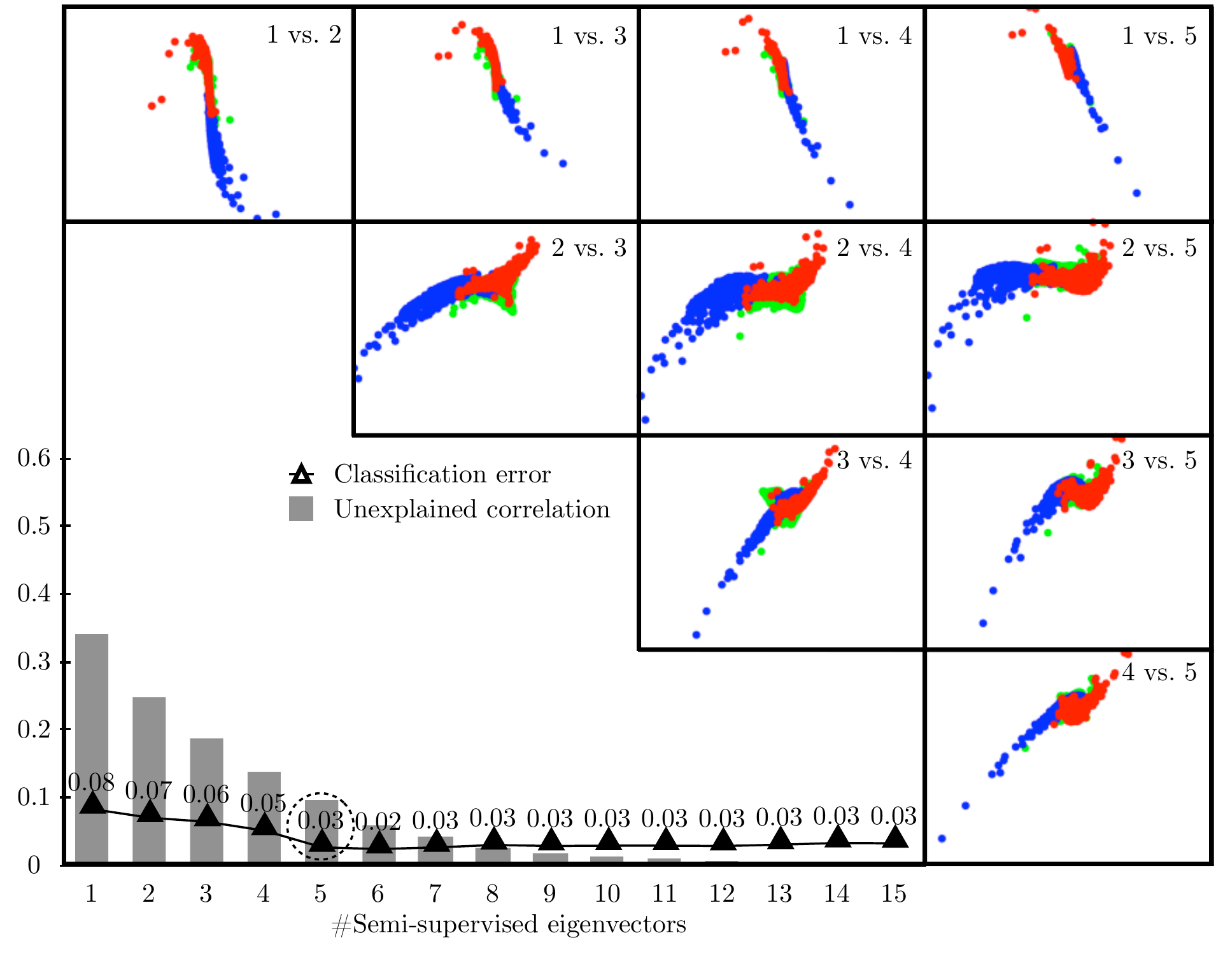}
\end{minipage}
\caption{Discrimination between $4$s and $9$s.  
Left: Shows a subset of the classification results for the SGT based on 5 semi-supervised eigenvectors seeded in $s_+$ and $s_-$, and trained using samples $l_+$ and $l_-$. Misclassifications are marked with black frames. 
Right: Visualizes all test data spanned by the first 5 semi-supervised eigenvectors, by plotting each component as a function of the others. Red (blue) points correspond to $4$ ($9$), whereas green points correspond to remaining digits. As the seed nodes are good representatives, we note that the eigenvectors provide a good class separation. We also plot the error as a function of local dimensionality, as well as the unexplained correlation, \emph{i.e.}, initial components explain the majority of the correlation with the seed (effect of $\gamma=0$). The particular realization based on the leading 5 semi-supervised eigenvectors yields an error of $\approx 0.03$ (dashed circle).}
\label{fig:good_seed} 
\end{figure*}

\begin{figure*}[hbt!]
\begin{minipage}[t]{0.24\linewidth}
\centering
\vspace{0pt} 
\begin{overpic}[scale=0.30]
{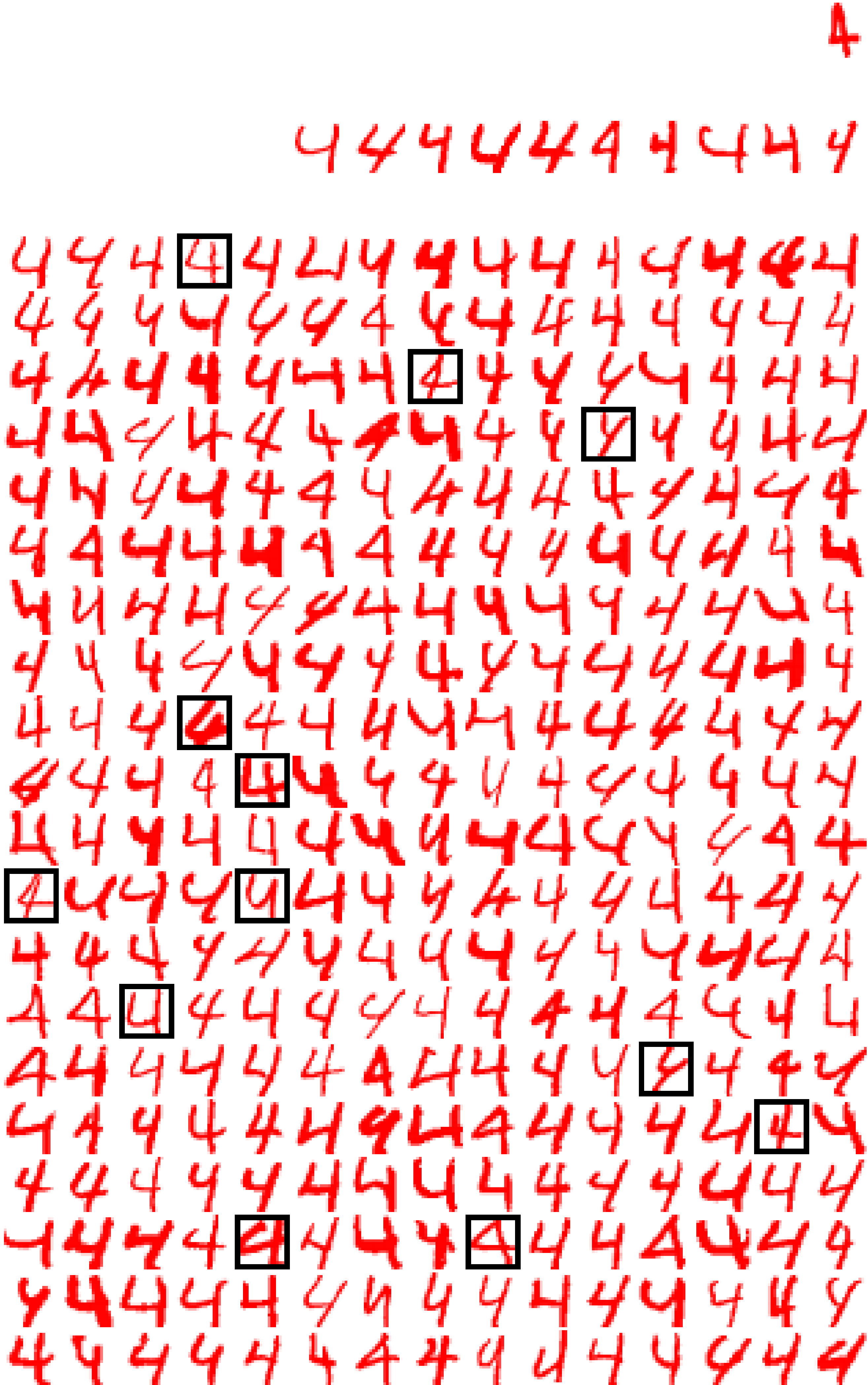}
\put(43,96.5){\scriptsize $s_+=\{$} 
\put(62,96.5){\scriptsize $\}$}
\put(-5,0){\scriptsize \begin{sideways} $\xleftarrow{\hspace*{1.5cm}}$ Test data $\xrightarrow{\hspace*{1.5cm}}$ \end{sideways}}
\put(6,88.5){\scriptsize $l_+=\{$}
\put(62,88.5){\scriptsize $\}$}
\end{overpic}
\end{minipage}
\begin{minipage}[t]{0.24\linewidth}
\centering
\vspace{0pt} 
\begin{overpic}[scale=0.30]
{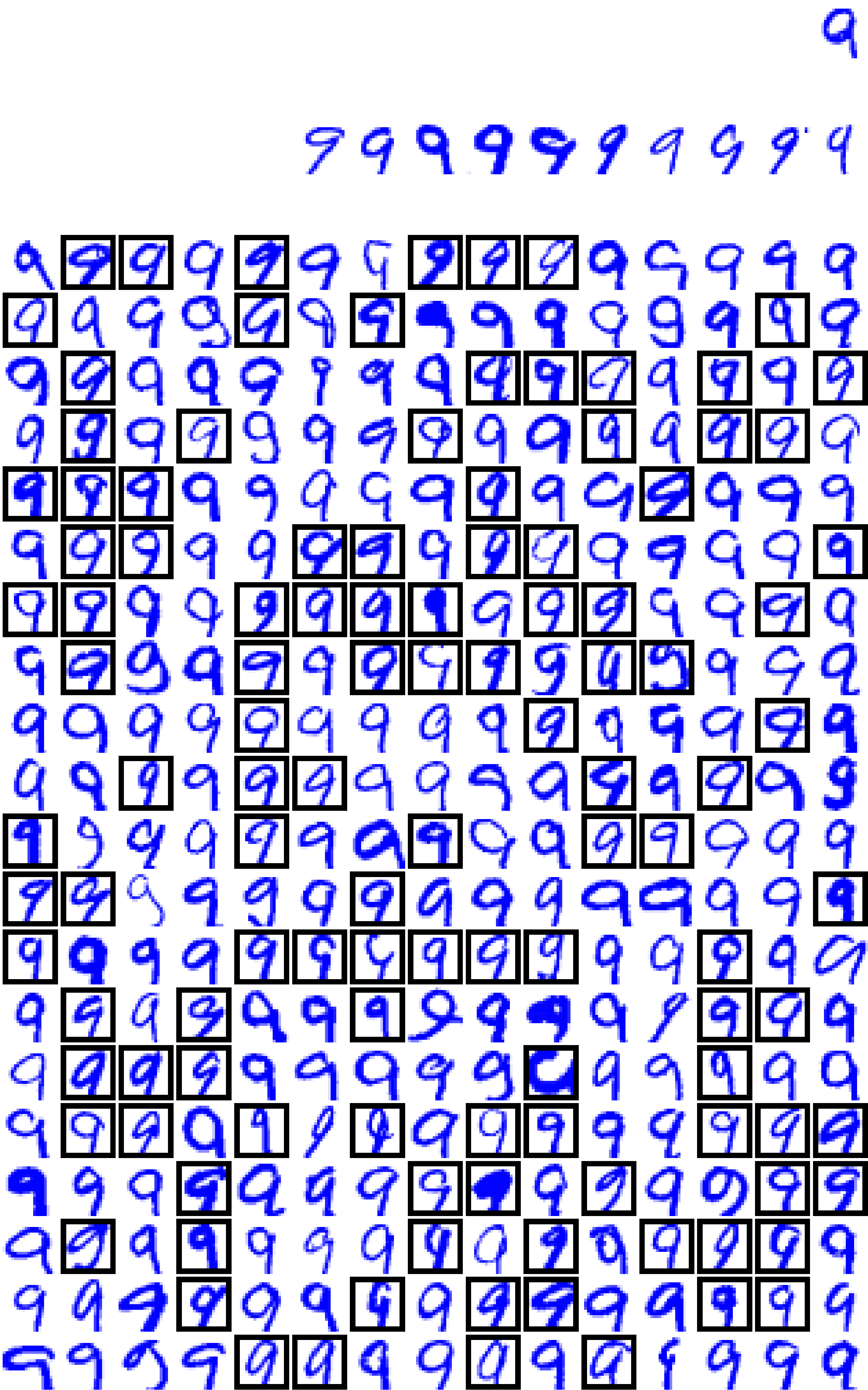}
\put(43,96.5){\scriptsize $s_-=\{$} 
\put(62,96.5){\scriptsize $\}$}
\put(6,88.5){\scriptsize $l_-=\{$}
\put(62,88.5){\scriptsize $\}$}
\end{overpic}
\end{minipage}
\begin{minipage}[t]{0.49\linewidth}
\centering
\vspace{0pt} 
\includegraphics[scale=0.38]{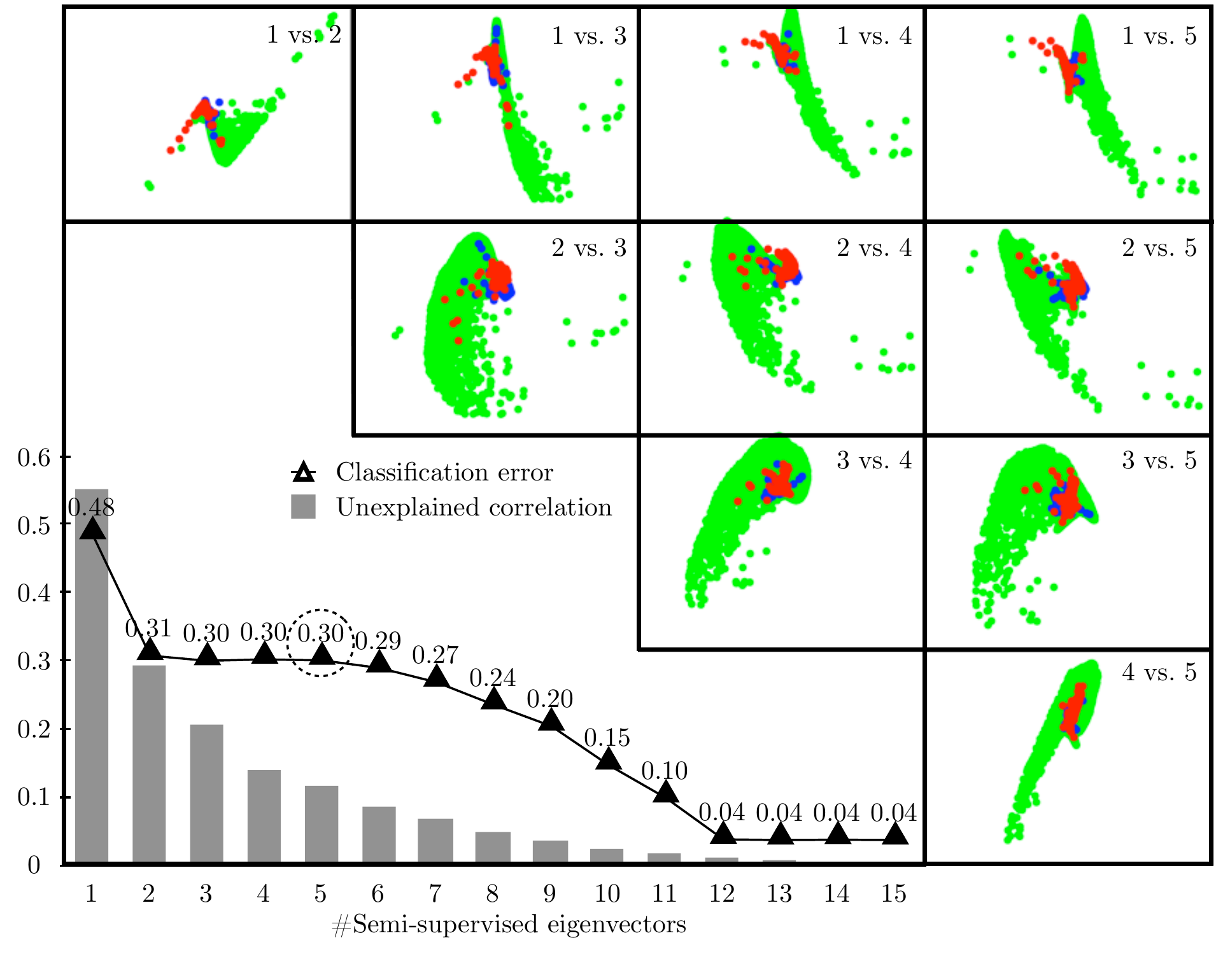}
\end{minipage}
\caption{Discrimination between $4$s and $9$s.  See the general description in Figure \ref{fig:good_seed}. Here we illustrate an instance where the $s_+$ shares many similarities with $s_-$, \emph{i.e.}, $s_+$ is on the boundary of the two classes. This particular realization achieves a classification error of $\approx 0.30$ (dashed circle). In this constellation we first discover localization on low order semi-supervised eigenvectors ($\approx 12$ eigenvectors), which is comparable to the error based on global eigenvectors (see Table \ref{tab:performance}), \emph{i.e.}, further down the spectrum we recover from the bad seed and pickup the relevant mode of~variation.}
\label{fig:bad_seed}
\end{figure*}

\subsubsection{Effect of choosing the $\kappa$ correlation/locality parameter}
\label{sec:kappa_impact}

Here, we discuss the effect of the choice of the correlation/locality 
parameter $\kappa$ at different steps of Algorithm~\ref{alg_new_1}, 
\emph{e.g.}, how $\{\kappa_t\}_{t=1}^{k}$ should be distributed among the 
$k$ components.
For example, will the downstream classifier benefit the most from a uniform 
distribution or will there exist some other nonuniform distribution that is 
better? 
Although this will be highly problem specific, one might hope that in 
realistic applications the classification performance is not too sensitive 
to the actual choice of distribution.  
To investigate the effect in our example of discriminating between $4$s and 
$9$s, we consider $3$ semi-supervised eigenvectors for various $\kappa$ 
distributions.
Our results are summarized in Figure~\ref{fig:dirichlet_kappa}.

Figures~\ref{fig:dirichlet_kappaA}, \ref{fig:dirichlet_kappaB}, 
and~\ref{fig:dirichlet_kappaC} show, for the global eigenvectors and for 
semi-supervised eigenvectors, where the $\kappa$ vector has been chosed to 
be very nonuniform and very uniform, the top three (global or 
semi-supervised) eigenvectors plotted against each other as well as the ROC 
curve for the SGT classifier discriminating between $4$s and $9$s; and 
Figure~\ref{fig:dirichlet_kappaD} shows the test error as the $\kappa$ vector 
is varied over the unit simplex.
In more detail, red (respectively, blue) corresponds to $4$s (respectively, 
$9$s), and green points are the remaining digits; and, for 
Figures~\ref{fig:dirichlet_kappaB} and~\ref{fig:dirichlet_kappaC}, the 
semi-supervised eigenvectors are seeded using 50 samples from each target 
class ($4$s vs. $9$s) and having a non-uniform distribution of $\kappa$, as 
specified. 
As seen from the visualization of the semi-supervised eigenvectors in 
Figures~\ref{fig:dirichlet_kappaB} and~\ref{fig:dirichlet_kappaC}, the 
classes are much better separated than by using the global eigenvectors, 
which are shown in Figure~\ref{fig:dirichlet_kappaA}.
For example, this is supported by the Area Under the Curve (AUC) and Error 
Rate (ERR), being the point on the Receiver Operating Characteristic (ROC) 
curve that corresponds to having an equal probability of miss-classifying a 
positive or negative sample, which is a fair estimate as the classes in the 
MNIST data set is fairly balanced. 
For Figure~\ref{fig:dirichlet_kappaC}, where we use a uniform distribution 
of $\kappa$, the classifier performs slightly better than in 
Figure~\ref{fig:dirichlet_kappaB}, which uses the non-uniform $\kappa$ 
distribution (but both semi-supervised approaches are significantly better 
than the using the global eigenvectors). 
For Figure~\ref{fig:dirichlet_kappaD}, we see the test error on the simplex 
defined by $\kappa$. 
To obtain this plot we sampled 500 different $\kappa$ distributions 
according to a uniform Dirichlet distribution.
With the exception of one extreme very nonuiform corner, the classification 
accuracy is not too sensitive to the choice of $\kappa$ distribution.
Thus, if we think of the semi-supervised eigenvectors as a 
locally-regularized version of the global eigenvectors, the desired 
discriminative properties are not too sensitive to the details of the 
locally-biased regularization.

%%% Figure~\ref{fig:dirichlet_kappa}
%%% Figure~\ref{fig:dirichlet_kappaA}
%%% Figure~\ref{fig:dirichlet_kappaB}
%%% Figure~\ref{fig:dirichlet_kappaC}
%%% Figure~\ref{fig:dirichlet_kappaD}

\begin{figure*}[hbt!]
%\centering
\subfigure[Global eigenvectors]{
  \begin{minipage}[b]{0.49\linewidth}
  \centering
  \includegraphics[scale=0.35]{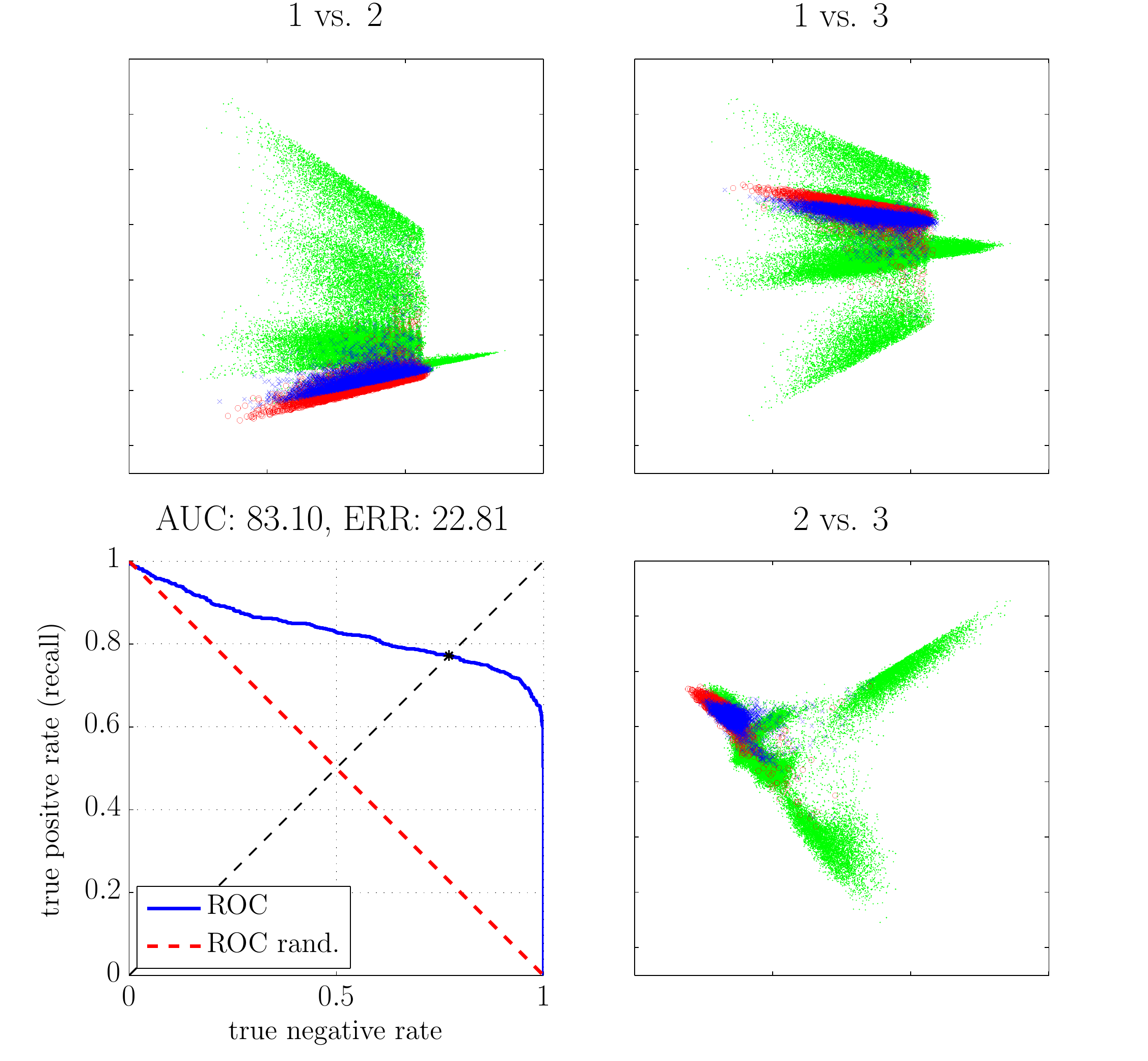}
  \end{minipage}
  \label{fig:dirichlet_kappaA}
}
\subfigure[
  Semi-supervised, with $\kappa^{(1)} = ( \frac{1}{10}, \frac{1}{10}, \frac{8}{10} )$
  % Semi-supervised,  \textcolor{white}{\contour{black}{$\kappa^{(1)}$}}  $=\left [\frac{1}{10}, \frac{1}{10}, \frac{8}{10}\right]$
  ]{
  \begin{minipage}[b]{0.49\linewidth}
  \centering
  \includegraphics[scale=0.35]{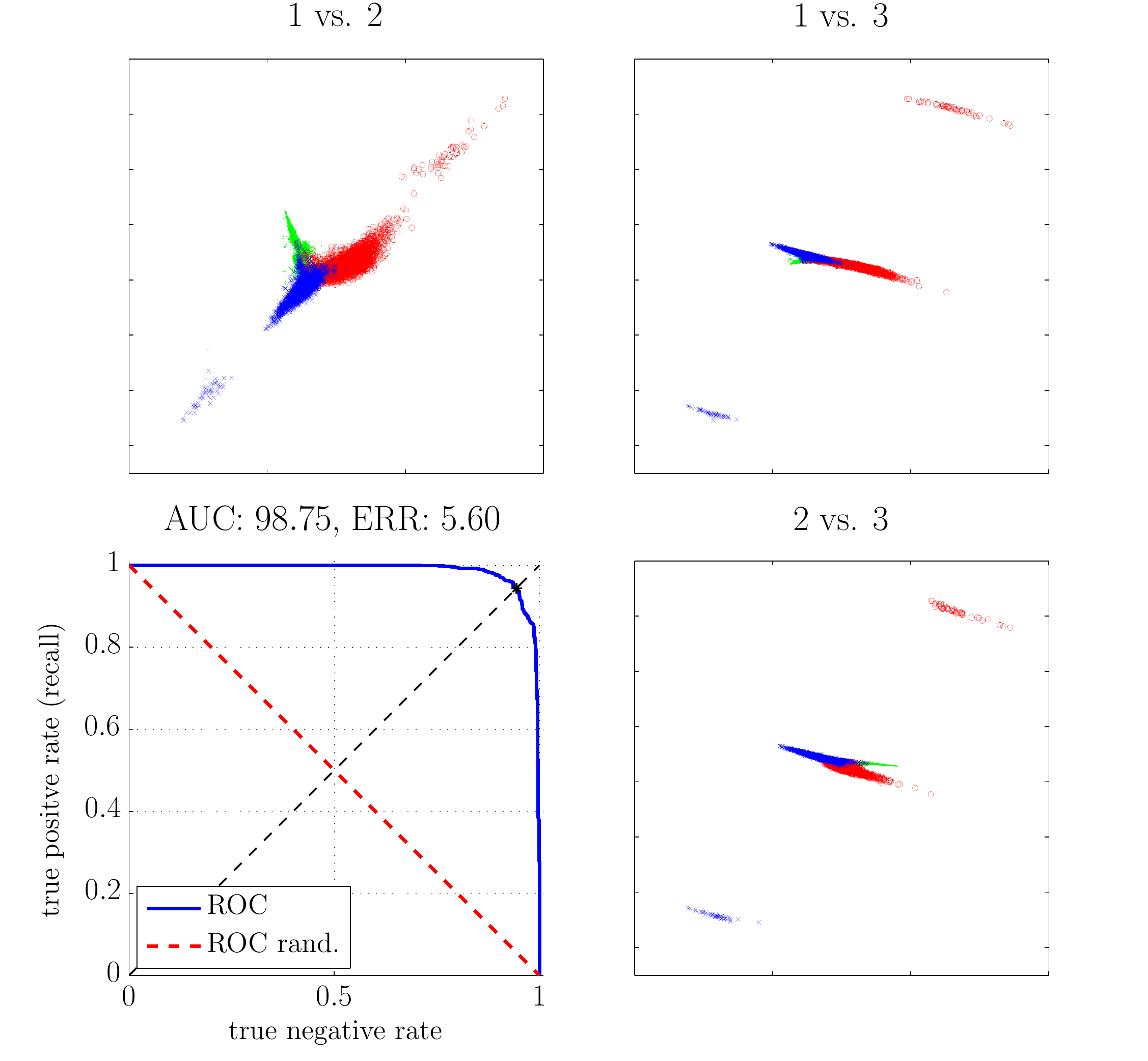}
  \end{minipage}
  \label{fig:dirichlet_kappaB}
}
\subfigure[
  Semi-supervised, with $\kappa^{(2)} = ( \frac{1}{3}, \frac{1}{3}, \frac{1}{3} )$
  % Semi-supervised, \textcolor{white}{\contour{black}{$\kappa^{(2)}$}} $=\left [\frac{1}{3}, \frac{1}{3}, \frac{1}{3}\right]$
  ]{
  \begin{minipage}[b]{0.49\linewidth}
  \centering
  \includegraphics[scale=0.35]{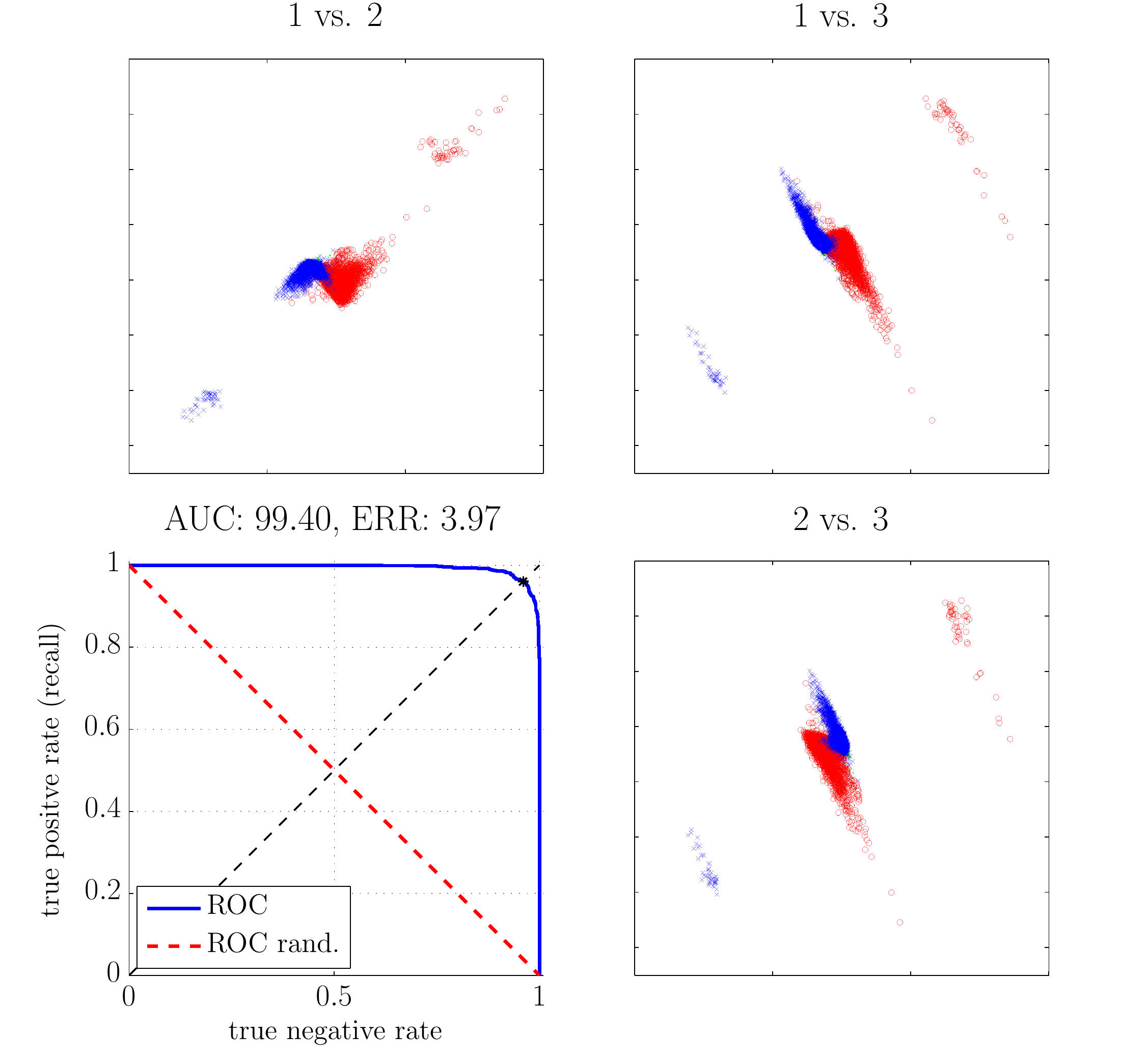}
  \end{minipage}
  \label{fig:dirichlet_kappaC}
}
\subfigure[Test error on the $\kappa$ simplex]{
  \begin{minipage}[b]{0.49\linewidth}
  \centering
  \includegraphics[scale=1]{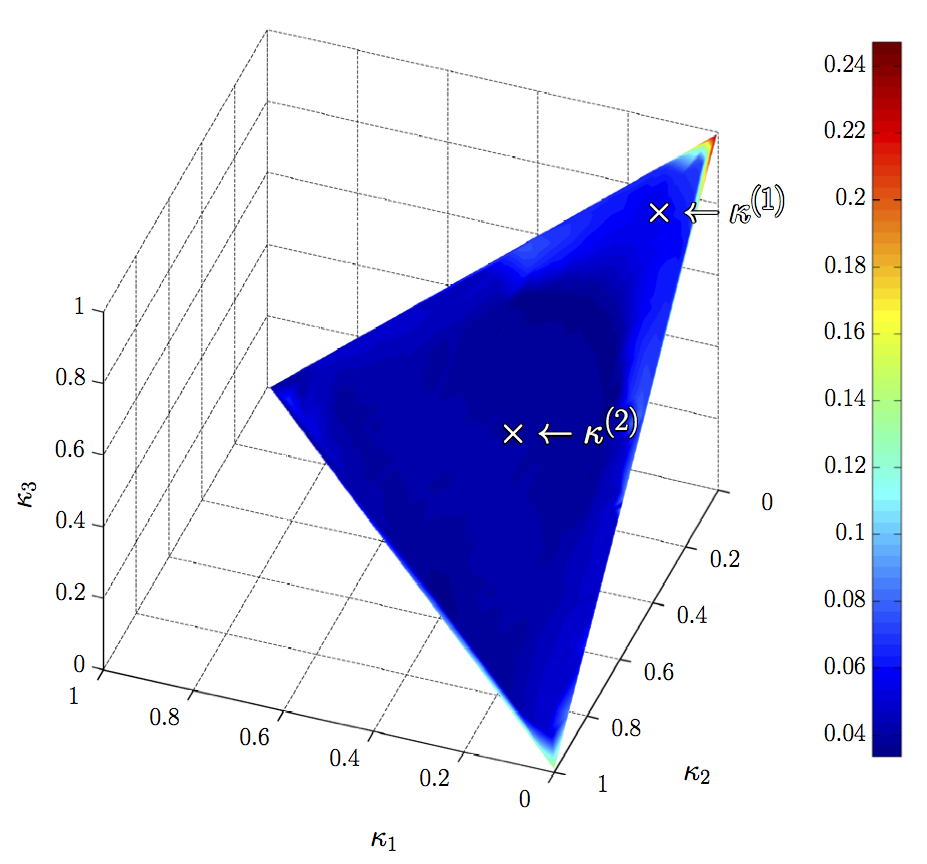}
  \end{minipage}
  \label{fig:dirichlet_kappaD}
}
\caption{The effect of varying the correlation/locality parameter $\kappa$ 
on the classification accuracy.
\ref{fig:dirichlet_kappaA}, \ref{fig:dirichlet_kappaB}, 
\ref{fig:dirichlet_kappaC} show the top three (global or semi-supervised)
eigenvectors plotted against each other as well as the ROC curve for the 
SGT classifier discriminating between $4$s and $9$s; and 
\ref{fig:dirichlet_kappaD} shows the test error as the $\kappa$ vector 
is varied over the unit simplex.
%XXX.  TOKE, FIX THE CAPTION IN THE SUBFIGURES HERE AND BELOW, E.G. ``1 vs. 2'' IS NOT CLEAR, AND CHANGE FIG D AS WE DISCUSSED, SINCE IT IS HARD TO READ---ALSO, TO SAVE SPACE MAKE THAT GREYSCALE.
}
\label{fig:dirichlet_kappa}
\end{figure*}

\subsubsection{Effect of approximately computing semi-supervised eigenvectors}\label{sec:mnistpeeling}

Here, we discuss of the push-peeling procedure from 
Section \ref{sxn:main-alg-extensions} that is designed to compute efficient 
approximations to the semi-supervised eigenvectors by using local random 
walks to compute an approximation to personalized PageRank vectors. 
%%% We emphasize that this approximation technique is most useful for problems 
%%% that are orders of magnitude larger than the MNIST data set, but evaluating 
%%% the approach on this well-studied data set will allow us to demonstrate 
%%% valuable insights regarding the approximation.
Consider Figure~\ref{fig:peeling}, which shows results for two values of 
the $\epsilon$ parameter (\emph{i.e.}, the parameter in the push algorithm 
that implicitly determines how many nodes will be touched).  
Again we construct the full $70,000\times 70,000$ $k$-NN graph, with $k=10$ and 
with edge weights given by 
$w_{ij}=\exp({-\frac{4}{\sigma_i^2}\| x_i- x_j\|^2})$, where $\sigma_i^2$ 
is the Euclidian distance of the $i^{th}$ node to it's nearest neighbor; and 
from this we define the graph Laplacian in the usual way.
Using this representation we compute $3$ semi-supervised eigenvectors 
seeding using 50 samples from each class ($4$s vs. $9$s). 
However, in this case, we fix the regularization parameter vector as 
$\gamma=[-0.0150,-0.0093,-\text{vol}(G)]$; and note that choosing these specific values correspond to the solutions visualized in Figure \ref{fig:dirichlet_kappaC} when the 
equations are solved exactly. 
Figure~\ref{fig:peelingA} shows the results for $\epsilon=0.001$. 
This approximation gives us sparse solutions, and the histogram in the 
second row illustrates the digits that are assigned a nonzero value in the 
respective semi-supervised eigenvector.
In particular, note that most of the mass of the eigenvector is distributed 
on $4$s and $9$s; but, for this choose of $\epsilon$, only few digits of 
interest ($\approx2.8243\%$, meaning, in particular, that not all of the $4$s
and $9$s) have been touched by the algorithm.
This results in the lack of a clean separation between the two classes as 
one sweeps along the leading semi-supervised eigenvector, as illustrated in 
the first row; the very uniform correlation distribution 
$\kappa=[0.8789,0.0118,0.1093]$; and the high classification error, as shown 
in the ROC curve in the bottom panel. 

Consider, next, Figure~\ref{fig:peelingB}, which shows the results for 
$\epsilon=0.0001$, \emph{i.e.}, where the locality parameter $\epsilon$ has 
been reduced by an order of magnitude.
In this case, the algorithm reproduces the solution by touching only 
$\approx 25.177 \%$ of the nodes in the graph, \emph{i.e.}, basically all of 
the $4$s and $9$s and only a few other digits.
This leads to a much cleaner separation between the two classes as one 
sweeps over the leading semi-supervised eigenvector; a much more uniform 
distribution over $\kappa$; and a classification accuracy that is much 
better and is similar to what we saw in Figure~\ref{fig:dirichlet_kappaC}. 
This example illustrates that this push-peeling approximation provides a 
principled manner to generalize the concept of semi-supervised eigenvectors 
to large-scale settings, where it will be infeasible to touch all nodes of 
the graph.

\begin{figure*}[hbt!]
\subfigure[Locality parameter $\epsilon=0.001$]{
\begin{minipage}[b]{0.49\linewidth}
\centering
Test data sorted along $x_1$.
\includegraphics[scale=0.35]{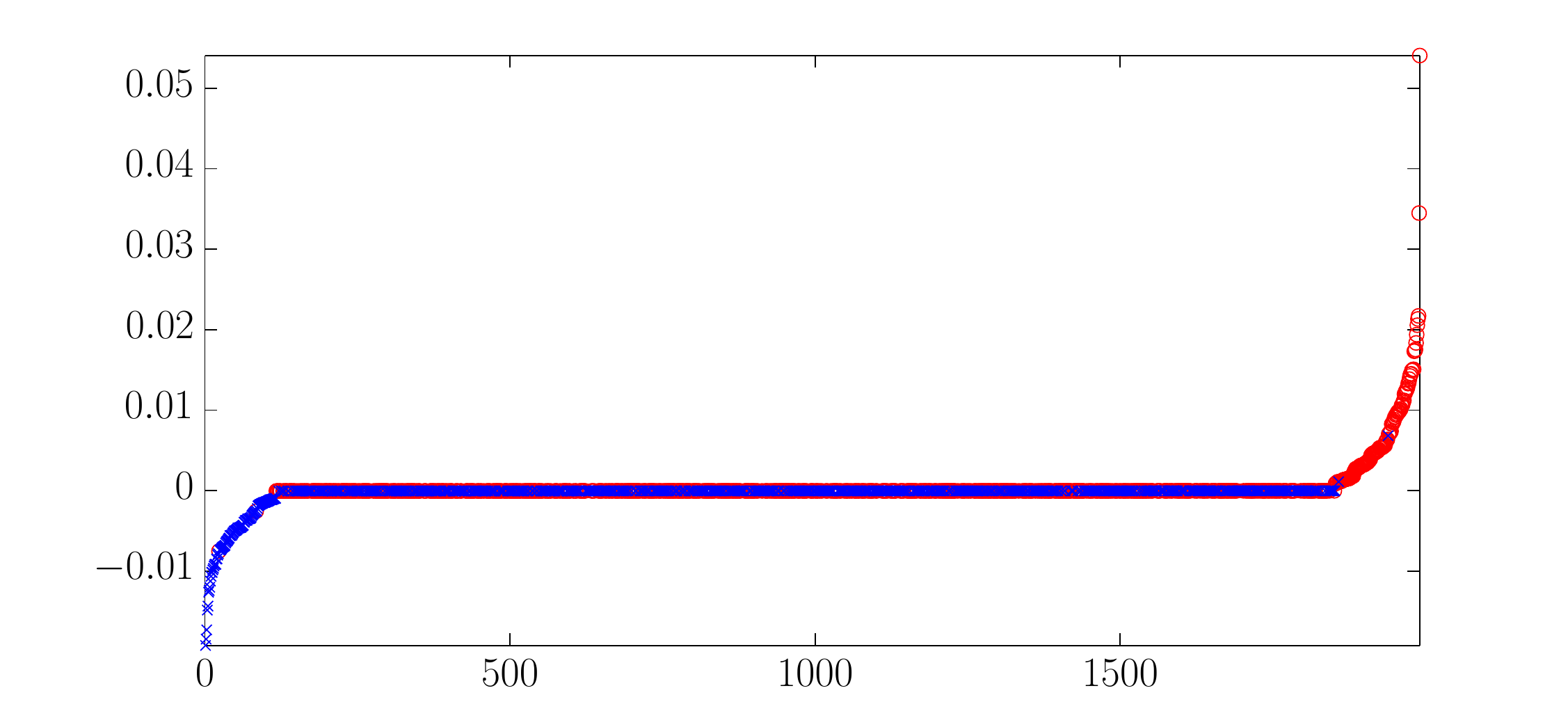}
Touched $\approx 2.824 \%$ of nodes. 
\includegraphics[scale=0.35]{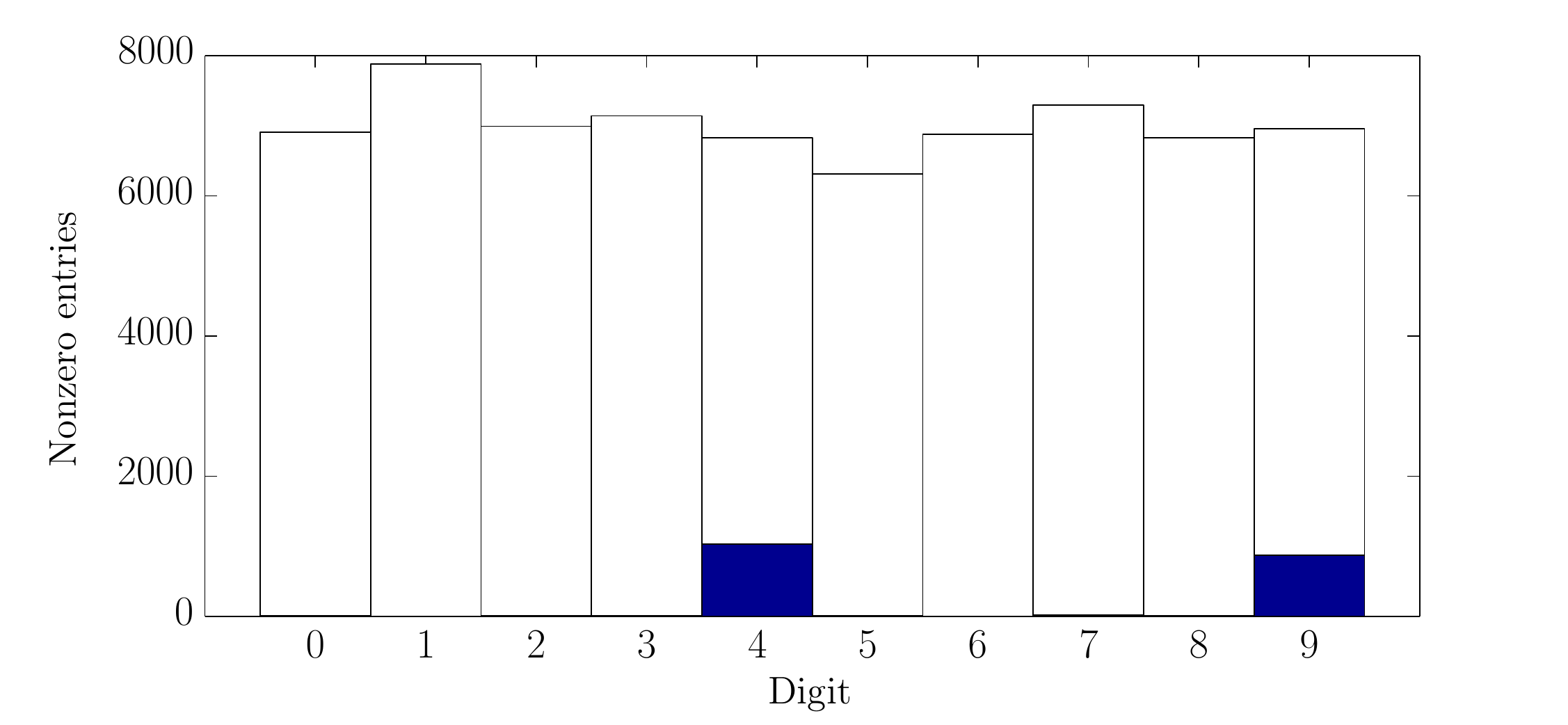}
$\kappa=[0.8789,0.0118,0.1093]$
\includegraphics[scale=0.35]{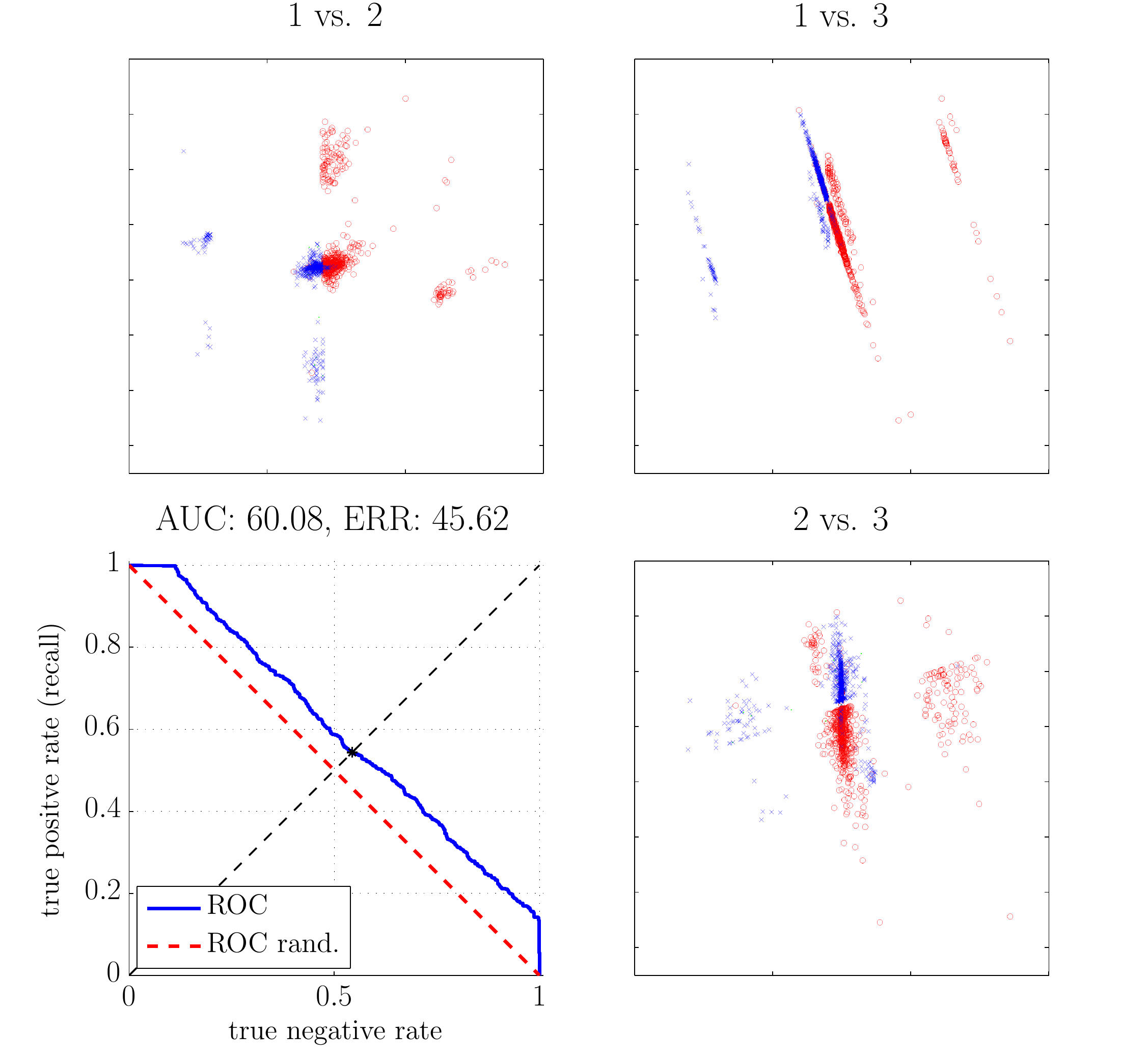}
\end{minipage}
\label{fig:peelingA}
}
\subfigure[Locality parameter $\epsilon=0.0001$]{
\begin{minipage}[b]{0.49\linewidth}
\centering
Test data sorted along $x_1$.
%Semi-supervised,  \textcolor{white}{\contour{black}{$\kappa^{(1)}$}}  $=\left [\frac{1}{10}, \frac{1}{10}, \frac{8}{10}\right]$
\includegraphics[scale=0.35]{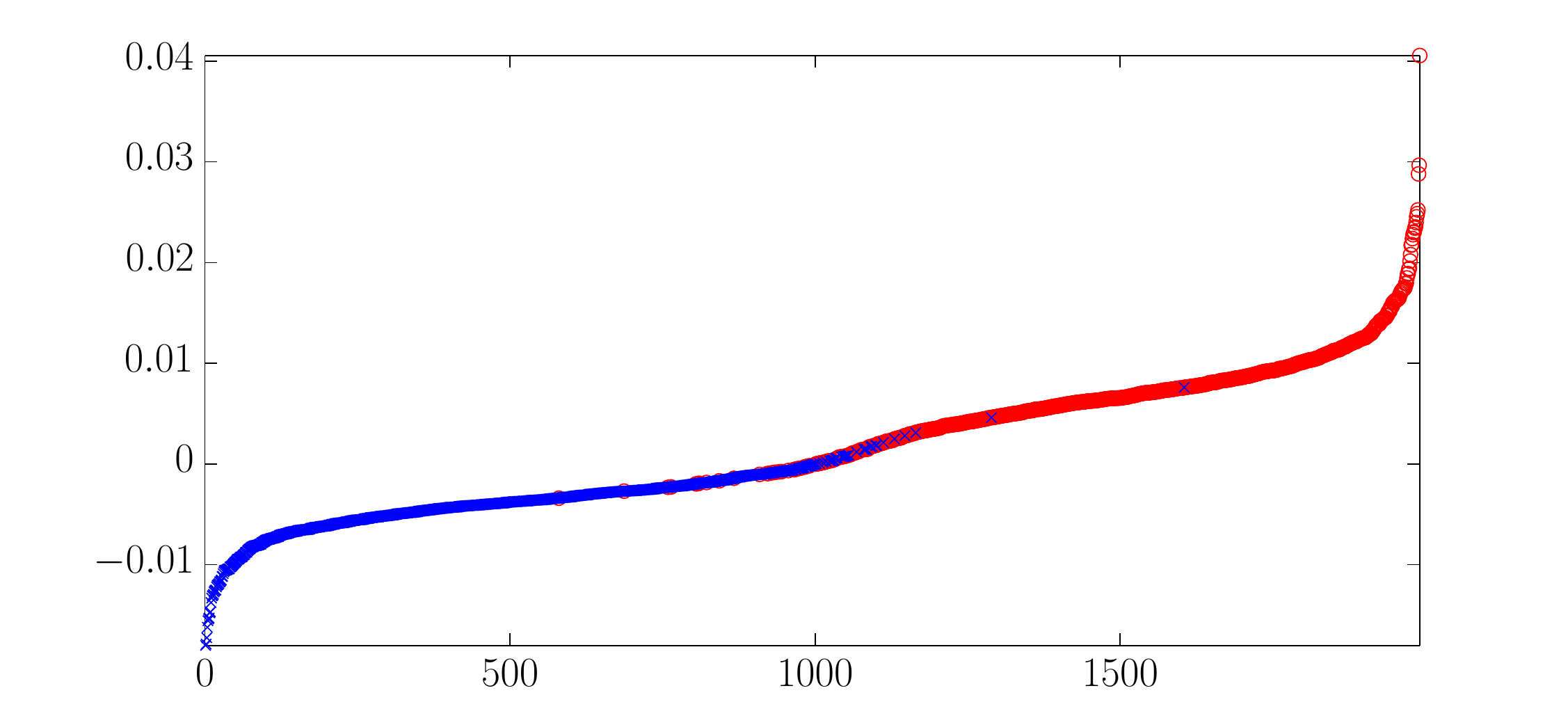}
Touched $\approx 25.177 \%$ of nodes. 
\includegraphics[scale=0.35]{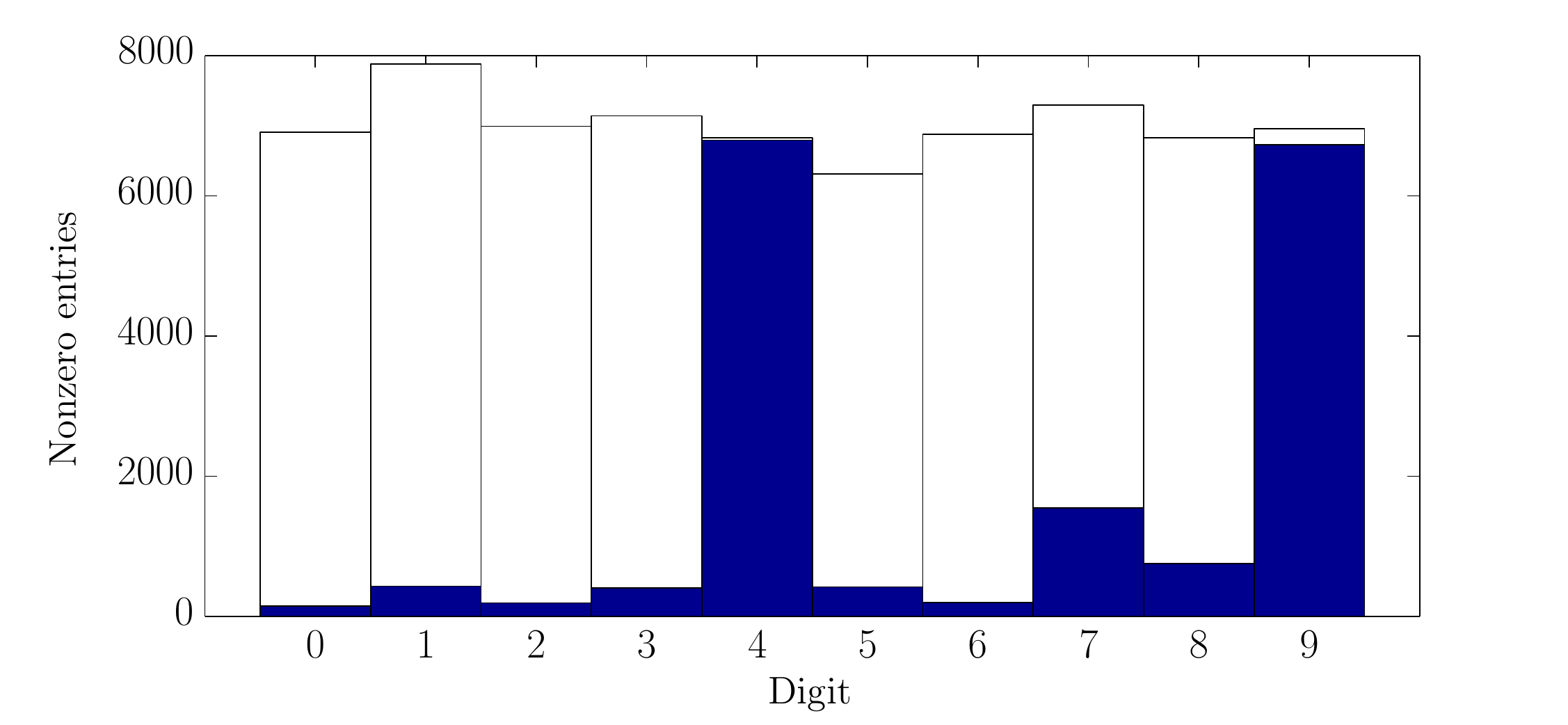}
$\kappa=[0.3333,0.3334,0.3333]$
\includegraphics[scale=0.35]{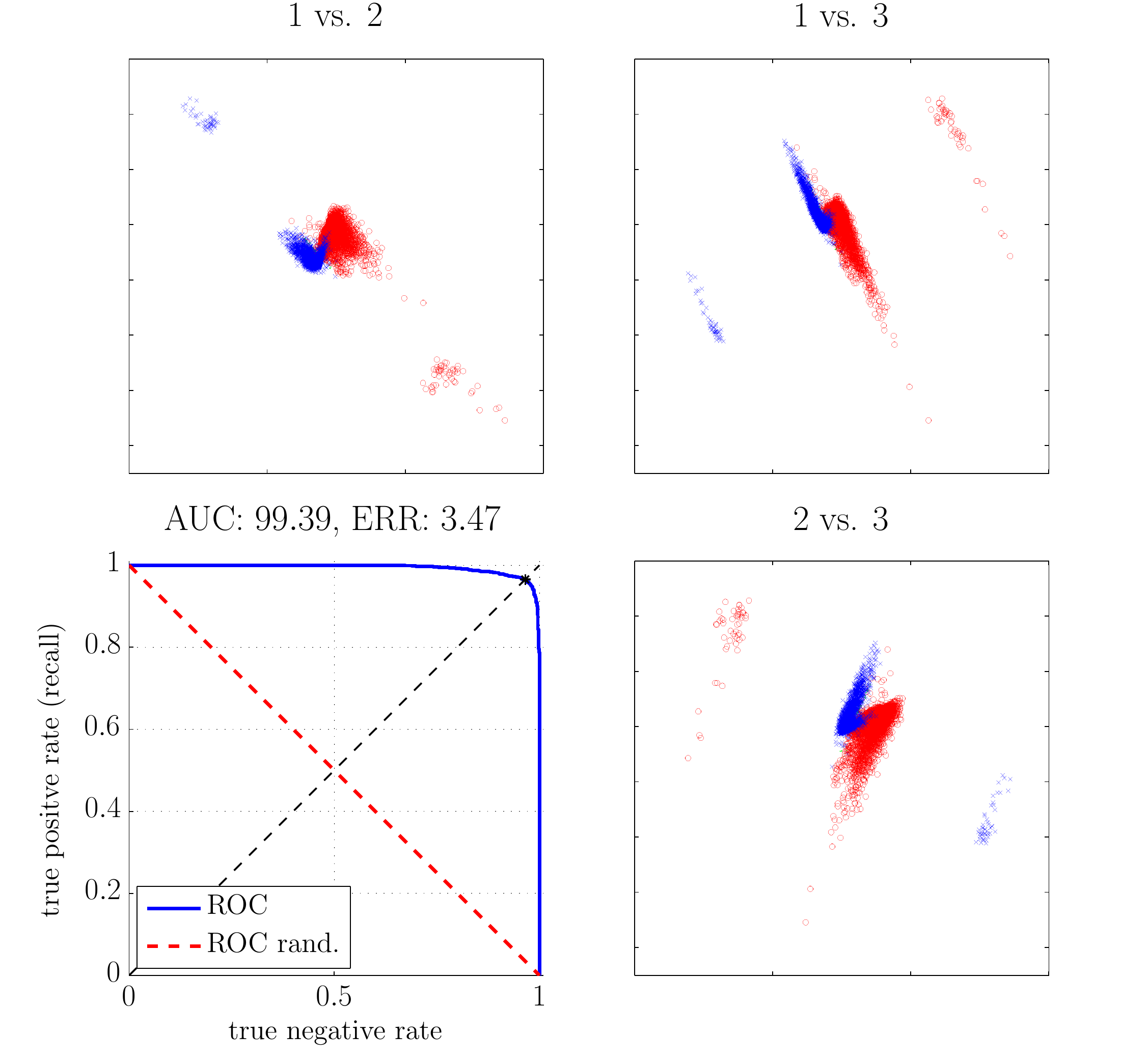}
\end{minipage}
\label{fig:peelingB}
}
\caption{Illustration of the push-peeling procedure to compute $3$ 
semi-supervised eigenvectors for $\gamma=[-0.0150,-0.0093,-\text{vol}(G) ]$. 
\ref{fig:peelingA} shows results for $\epsilon=0.001$; and
\ref{fig:peelingB} shows results for $\epsilon=0.0001$. 
First row shows the entries in the leading semi-supervised eigenvector 
corresponding to test points, color-coded and sorted according to magnitude; 
second row shows the distribution of digits touched in the full graph when 
executing the push algorithm; and
bottom panels provide visualizations similar to the ones in 
Figure~\ref{fig:dirichlet_kappa} (and shown above these is the correlation 
vector $\kappa$ obtained for the fixed choice of $\gamma$.
}
\label{fig:peeling}
\end{figure*}

\subsubsection{Effect of low-rank Nystr\"{o}m approximation}
Here we discuss the use of the low-rank Nystr\"{o}m approximation which is commonly used in large-scale kernel-based machine learning. The memory requirements for representing the explicit kernel matrix, that we here take to be our graph, scales with  
 $\mathcal{O}(N^2)$, whereas inverting the matrix scales with $\mathcal{O}(N^3)$, which, in large-scale settings, is infeasible. The Nystr\"{o}m technique subsamples the dataset to approximate the kernel matrix, and the memory requirements scales with $\mathcal{O}(nN)$ and runs in $\mathcal{O}(n^2N)$, where $n$ is size of the subsample.
For completeness we include the derivation of the Nystr{\"{o}}m approximation for the normalized graph Laplacian in Appendix \ref{app:nystrom}. 

In the beginning of Section \ref{sxn:empirical-digits} we constructed the $70,000\times 70,000$ $k$-nearest neighbor graph, with $k=10$ and 
with edge weights given by $w_{ij}=\exp({-\frac{4}{\sigma_i^2}\| x_i- x_j\|^2})$. Such a sparse construction reduces the effect of ``hubs'', as well as being fairly insensitive to the choice of kernel parameter, as the 10 nearest neighbors are likely to be very close in the Euclidian norm. Because the Nystr{\"{o}}m method will approximate the dense kernel matrix, the choice of kernel parameter is more important, so in the following we will consider the interplay between this parameter, as well as the rank parameter $n$ of the Nystr{\"{o}}m approximation.
Moreover, to allow us to compare a rank-$n$ Nystr{\"{o}}m approximation with the full rank-$N$ kernel matrix, we choose to subsample the dataset for all of the following experiments, due to the  $\mathcal{O}(N^2)$ memory requirements.
% for the  we chose to subsample the dataset, which allows us to represent the exact dense kernel matrix for that particular subsample, and use that as a baseline for the quality of respective low-rank approximations.
%We emphasize in the following we only consider $10\%$ of the training and test data, which allows us to represent the exact dense kernel matrix for that subsampled dataset, \emph{i.e.}, when we refer to the full dense kernel matrix, we refer to the full dense kernel matrix for  $10\%$ of the dataset.
Thus, to provide a baseline, Figure \ref{fig:subsample} shows results based 
on a $k$-nearest neighbor graph constructed from $5\%$ and $10\%$ percent of the 
training data, where in both cases we used $10\%$ for the test data.
For both cases, when compared with the results of 
Figure \ref{fig:dirichlet_kappaC}, the classification quality is degraded, and so we emphasize that the goal of the following results are not to outperform the results reported in Figure \ref{fig:dirichlet_kappaC}, but to be comparable with this baseline.

\begin{figure*}[hbt!]
\subfigure[$5\%$ of MNIST]{
\begin{minipage}[b]{0.49\linewidth}
\centering
\includegraphics[scale=0.35]{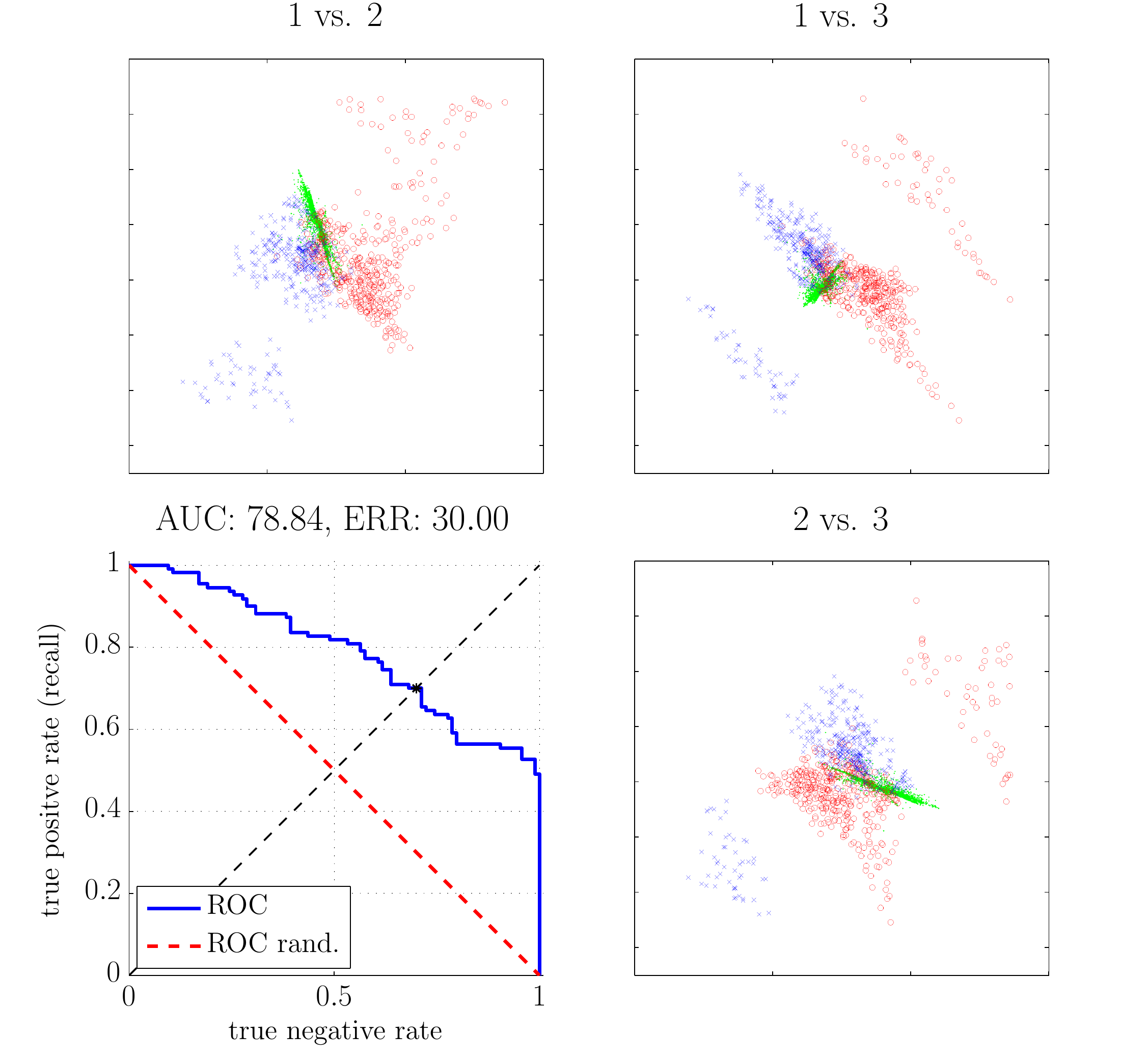}
\end{minipage}
\label{fig:subsampleA}
}
\subfigure[$10\%$ of MNIST]{
\begin{minipage}[b]{0.49\linewidth}
\centering
\includegraphics[scale=0.35]{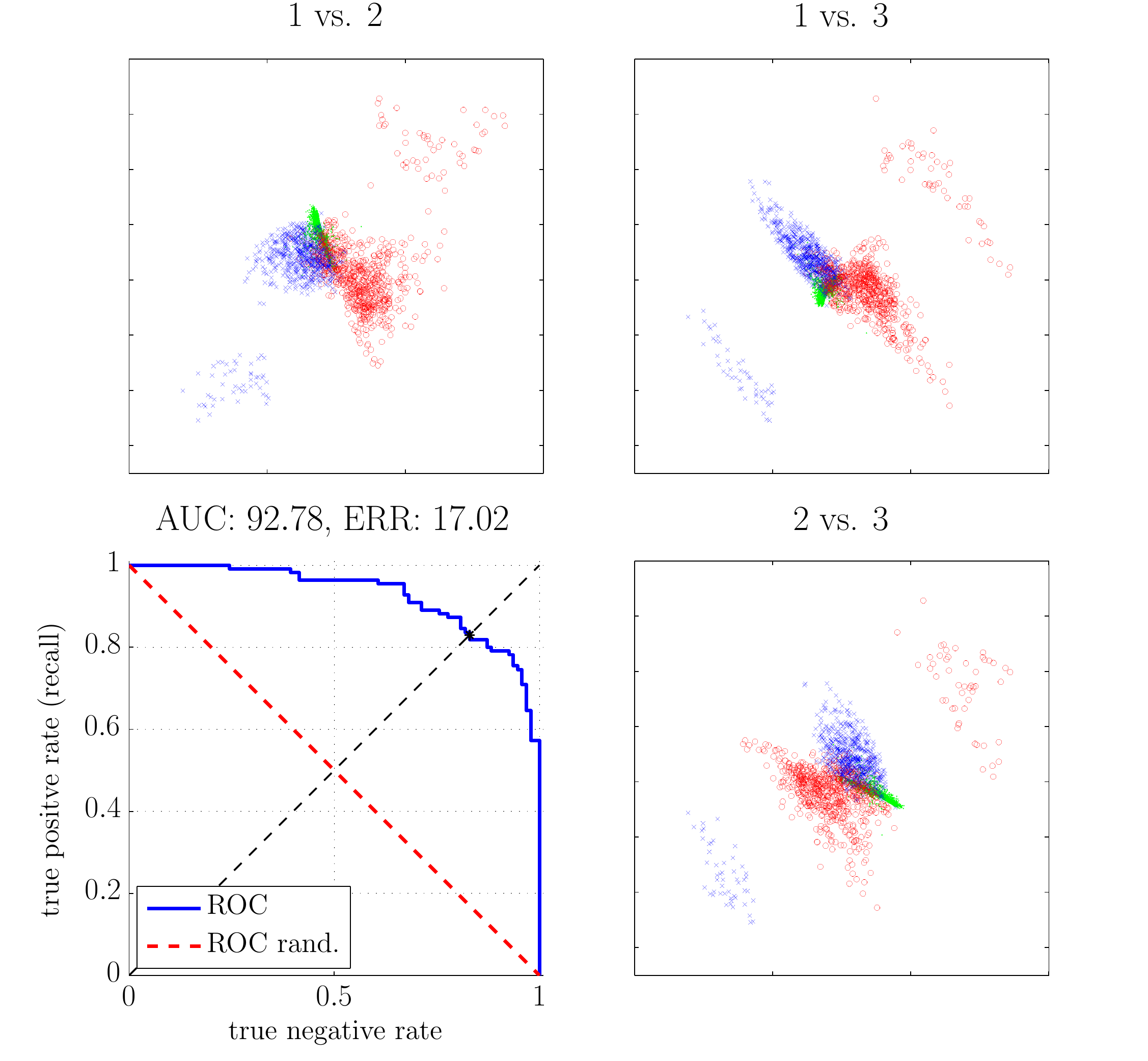}
\end{minipage}
\label{fig:subsampleB}
}
\caption{Example of the impact of subsampling the data set down to 
$5\%$ (in \ref{fig:subsampleA}) and $10\%$ (in \ref{fig:subsampleB}) of 
the original size. 
Remaining parameters are the same as in Figure \ref{fig:dirichlet_kappaC},
which shows the result to which these two plots should be compared.
}
\label{fig:subsample}
\end{figure*}

In light of this baseline, Figure~\ref{fig:subsample_thorough} provides a thorough analysis for 
the choices of $\sigma_i^2$ that we used.
Figures~\ref{fig:subsample_thoroughA} and~\ref{fig:subsample_thoroughB} show
the classification error when using the global eigenvectors, for various rank 
approximations based on the Nystr\"{o}m method as well as the exact method
(corresponding to $\text{rank}=n$). 
Interestingly, these two plots are very dissimilar in terms of their
behavior as a function of the number of components.
In particular, the plot in Figure~\ref{fig:subsample_thoroughB} shows that the 
low rank approximations for a given set of 
components outperform the high rank approximations, and the exact representation 
fails to reduce the error beyond $0.4$ for any of the 
considered set of components. This may seem counterintuitive,
but the reason for this type of behavior is that the relevant global eigenvectors, 
for $\sigma_i^2=200$, are located far from the end of the spectrum (if we visualized 
more components for $\text{rank}=n$ the classification error would
 eventually drop). 
For the same reason, the low rank approximations improve more rapidly than 
the high rank approximations, as the latter approximate the lower part of the 
spectrum better, and these turn out to have poor discriminative properties.
In contrast, the results shown in Figure~\ref{fig:subsample_thoroughA} 
provide good class separation in the lower part of the spectrum, resulting 
in the high rank approximations to reduce the error most rapidly. 

Finally, Figures~\ref{fig:subsample_thoroughC} 
and~\ref{fig:subsample_thoroughD} show the classification error for the 
SGT trained using the semi-supervised eigenvectors.
(Note that the scale of the x-axis is much smaller in these subfigures.)
%%% that provide a rotation in the space spanned by the Nystr\"{o}m 
%%% approximation of the global eigenvectors of the similarity matrix. 
For both kernel widths (in both Figures~\ref{fig:subsample_thoroughC} 
and~\ref{fig:subsample_thoroughD}), the ordering of the approximations are 
similar, \emph{i.e.}, the semi-supervised eigenvectors constructed from the 
$\text{rank}=n$ approximation performs the best. 
Moreover, the gap between the $\text{rank}=400$ and $\text{rank}=n$ is 
largest for $\sigma_i^2=200$, again suggesting this approximation is of 
insufficient rank to model the relevant local heterogeneities deep down in 
the spectrum; whereas for $\sigma_i^2=80$, the $\text{rank}=400$ the 
approximation comes very close to the exact  representation, suggesting 
that local structures are well modeled near the end of the spectrum. 

To summarize these results, the method of semi-supervised eigenvectors 
successfully extracts relevant local structures to perform locally-biased 
classification, even when they are located far from the end of the spectrum.
Moreover, in both cases we considered, the classification error is reduced 
significantly by using only a few locally-biased components.
This contrasts with the global eigenvectors, where for $\sigma_i^2=80$ at 
least $20$ eigenvectors are needed in order to obtain similar performance; 
and for $\sigma_i^2=200$, the classification error remains high even for 
$200$ eigenvectors in case of $\text{rank}=n$. 
%Finally, we note that if we had considered $4$ semi-supervised eigenvectors 
%in in Figure \ref{fig:subsampleA}, the classification accuracy would improve 
%significantly and drop to $~0.15$, as seen in 
%Figure~\ref{fig:subsample_thoroughC}.

\begin{figure*}[hbt!]
\subfigure[Global eigenvectors, $\sigma_i^2=80$]{
%Global eigenvectors - $\sigma_i^2=80$
\begin{minipage}[b]{0.49\linewidth}
\centering
\includegraphics[scale=0.4]{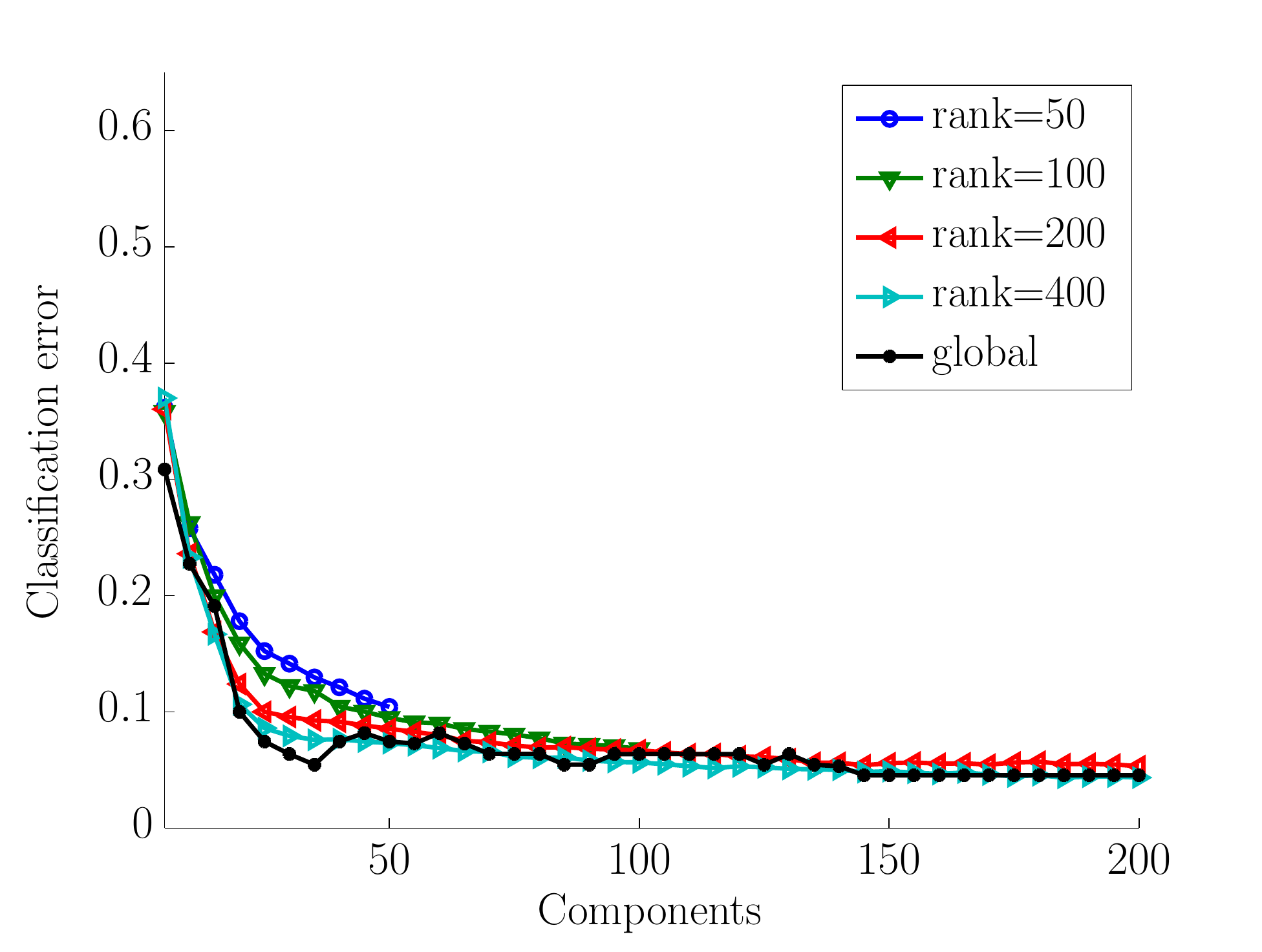}\\
\end{minipage}
\label{fig:subsample_thoroughA}
}
\subfigure[Global eigenvectors, $\sigma_i^2=200$]{
%Global eigenvectors - $\sigma_i^2=200$
\begin{minipage}[b]{0.49\linewidth}
\centering
\includegraphics[scale=0.4]{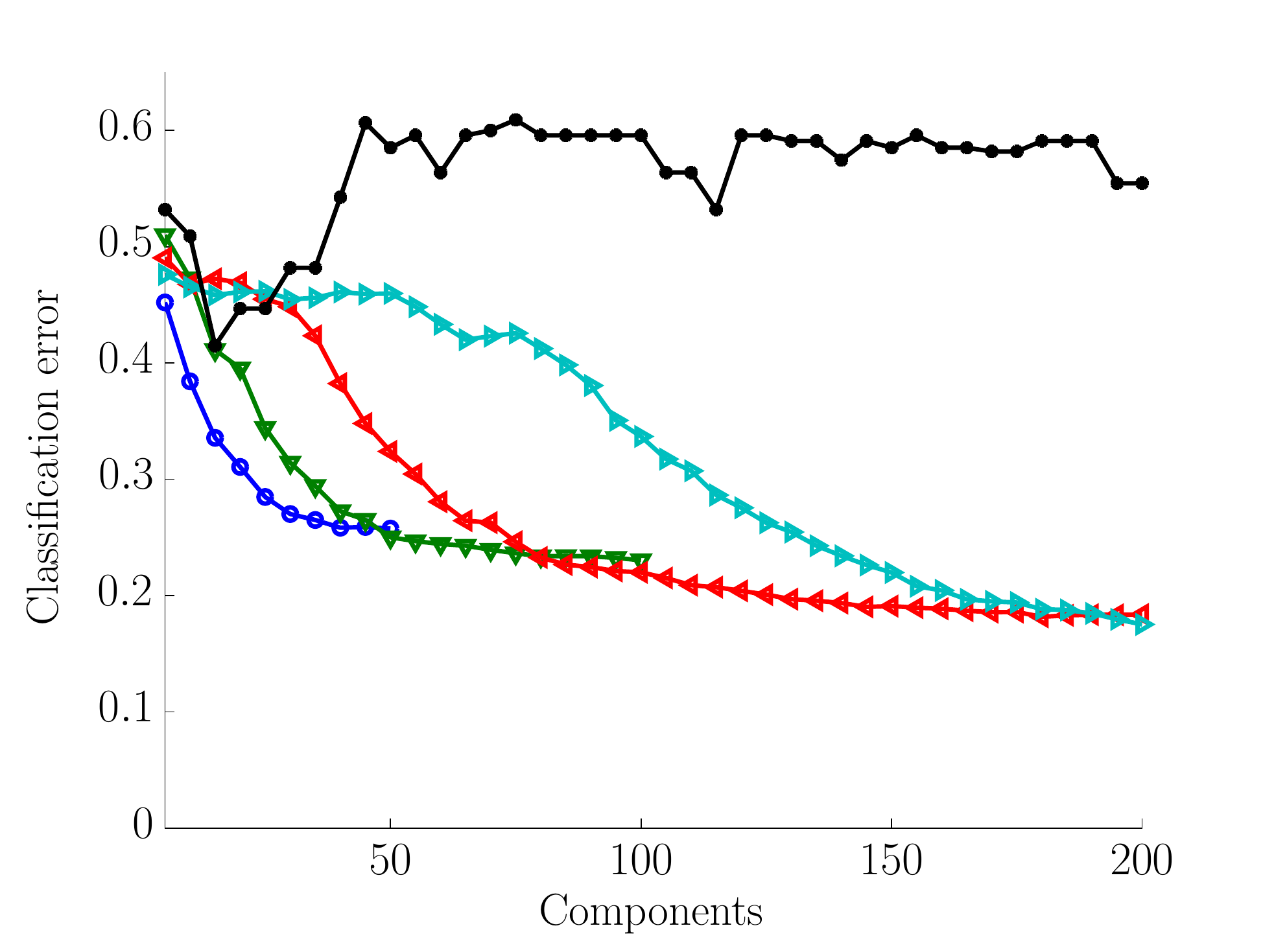}\\
\end{minipage}
\label{fig:subsample_thoroughB}
}
\subfigure[Semi-supervised eigenvectors, $\sigma_i^2=80$]{
%Semi-supervised eigenvectors - $\sigma_i^2=80$
\begin{minipage}[b]{0.49\linewidth}
\centering
\includegraphics[scale=0.4]{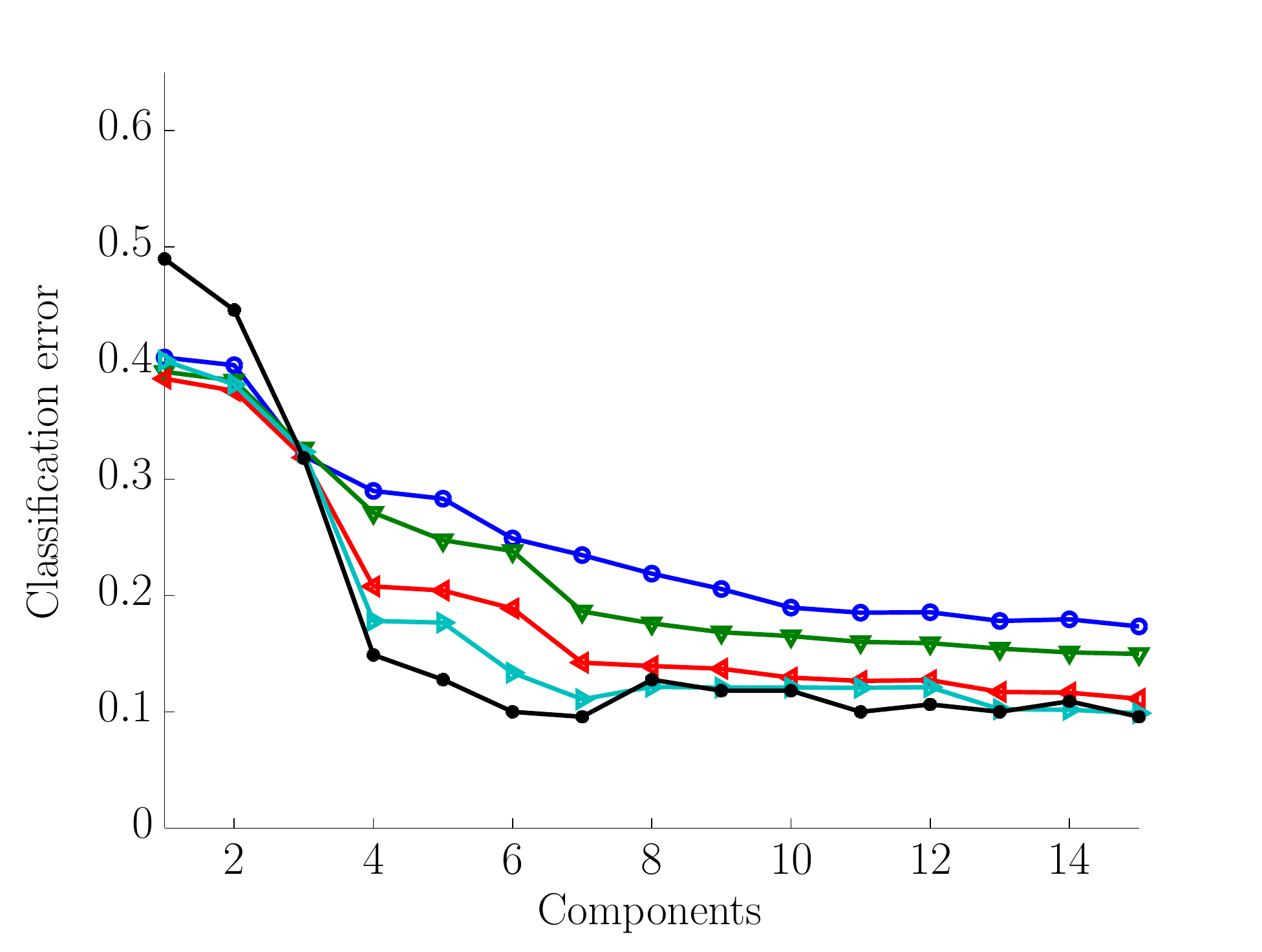}
\end{minipage}
\label{fig:subsample_thoroughC}
}
\subfigure[Semi-supervised eigenvectors, $\sigma_i^2=200$]{
%Semi-supervised eigenvectors - $\sigma_i^2=200$
\begin{minipage}[b]{0.49\linewidth}
\centering
\includegraphics[scale=0.4]{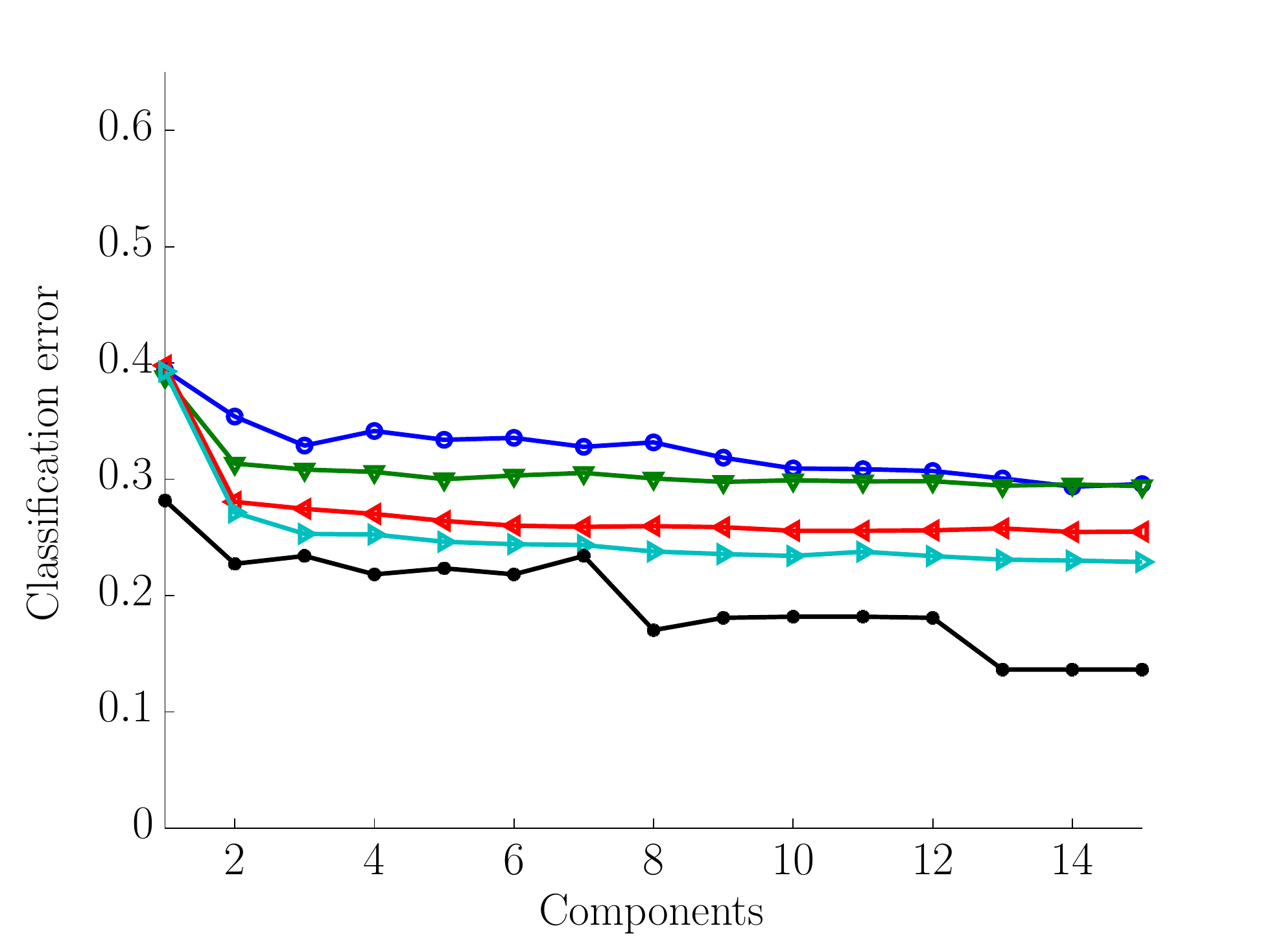}\\
\end{minipage}
\label{fig:subsample_thoroughD}
}
\caption{We consider $10\%$ of the MNIST training and 
test data and investigate the classification accuracy of a downstream SGT 
classifier for various approximations of the dense similarity matrix. 
\ref{fig:subsample_thoroughA} and \ref{fig:subsample_thoroughB}:
Classification error for the SGT evaluated directly on global eigenvectors, 
based on various Nystr\"{o}m approximations and the two choices of the 
kernel width parameter (respectively, $\sigma_i^2=80$ and $\sigma_i^2=200$).
\ref{fig:subsample_thoroughC} and \ref{fig:subsample_thoroughD}:
Classification error we have used the Nystr\"{o}m approximations as basis 
for computing semi-supervised eigenvectors that are then used in the 
downstream SGT classifier. 
All plots show the mean over 30 repetitions.
%XXX.  TOKE, AS AN EXAMPLE OF HOW TO SAVE MEMORY, THERE IS NO NEED FOR THESE FIGS TO BE IN COLOR.
}
\label{fig:subsample_thorough}
\end{figure*}

\subsubsection{Implementation issues and running time considerations}
Here, we discuss implementation details and investigate the advantage of using the Graphics Processing Unit (GPU) for computing semi-supervised eigenvectors.
Although the computations underlying the construction of semi-supervised
eigenvectors could be performed in many computational environments, the GPU architecture fits well with the dense semi-supervised eigenvector computation in Eqn. (\ref{eq:lagrange_exact}); for each component, this expression will be executed $\mathcal{O}(\log_2((\lambda_2(G)+\vol(G)))/\epsilon)$  times within the binary search of Algorithm \ref{alg_new_1}.  

%Another more general motivation for provides the results in this section, is to provide a an unbiased comparison between recent architectures, as   

Compared to a Central Processing Unit (CPU), which is well-suited for processing code with a complex 
control flow, a GPU is much better suited for addressing problems that can 
be expressed as data-parallel computations with a high arithmetic intensity~\cite{kruger2003linear,bolz2003sparse,hansen2011}.
A GPU consists of a set of Multi Processors (MPs), each 
containing multiple Scalar Processors (SPs), as well as different types of local 
memories that the SPs may access.  
All MPs have also access to a large global memory that, compared to their 
internal memories, is much slower to access.
A computation task to be executed on such a device is usually setup in a 
grid, where each element in the grid gets assigned to a thread. 
The grid is then decomposed into blocks that are scheduled onto the MPs with 
available resources, and the assigned MP will schedule the elements 
of the block onto its SPs in warps with 32 threads. 
The best performance is obtained when all threads in a warp execute the same 
instruction and when the total number of threads in the grid is large, as this allows various latencies to be overlapped with 
arithmetic operations. 
%For a more detailed description of CUDA-capable devices, see~\cite{CUDAReferenceManual}.

\begin{figure*}[hbt!]
\subfigure[Single precision, the $25^\text{th}$ solution]{
%Global eigenvectors - $\sigma_i^2=80$
\begin{minipage}[b]{0.49\linewidth}
\centering
\includegraphics[scale=0.4]{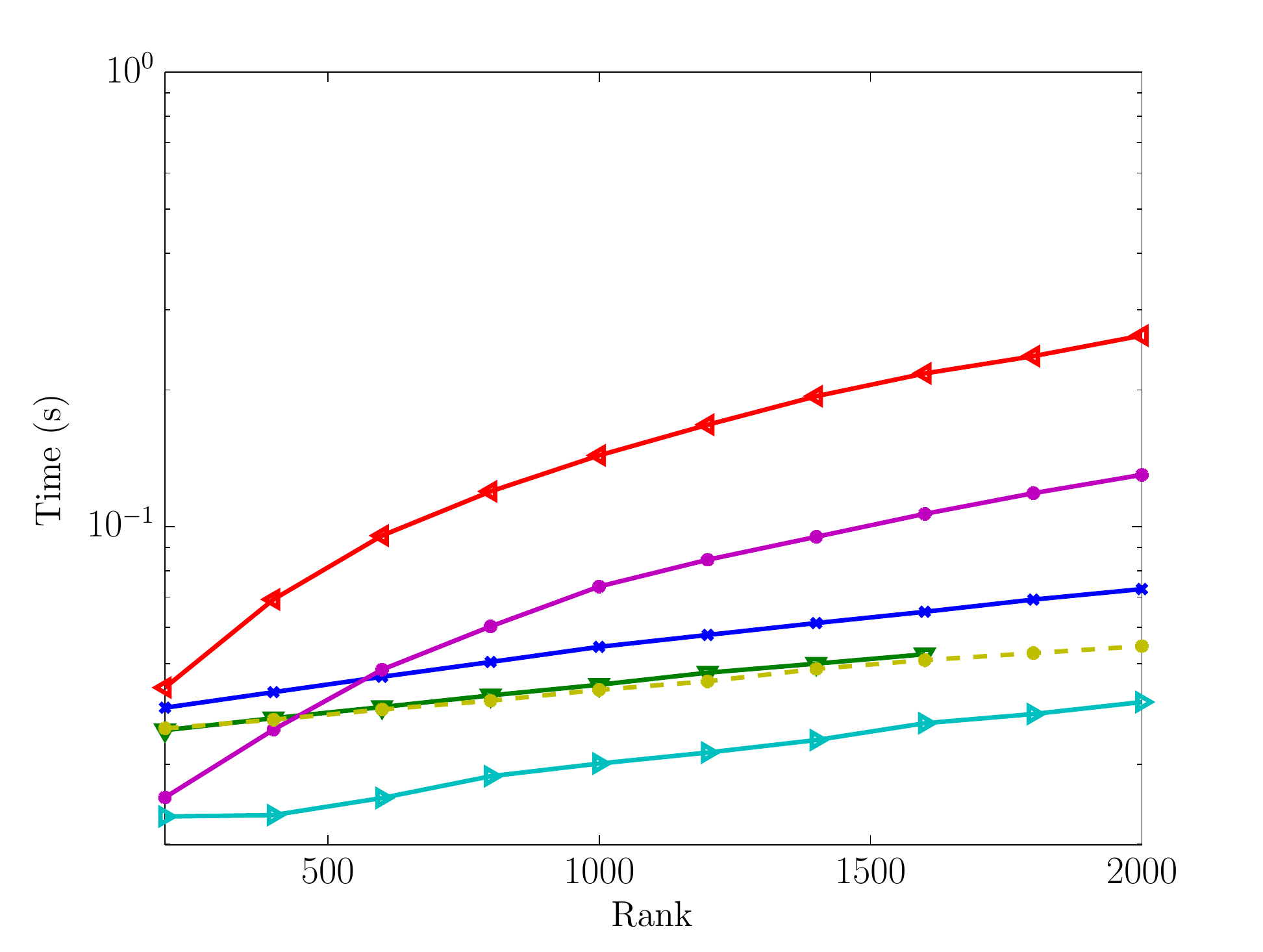}\\
\end{minipage}
\label{fig:new_gpuA}
}
\subfigure[Double precision, the $25^\text{th}$ solution]{
%Global eigenvectors - $\sigma_i^2=200$
\begin{minipage}[b]{0.49\linewidth}
\centering
\includegraphics[scale=0.4]{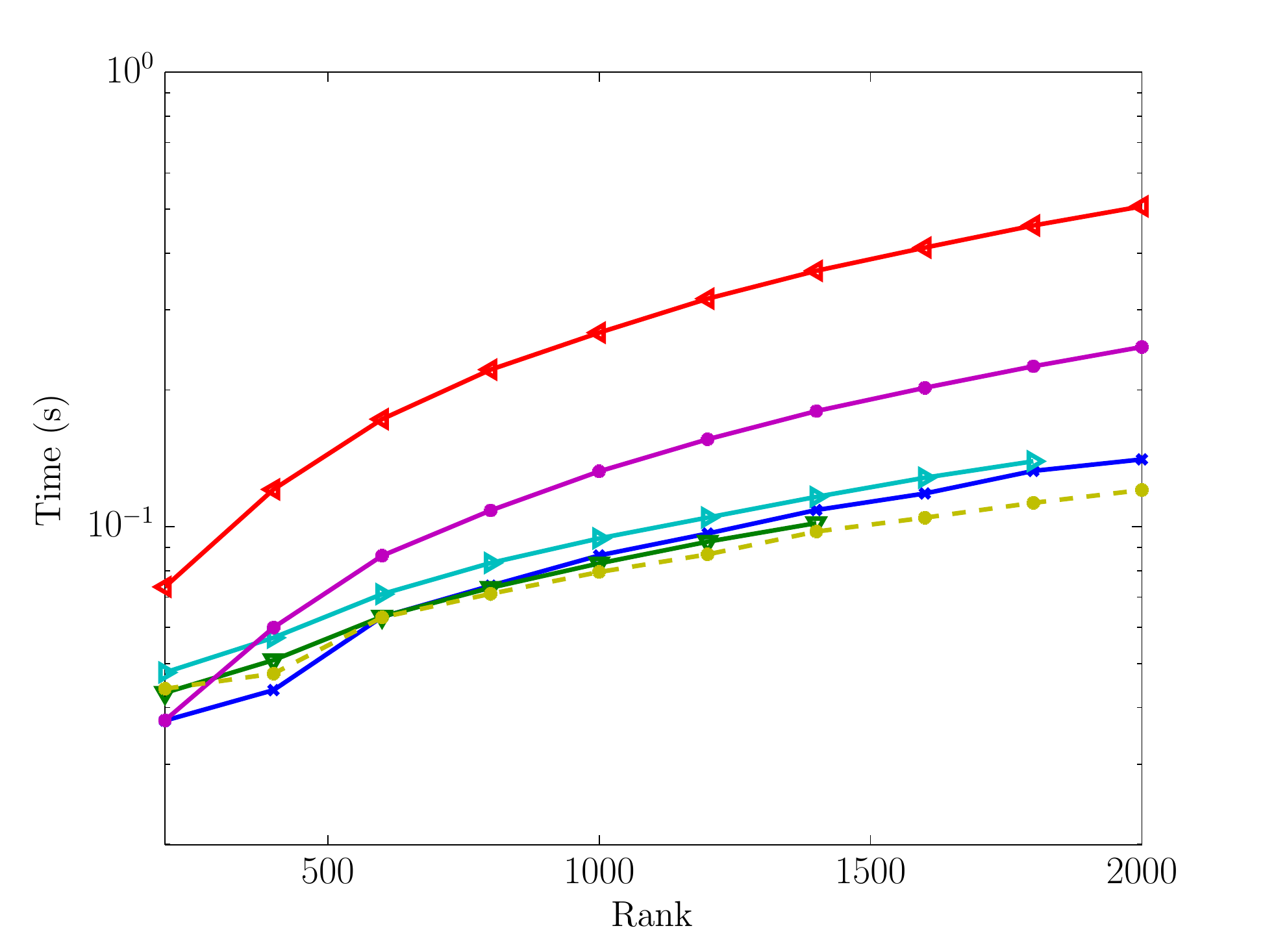}\\
\end{minipage}
\label{fig:new_gpuB}
}
\subfigure[Single precision, the $500^\text{th}$ solution]{
%Semi-supervised eigenvectors - $\sigma_i^2=80$
\begin{minipage}[b]{0.49\linewidth}
\centering
\includegraphics[scale=0.4]{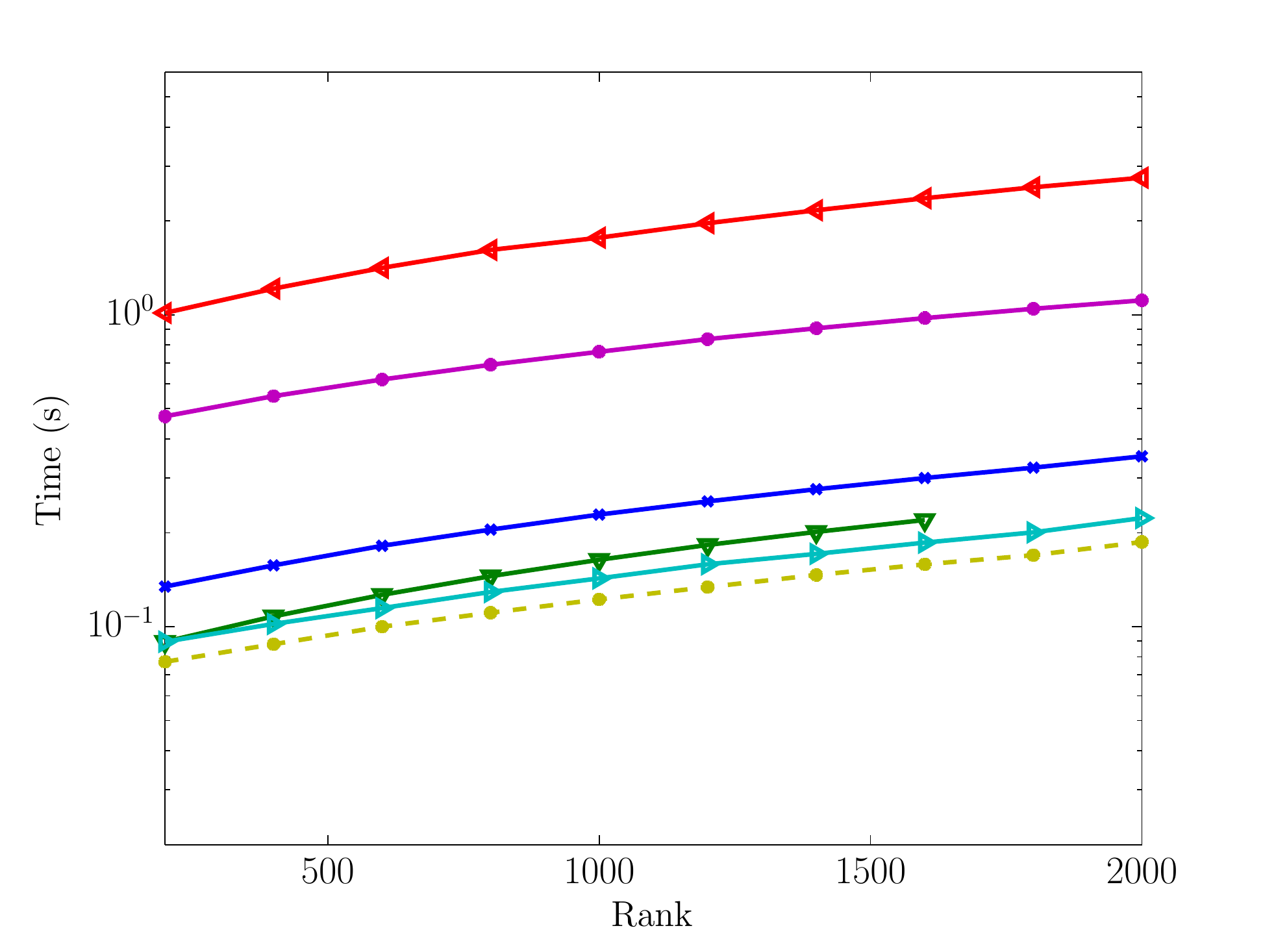}
\end{minipage}
\label{fig:new_gpuC}
}
\subfigure[Double precision, the $500^\text{th}$ solution]{
%Semi-supervised eigenvectors - $\sigma_i^2=200$
\begin{minipage}[b]{0.49\linewidth}
\centering
\includegraphics[scale=0.4]{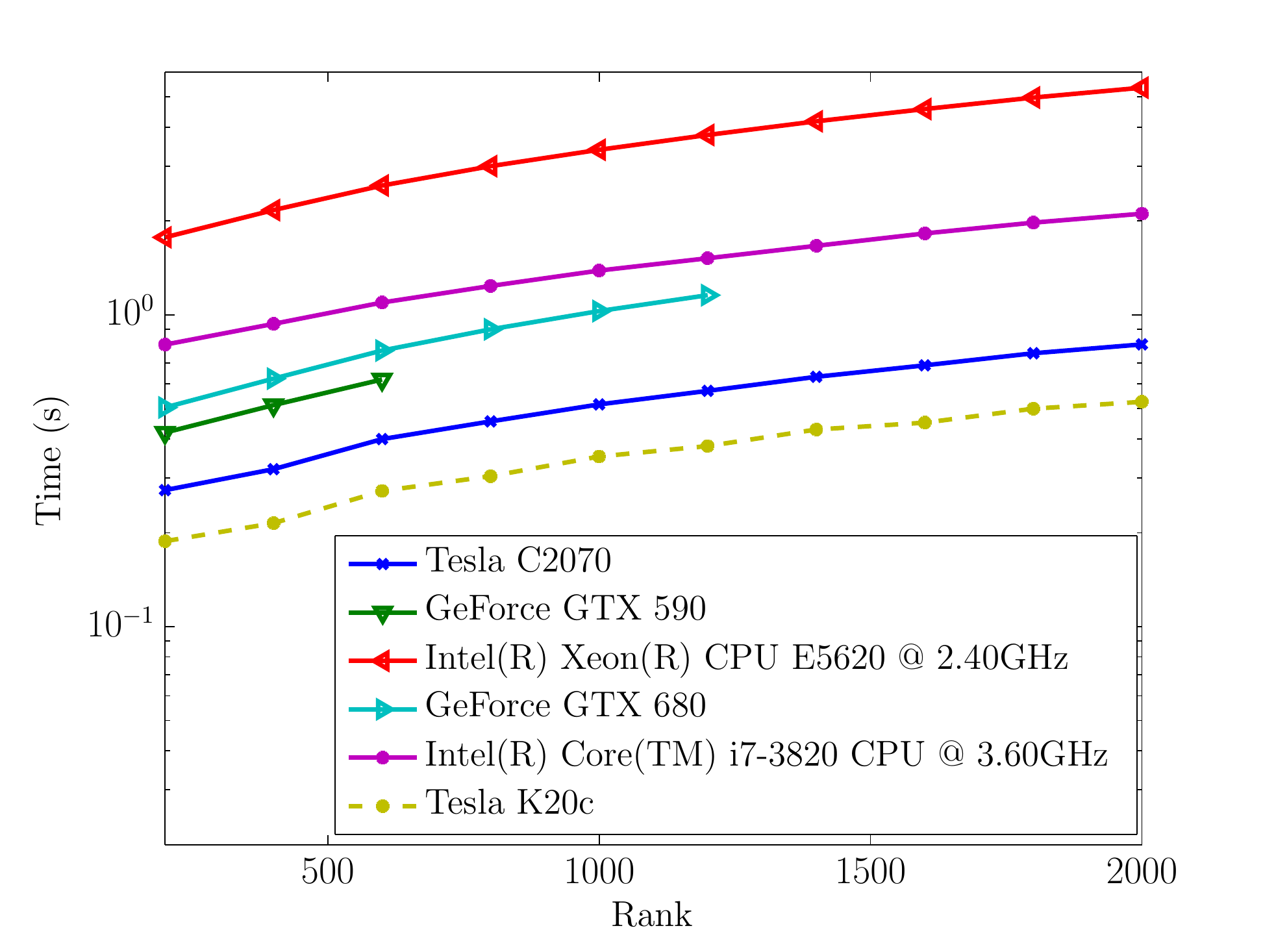}\\
\end{minipage}
\label{fig:new_gpuD}
}
\caption{Running time performance measurements for solving 
Eqn. (\ref{eq:lagrange_exact}), given a specific value of the parameter 
$\gamma$, on the entire MNIST data set consisting of $70,000$ samples, as 
a function of the rank parameter. Single and double precision arithmetic results are respectively shown in \ref{fig:new_gpuA} and \ref{fig:new_gpuB} for the task of computing the $25^\text{th}$ solution, \emph{i.e.}, constrained to be perpendicular to the previous $24$ solutions. Similar does \ref{fig:new_gpuC} and \ref{fig:new_gpuD} show performance results for computing the $500^\text{th}$ solution, and here the advantage of using recent GPU architectures become even more evident, as the operation is dominated by a high arithmetic intensity that fit well with such architectures.
}
\label{fig:new_gpu}
\end{figure*}

%\begin{figure*}[hbt!]
%\subfigure[Single precision]{
%\begin{minipage}[b]{0.49\linewidth}
%\centering
%\includegraphics[scale=0.4]{mnist/mnist_graphics_new/benchmark_gpu_single.pdf}\\
%\end{minipage}
%\label{fig:cpugpuA}
%}
%\subfigure[Double precision]{
%\begin{minipage}[b]{0.49\linewidth}
%\centering
%\includegraphics[scale=0.4]{mnist/mnist_graphics_new/benchmark_gpu_double.pdf}\\
%\end{minipage}
%\label{fig:cpugpuB}
%}
%\caption{Running time performance measurements for solving 
%Eq. \ref{eq:lagrange_exact}, given a specific value of the parameter 
%$\gamma$, on the entire MNIST data set consisting of $70,000$ samples, as 
%a function of the rank parameter. 
%\ref{fig:cpugpuA} shows results based on single precision arithmetic;
%\ref{fig:cpugpuB} shows results based on double precision arithmetic.
%}
%\label{fig:cpugpu}
%\end{figure*}

We compare most recent  CPU and GPU devices in computing the solution to Eqn. (\ref{eq:lagrange_exact}). In terms of the GPUs we test both consumer devices (GeForce) and professional devices (Tesla), where the latter provides enhanced performance for double-precision floating point arithmetic.  
For a fair comparison, we decided to rely on the BLAS\footnote{The BLAS implementation uses all physical CPU cores.} and CUBLAS implementations as used in \textsc{Matlab 2012b}, \emph{i.e.}, avoiding to favor specific architecturally dependent implementation optimizations, since BLAS and CUBLAS should be optimal in terms of the underlying architecture. Figure \ref{fig:new_gpu} shows performance measures (wall-clock-time as a function of the rank parameter) of CPU and GPU experiments. For single precision arithmetic the GTX 680 scales very well, and it ends up being more than three times faster than the i7-3820, as well as noticeably faster than the previous generation high-end Tesla C2070, and it even outperforms the latest generation Tesla K20c, as seen in Figure\ref{fig:new_gpuA}. 
%Note, that the ratio would increase even further for higher rank matrices. 
As seen in Figure \ref{fig:new_gpuA} and \ref{fig:new_gpuB}, the GPUs perform poorly in the low-rank regime, and this is explained by the overhead of transferring data back and forth from the main memory and to the device. However, for the high-rank matrices the arithmetic intensity increases and the overhead is less dominant. Also evident is the performance improvement of the latest CPU generation (i7-3820), that for the considered operation ends up being more than twice as fast as a previous generation E5620, that primarily is due to the higher clock frequency.
For double precision arithmetic, the GTX 680 and GTX 590 are due to memory constraints stopped prematurely in the experiments, as they respectively are equipped with 2048MB and 1536MB (per GPU). Note that even though the GTX 590 is a dual GPU card, it is from the GPU computing perspective setup as two individual devices, and only one of these are used for the experiments.
Interestingly the older GTX 590 outperforms the recent GTX 680, which may be explained by a higher memory bandwidth. In Figure \ref{fig:new_gpuD} the Tesla K20c outperforms all other devices by a fair margin, being $\approx 1.5$ times as fast as the Tesla C2070, and four times faster than the i7-3820. 

Using GPU computing we are able to reduce the computation time considerably. Depending on the application of the semi-supervised eigenvectors, the advantage may be significant, for example if applied in time critical applications such as online and real-time applications or large-scale simulations.

 %~2.3MB

\subsection{Functional Magnetic Resonance Imaging Data}
\label{sxn:empirical-fmri}
The next dataset we consider is from functional Magnetic Resonance Imaging (fMRI). Here data analysis usually considers the characterization of relations between cognitive variables and individual brain voxels, for instance using the mass-univariate General Linear Model (GLM), where statistical parametric maps are used to identify regions of gray matter that are significantly related to particular effects under study \cite{Friston1994}. 
Even though such a voxel-wise univariate approach has been tremendously productive, there are obvious limits on what can be learned about cognitive states by only examining isolated voxels \cite{NPDH06}. Multivariate methods have therefore paved the way for more advanced paradigms involving complex cognition, where the latent brain state cannot solely be determined from looking at individual voxel time series \cite{Eger2008,Kamitani2005,Bode2009}.
However, an immediate challenge for multivariate approaches is that weak signals carried by a sparse set of voxels can be very hard to detect, and for this reason multivariate approaches are often accompanied by spatial priors, to improve on the signal-to-noise ratio (SNR).

%The Blood Oxygenation Level Dependent (BOLD) signal is the most commonly used contrast in Functional Magnetic Resonance Imaging (fMRI), primarily due to the relatively low complexity in terms of measuring, and its high Signal to Noise Ratio (SNR). Measurements of the BOLD signal is typically based on Echo Planer Imaging (EPI) techniques, where the actual measured effect is a result of signal loss due to the presence of deoxygenated haemoglobin, giving rise to a field inhomogeneity.

\begin{figure*}[hbt!]
\subfigure[Global eigenvector, $v_2$]{
%Global eigenvectors - $\sigma_i^2=80$
\begin{minipage}[b]{0.49\linewidth}
\centering
\includegraphics[scale=0.4]{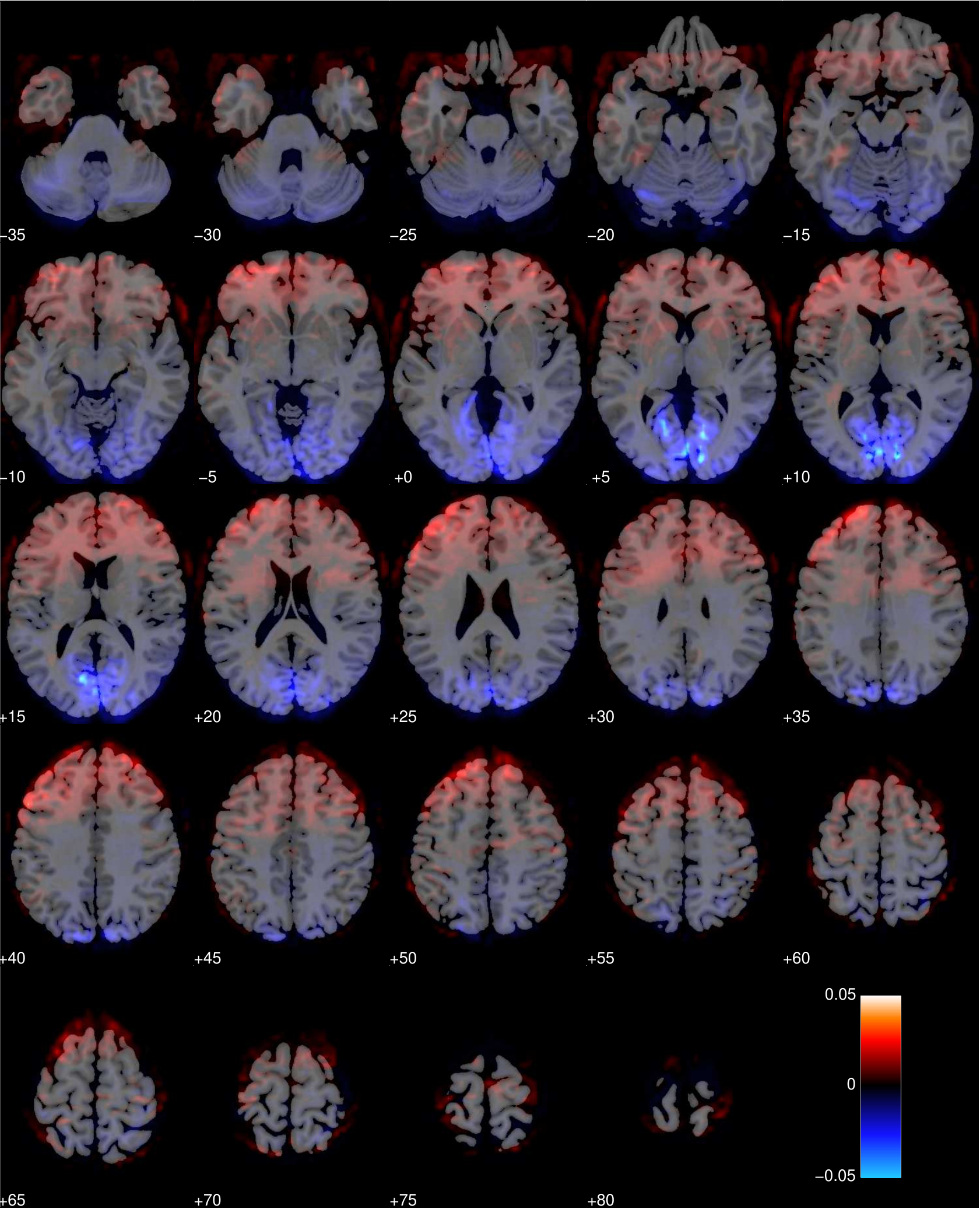}\\
\end{minipage}
\label{fig:global_new_2}
}
\subfigure[Global eigenvector, $v_3$]{
%Global eigenvectors - $\sigma_i^2=200$
\begin{minipage}[b]{0.49\linewidth}
\centering
\includegraphics[scale=0.4]{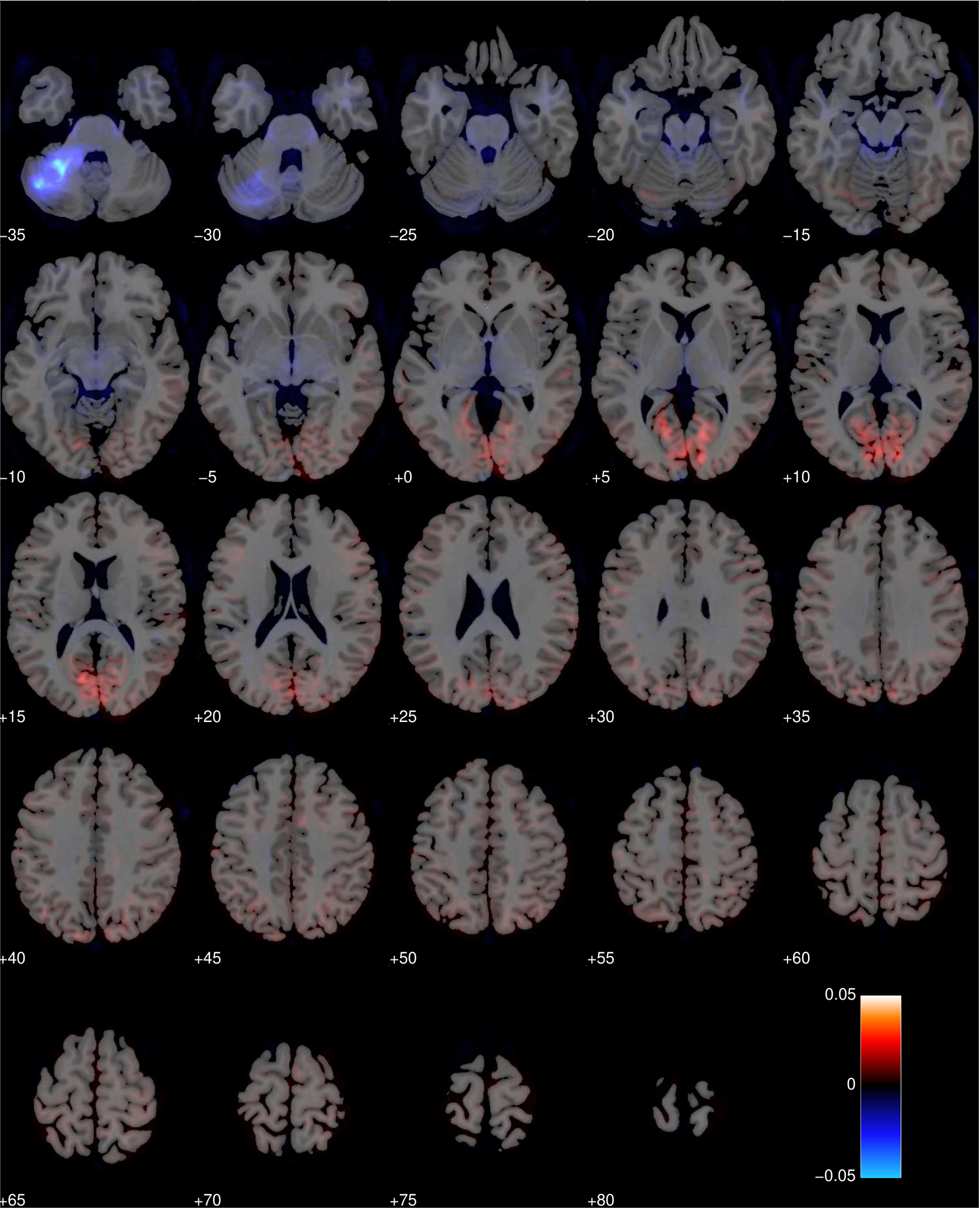}\\
\end{minipage}
\label{fig:global_new_3}
}
\subfigure[Global eigenvector, $v_4$]{
%Semi-supervised eigenvectors - $\sigma_i^2=80$
\begin{minipage}[b]{0.49\linewidth}
\centering
\includegraphics[scale=0.4]{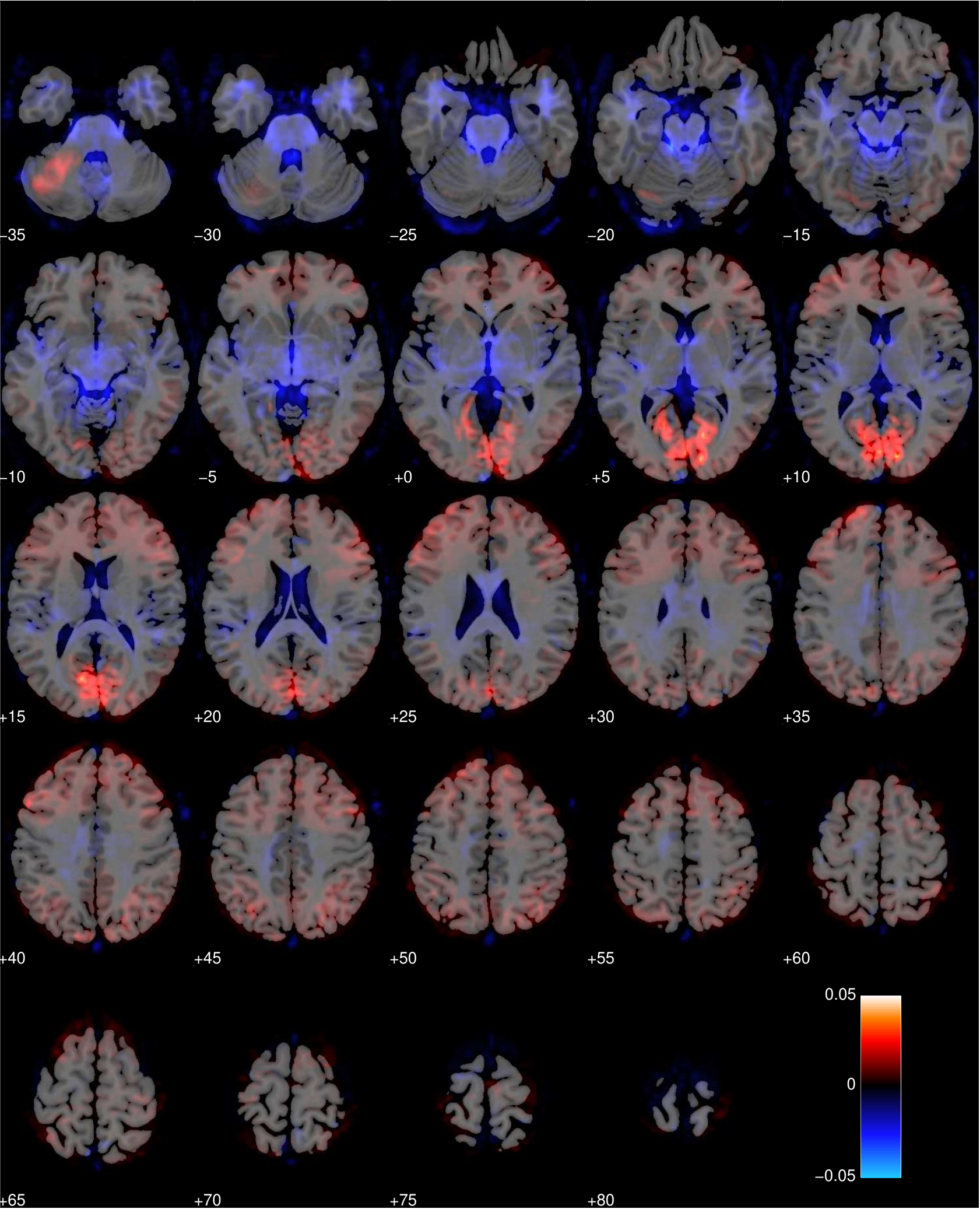}\\
\end{minipage}
\label{fig:global_new_4}
}
\subfigure[Global eigenvector, $v_5$]{
%Semi-supervised eigenvectors - $\sigma_i^2=200$
\begin{minipage}[b]{0.49\linewidth}
\centering
\includegraphics[scale=0.4]{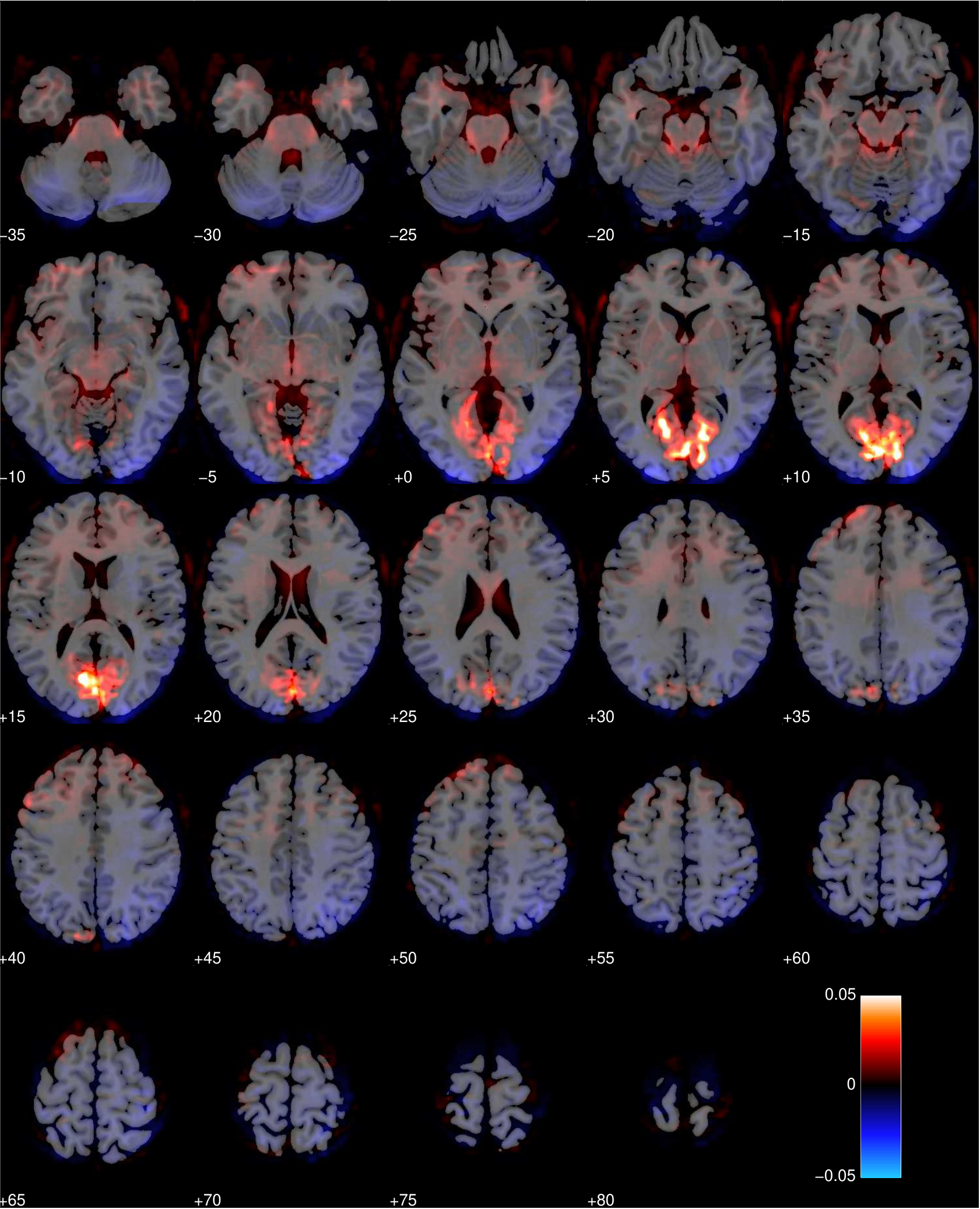}\\
\end{minipage}
\label{fig:global_new_5}
}
\caption{Visualization of the leading 4 nontrivial global eigenvectors.}
\label{fig:fmri_global_eigs}
\end{figure*}

\begin{figure*}[hbt!]
\subfigure[Primary Motor Cortex (PMC)]{
\begin{minipage}[b]{0.49\linewidth}
\centering
\includegraphics[scale=0.38]{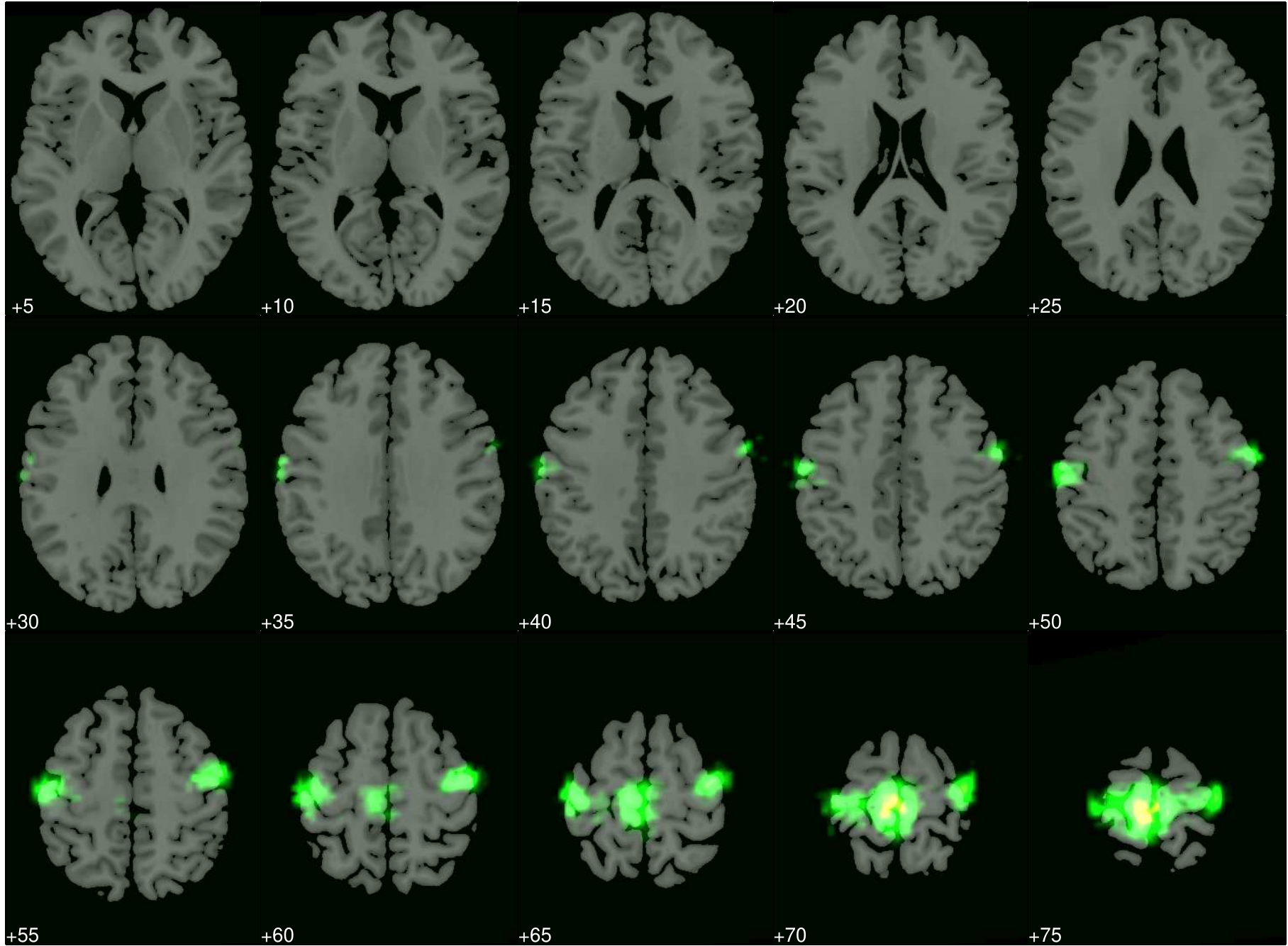}\\
\end{minipage}
\label{fig:fmriaccuracyPMC}
}
\subfigure[Primary Auditory Cortex (PAC)]{
\begin{minipage}[b]{0.49\linewidth}
\centering
\includegraphics[scale=0.38]{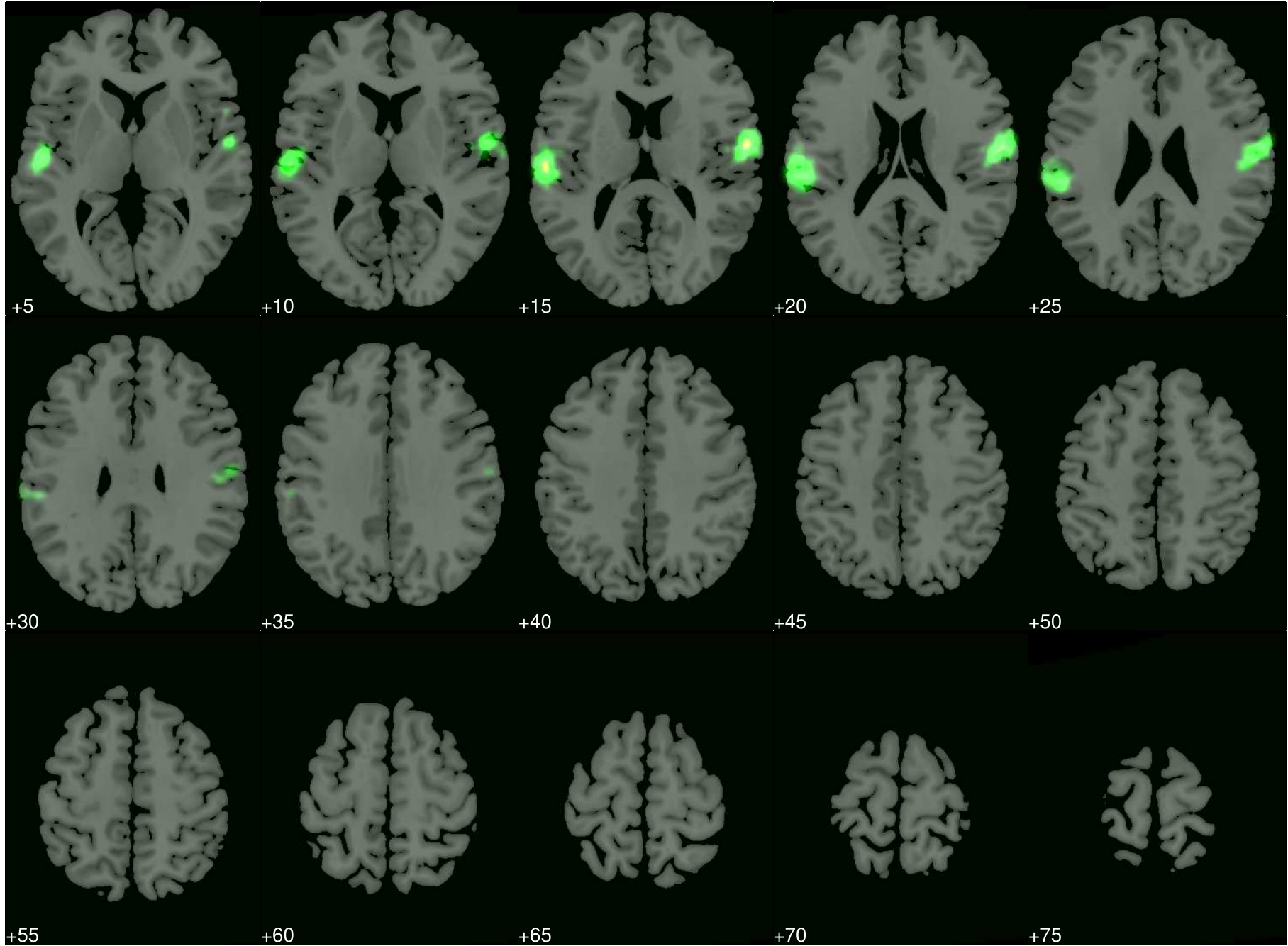}\\
\end{minipage}
\label{fig:fmriaccuracyPAC}
}
\subfigure[Classification accuracy]{
\begin{minipage}[b]{\linewidth}
\centering
\vspace{5mm}
\includegraphics[scale=0.45]{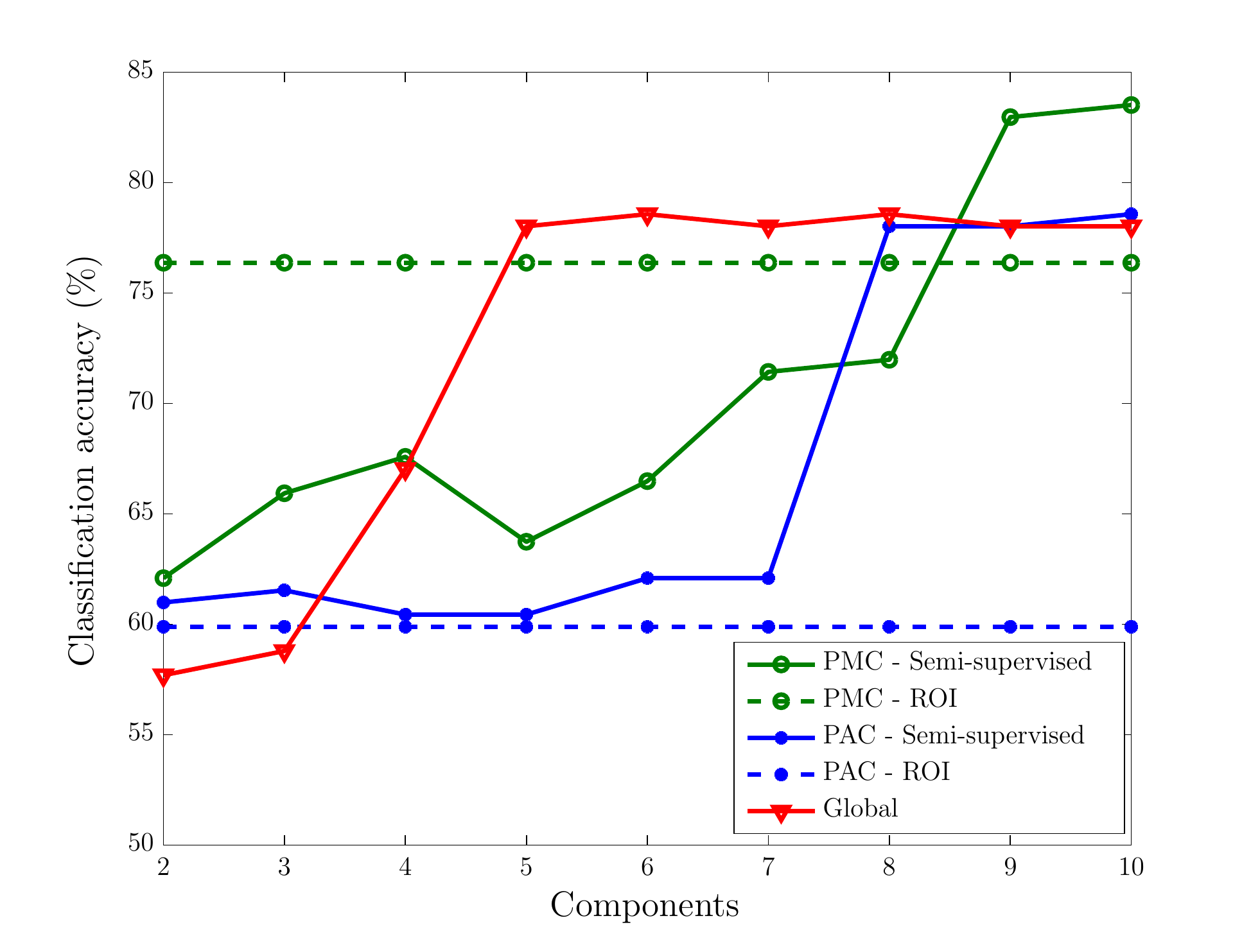}\; \\
\end{minipage}
\label{fig:fmriaccuracyA}
}
\caption{Figure \ref{fig:fmriaccuracyPMC} shows the seed region in PMC, and Figure \ref{fig:fmriaccuracyPAC} shows the seed region in PAC.
The plot in Figure \ref{fig:fmriaccuracyA} shows the classification accuracy for the 5 different features extraction approaches. The dashed lines mark the reference where all voxel time series, as covered by the seed, are used in the downstream classifier, and the solid ones correspond to the accuracy obtained from projecting the data onto the semi-supervised eigenvectors seeded in PAC and  PMC, as well as the global eigenvectors.}\label{fig:fmriaccuracy}
\end{figure*}

\begin{figure*}[hbt!]
\subfigure[Semi-supervised eigenvector (PMC), $x_1$]{
%Global eigenvectors - $\sigma_i^2=80$
\begin{minipage}[b]{0.49\linewidth}
\centering
\includegraphics[scale=0.4]{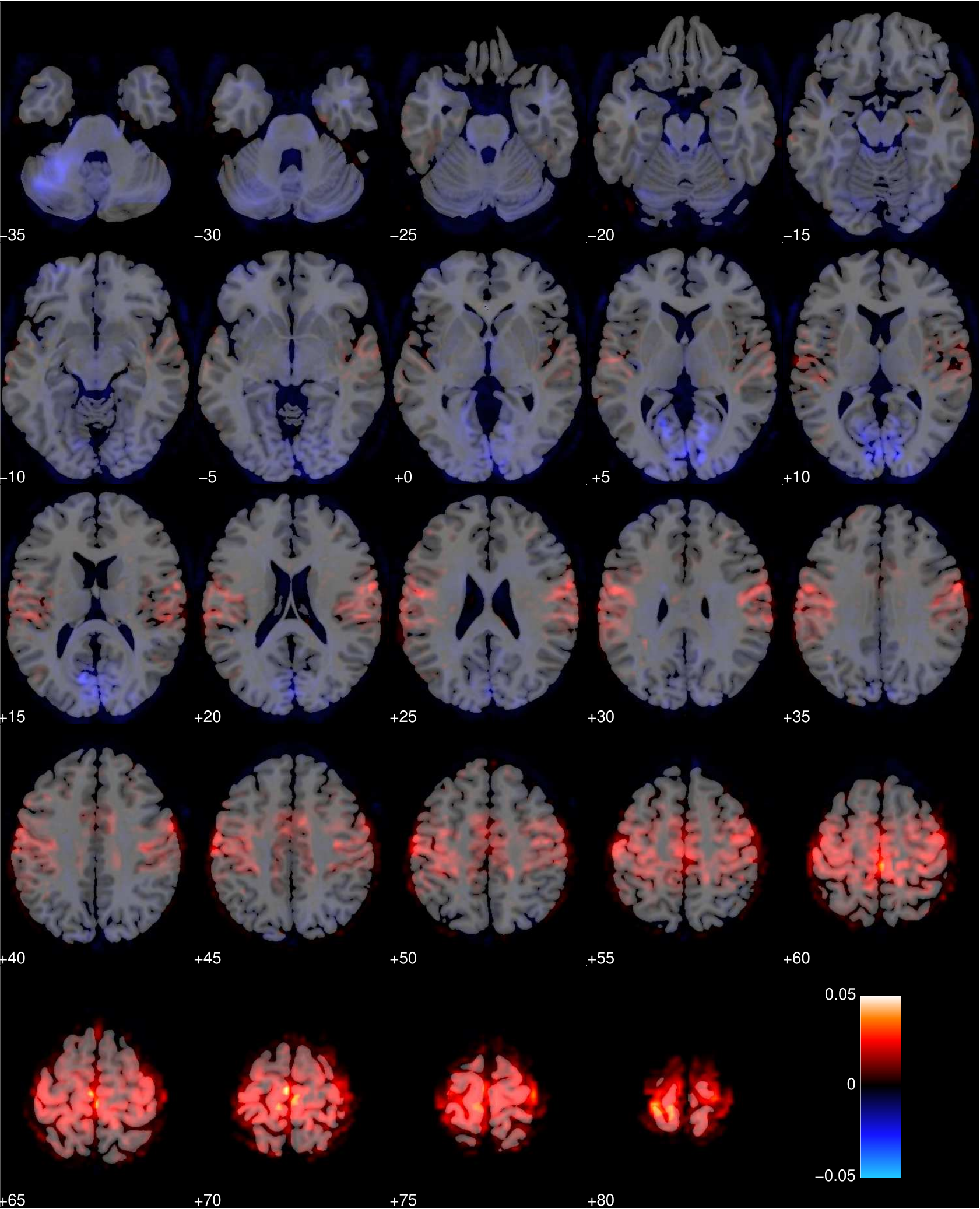}\\
\end{minipage}
\label{fig:fmri_sseigs_pmcA}
}
\subfigure[Semi-supervised eigenvectors (PMC), $x_2$]{
%Global eigenvectors - $\sigma_i^2=200$
\begin{minipage}[b]{0.49\linewidth}
\centering
\includegraphics[scale=0.4]{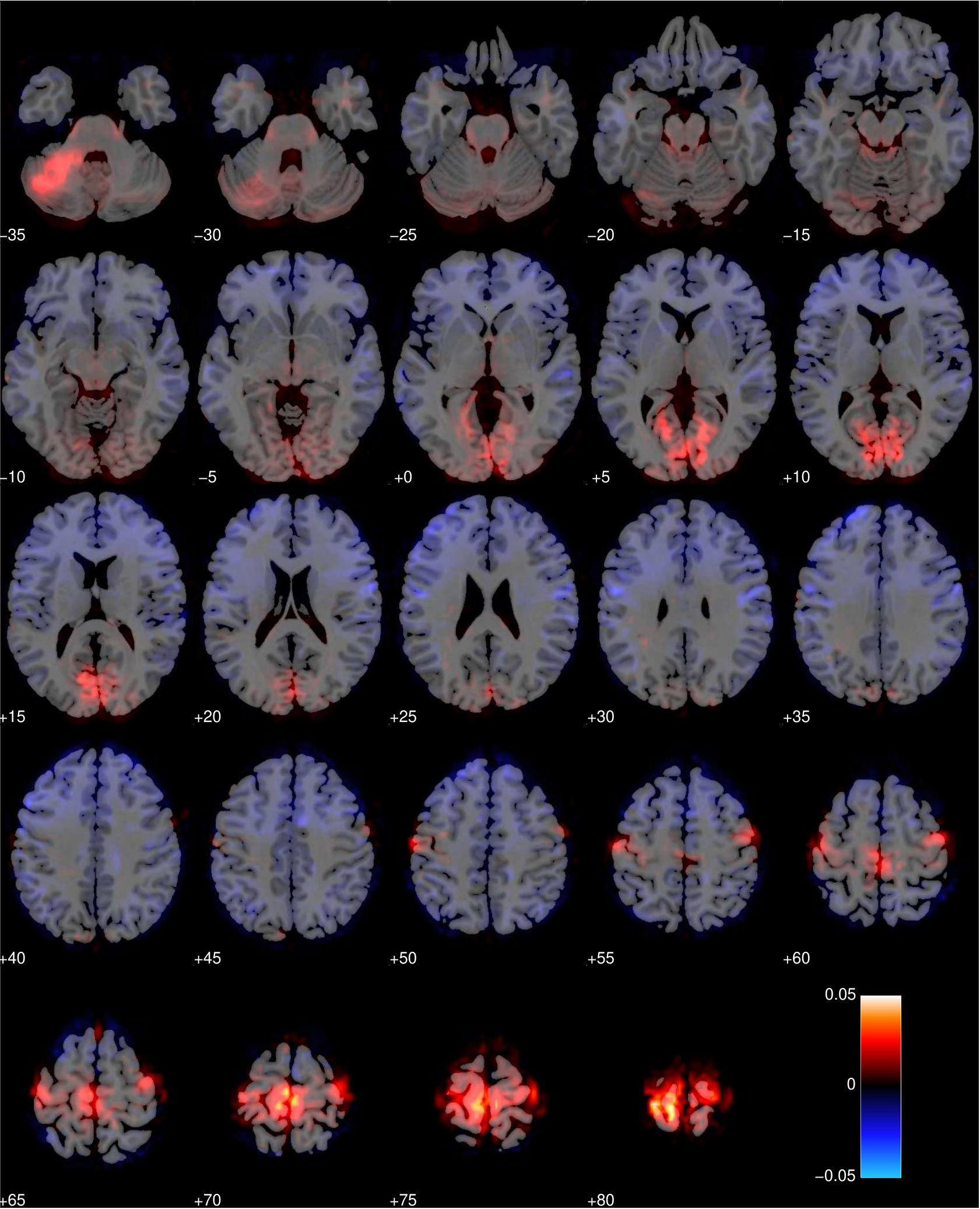}\\
\end{minipage}
\label{fig:fmri_sseigs_pmcB}
}
\subfigure[Semi-supervised eigenvectors (PMC), $x_3$]{
%Semi-supervised eigenvectors - $\sigma_i^2=80$
\begin{minipage}[b]{0.49\linewidth}
\centering
\includegraphics[scale=0.4]{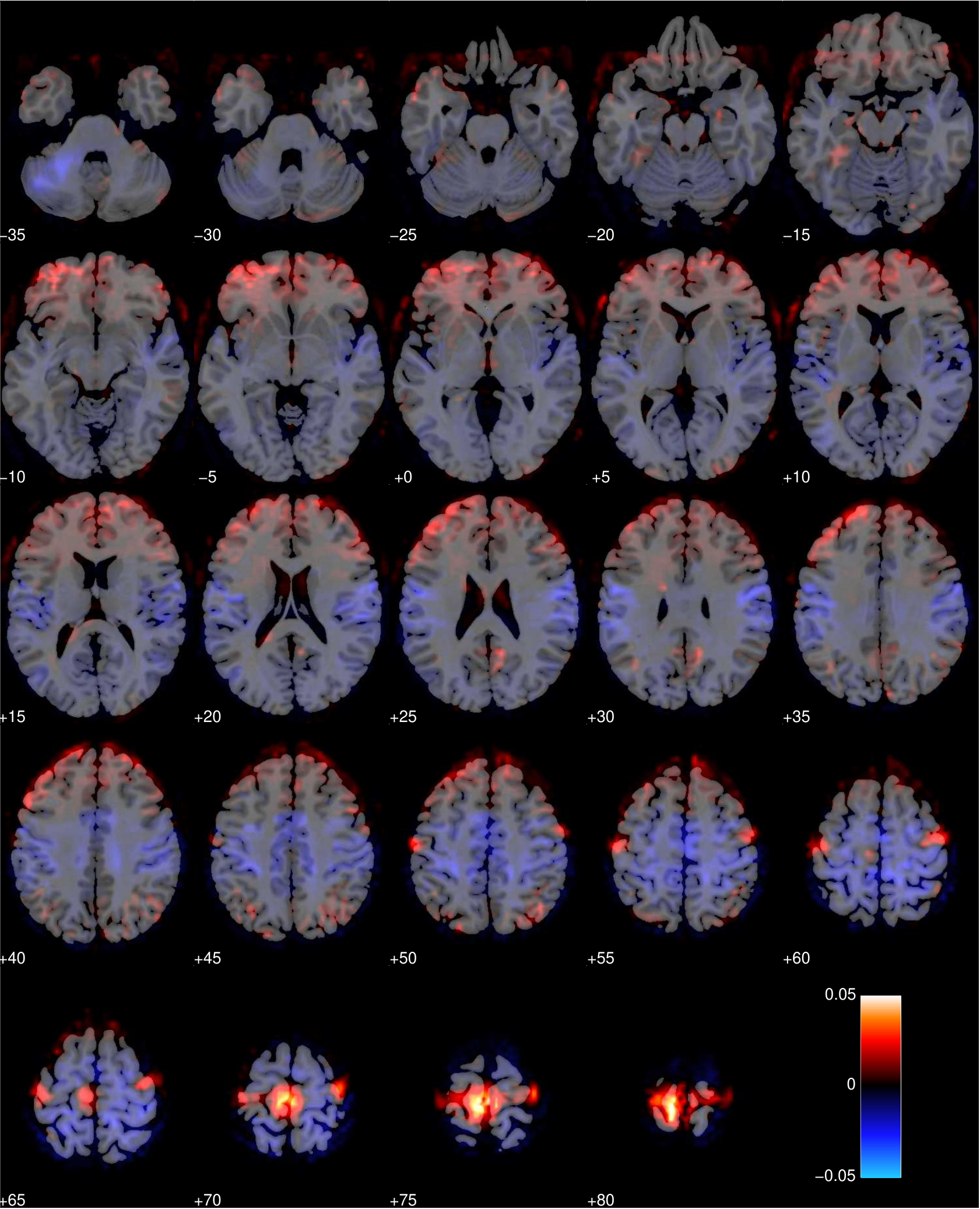}\\
\end{minipage}
\label{fig:fmri_sseigs_pmcC}
}
\subfigure[Semi-supervised eigenvectors (PMC), $x_4$]{
%Semi-supervised eigenvectors - $\sigma_i^2=200$
\begin{minipage}[b]{0.49\linewidth}
\centering
\includegraphics[scale=0.4]{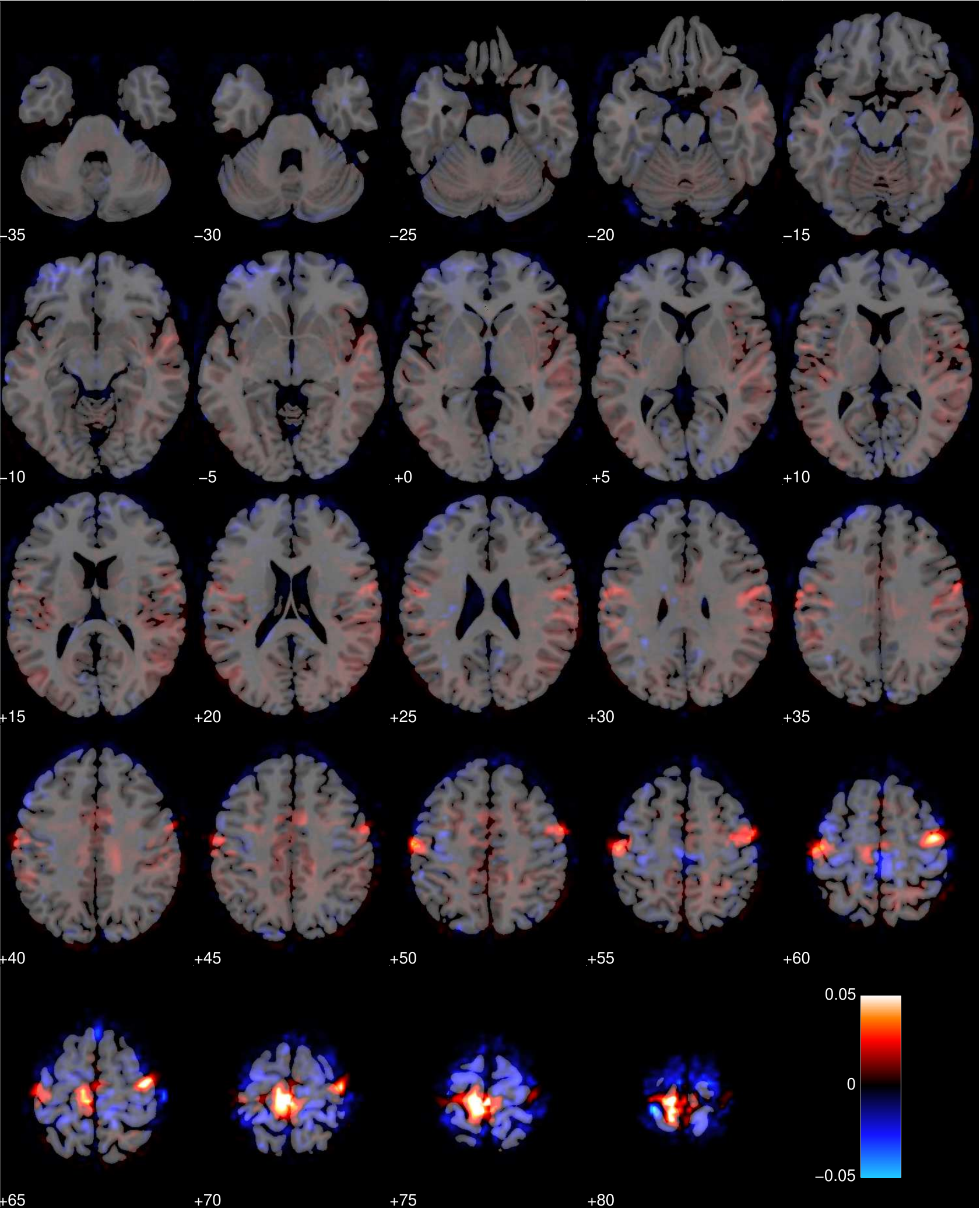}\\
\end{minipage}
\label{fig:fmri_sseigs_pmcD}
}
\caption{Visualization of the leading 4 semi-supervised eigenvectors seeded in PMC, each correlating $0.25$ with the seed, that is visualized in Figure \ref{fig:fmriaccuracyPMC}.}
\label{fig:fmri_sseigs_pmc}
\end{figure*}

\begin{figure*}[hbt!]
\subfigure[Semi-supervised eigenvector (PAC), $x_1$]{
%Global eigenvectors - $\sigma_i^2=80$
\begin{minipage}[b]{0.49\linewidth}
\centering
\includegraphics[scale=0.4]{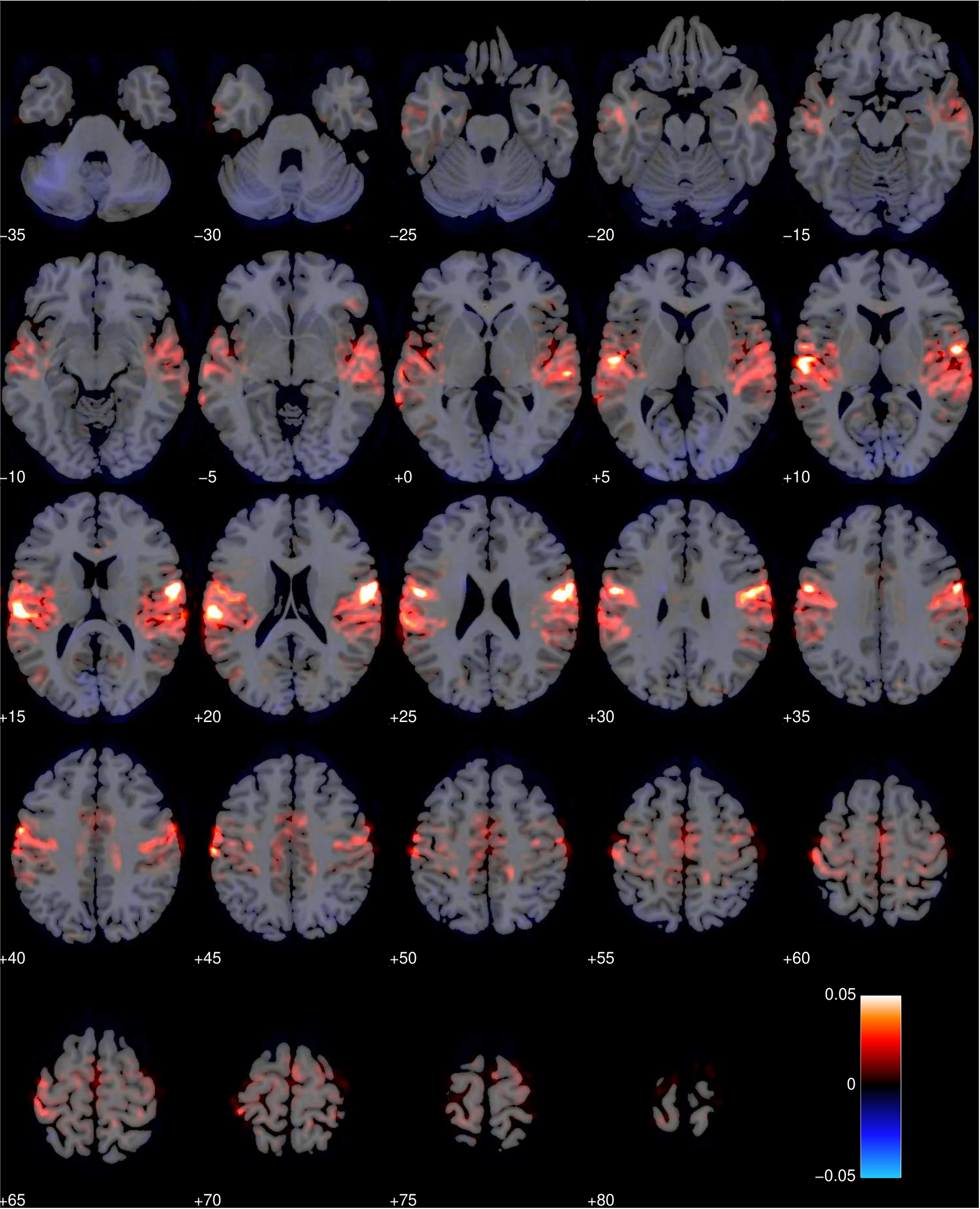}\\
\end{minipage}
\label{fig:fmri_sseigs_pacA}
}
\subfigure[Semi-supervised eigenvectors (PAC), $x_2$]{
%Global eigenvectors - $\sigma_i^2=200$
\begin{minipage}[b]{0.49\linewidth}
\centering
\includegraphics[scale=0.4]{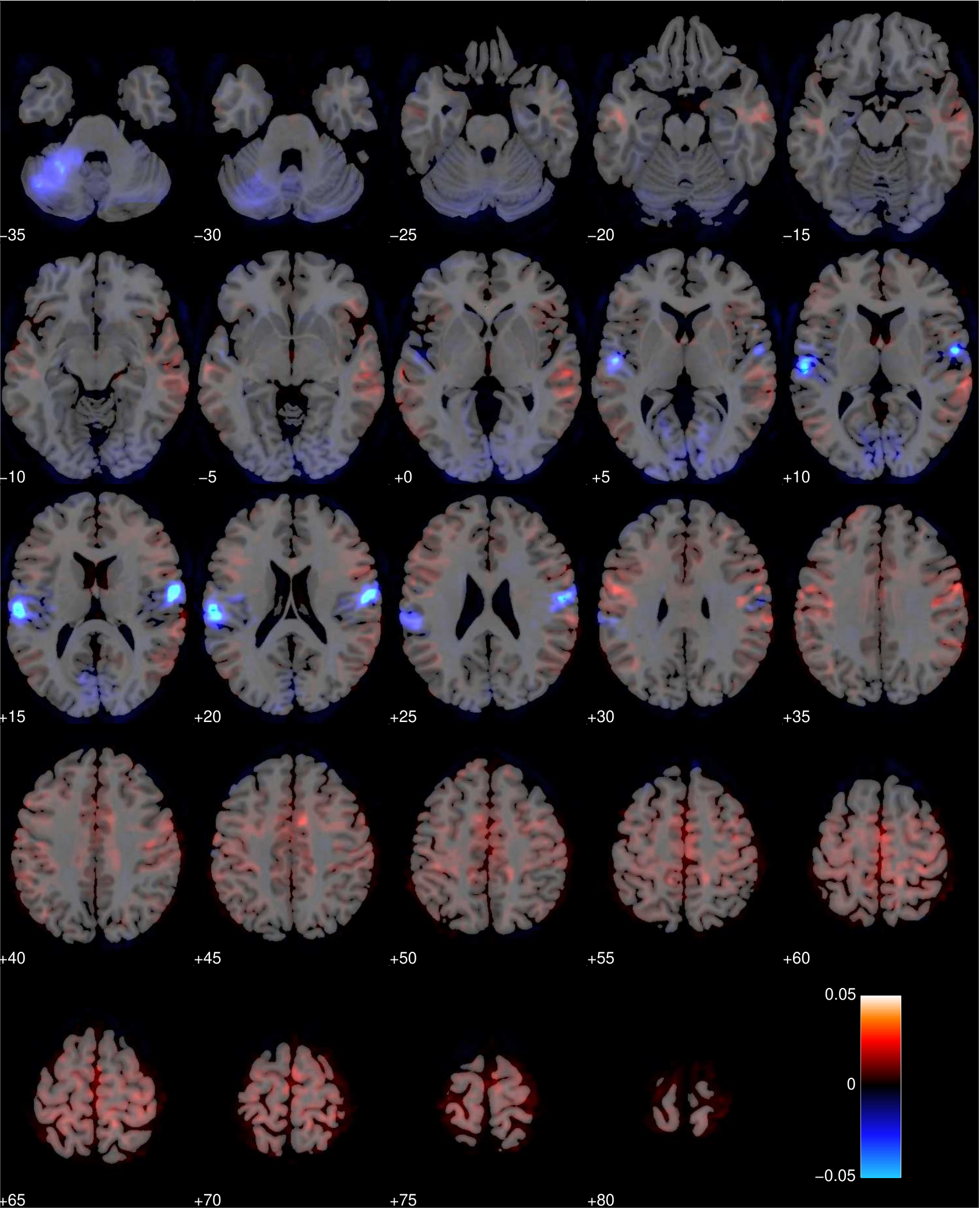}\\
\end{minipage}
\label{fig:fmri_sseigs_pacB}
}
\subfigure[Semi-supervised eigenvectors (PAC), $x_3$]{
%Semi-supervised eigenvectors - $\sigma_i^2=80$
\begin{minipage}[b]{0.49\linewidth}
\centering
\includegraphics[scale=0.4]{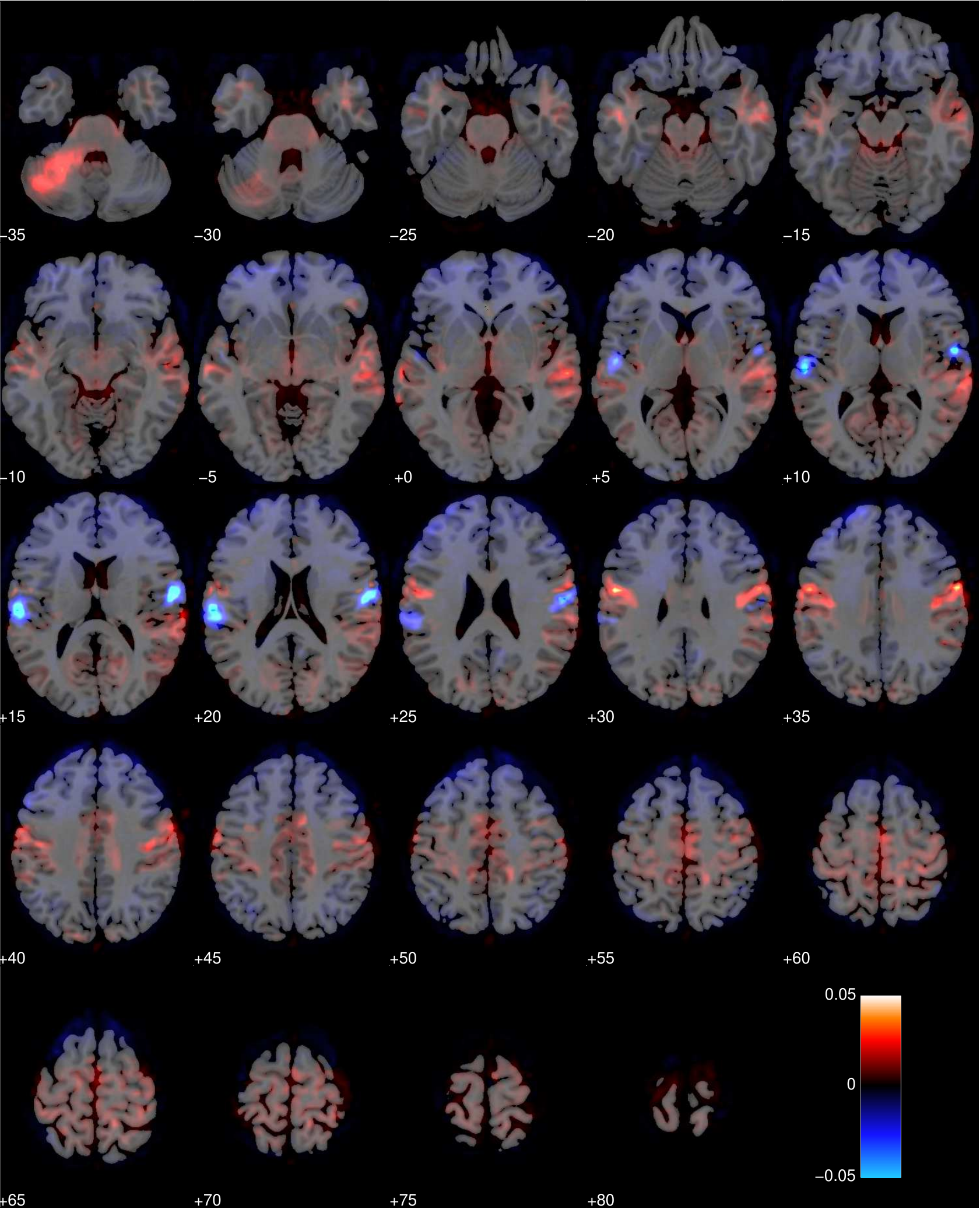}\\
\end{minipage}
\label{fig:fmri_sseigs_pacC}
}
\subfigure[Semi-supervised eigenvectors (PAC), $x_4$]{
%Semi-supervised eigenvectors - $\sigma_i^2=200$
\begin{minipage}[b]{0.49\linewidth}
\centering
\includegraphics[scale=0.4]{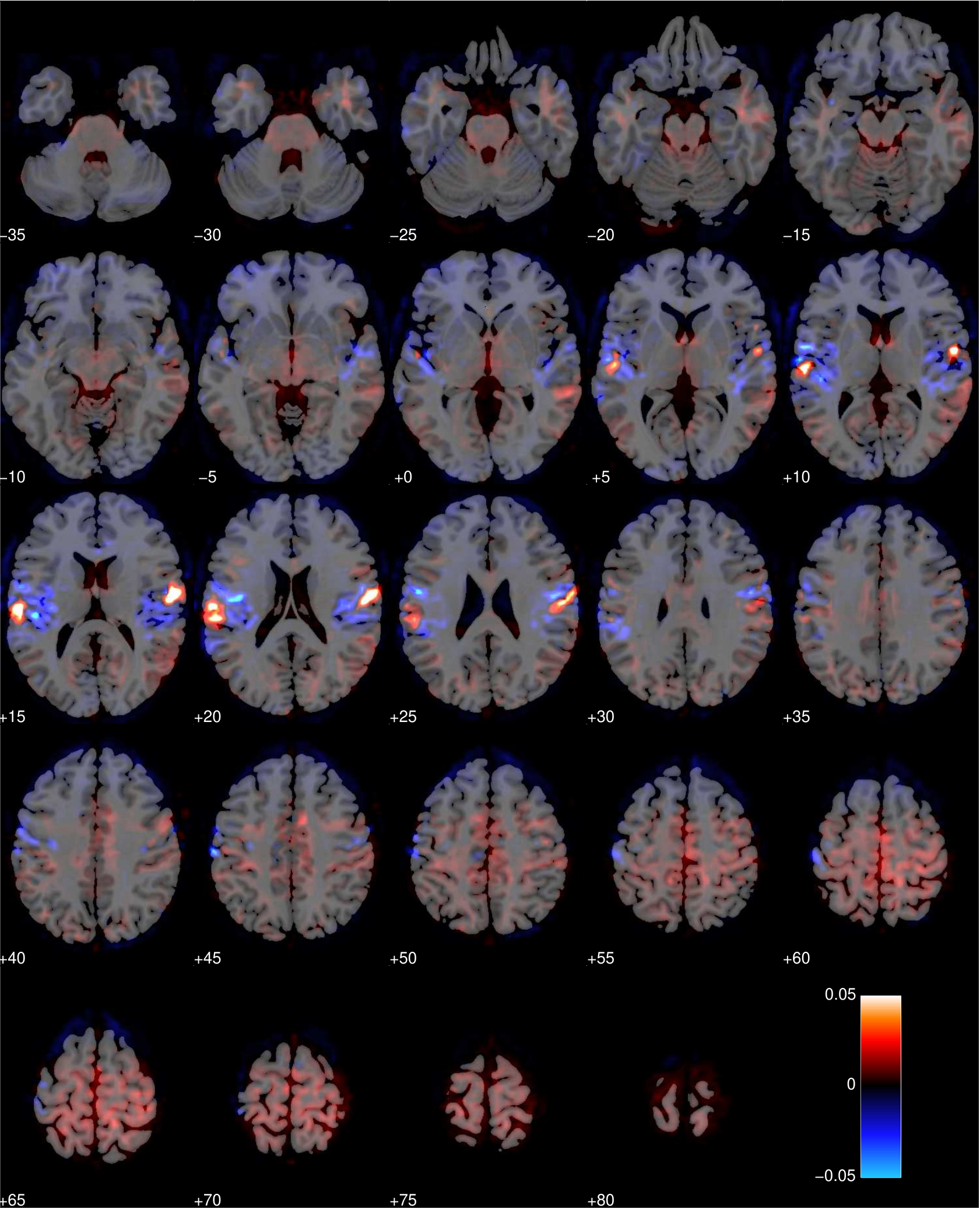}\\
\end{minipage}
\label{fig:fmri_sseigs_pacD}
}
\caption{Visualization of the leading 4 semi-supervised eigenvectors seeded in PAC, each correlating $0.25$ with the seed, that is visualized in Figure \ref{fig:fmriaccuracyPAC}.}
\label{fig:fmri_sseigs_pac}
\end{figure*}

Searchlight is an algorithm that scans through the whole brain by running multiple multivariate region-of-interest (ROI) analyses, measuring the respective generalization performance, and outputs a brain map showing which regions exhibit the best discriminative properties, for example measured by classification accuracy for a particular subject task~\cite{Kriegeskorte2006}. This approach was for instance applied by \cite{Haynes2007}, who used it to find regions in the brain that are predictive with respect to human intentions. 
Compared to a univariate approach, searchlight takes advantage of the power of multivariate techniques, with the caveat that it only performs well if the target signal is available within the area covered by the ROI. This limitation is indeed shared by the univariate approaches, but with searchlight we have the freedom to increase or decrease the ROI, depending on the structure of the considered problem. If the ROI is small we approach a univariate analysis, whereas if the ROI is large, the information localization becomes less specific. 
Thus, if the multivariate signal is spatially distributed the searchlight approach will fall short, and simply increasing the ROI may not be a solution as irrelevant time series will decrease the SNR.

The semi-supervised eigenvectors can be used to construct a spatially-guided basis that naturally allows for spatially distributed signal representations. This strategy shares many similarities with there searchlight approach, but it is not tied to a particular ROI, and it can span distributed voxel time series that are similar in terms of our graph representation. Using the semi-supervised eigenvectors on the $\text{voxel}\times \text{voxel}$ similarity graph in this way will yield a low dimensional representation that we can project the fMRI voxel time series onto, and in that projected space we can apply any suitable classification algorithm.

We tested the method on Blood Oxygenation Level Dependent (BOLD) sensitive fMRI data acquired on a 3 Tesla MR scanner (Siemens Magnetom Verio).
Additional sequence parameter were as follows: 25 interleaved echo planar imaging gradient echo slices, echo time 30 ms, repetition time 1390 ms, flip angle 90 degrees.
During the scanning session (1300 volumes) the subject was engaged in a simple motor paradigm in which the subject was asked to respond with key-presses when a visual cue was presented, and the classification task is to detect such key-presses. Pre-processing steps included: rigid body realignment, spatial smoothing (6 mm full width at half maximum isotropic Gaussian kernel), and high-pass filtering (cut-off frequency 1/128 Hz). See~\cite{strother2006evaluating} for more details. 

%The contributions of the following analysis is twofold; 1) We demonstrate that semi-supervised eigenvectors can be more informative in terms of feature extraction compared to usual region based feature extraction. Due to spatial smoothing in fMRI, nearby regions are often correlated, and the space extracted using binary masks is likely to be poorly conditioned.
%2) We investigate the impact of the rank-parameter in the Nystr\"{o}m approximation as well as the impact of the length scale in the RBF kernel.
 
We construct a $\text{voxel} \times \text{voxel}$ $10$-nearest neighbor graph using the nonlinear affinity $w_{ij}=\text{exp}(- \|z_i-z_j \|^2)$.
Figure \ref{fig:fmri_global_eigs} shows the 4 leading non-trivial global eigenvectors projected onto a sliced brain. Note that the first slice (top left) in such an image corresponds to the bottom of the brain, whereas the last slice (bottom right) corresponds to the top of the brain. 
The non-trivial global eigenvectors aim to span the most dominant sources of variation in the data, which in this particular dataset appears to stem mainly from the primary visual cortex (V1), and a frontal/posterior contrast apparent in the second global eigenvector. Importantly, the global eigenvectors are typically not associated with the interesting features of the task but rather general signal variation, which may be due to visual presentation of the stimuli (visual cortex) and often physiological noise sources typically dominant in the lower slices of the brain near large arteries.

Using a probabilistic functional atlas created by averaging across multiple subjects~\cite{Eickhoff2005}, we carry out two experiments based on semi-supervised eigenvectors. 
Specifically, we construct  semi-supervised eigenvectors seeded in Primary Motor Cortex (PMC), known to be highly involved in the subject task~\cite{geyer1996two}, as well as  
semi-supervised eigenvectors seeded in Primary Auditory Cortex (PAC), that is not expected to carry much signal with respect to our target variable~\cite{morosan2001}.
The seed regions are highlighted in Figure \ref{fig:fmriaccuracyPMC} and \ref{fig:fmriaccuracyPAC}.

Figure \ref{fig:fmri_sseigs_pmc} and \ref{fig:fmri_sseigs_pac} shows respectively the leading 4 semi-supervised eigenvectors, each having a correlation of $0.25$ with the seed, and respectively seeded in PMC and PAC. As expected the semi-supervised eigenvectors are dominant near the seed region but are able to spread to related regions which carry information about important signal variation. For the PAC seed the first eigenvector appears to capture the general pattern of signal variation in part of the cortex that focus on auditory processing. The remaining three eigenvectors appear to span  specific signal variations in the PAC that are more specific to subregions with the auditory cortex.

Likewise the first semi-supervised eigenvector from the seed in the PMC reveals other dominant parts of the motor network including the remaining parts of the PMC (posterior part of Brodmann area 4), somatosensory cortex (Brodmann areas 1,2 and 3) and the premotor cortex (Brodmann area 5). The remaining semi-supervised eigenvectors again focus on more localized sources of signal within these areas as well as signal variation in the primary visual cortex (Brodmann area 17), which is to be expected as the visual presentation of stimuli is related to motor function in the present task.

%.....KRISTOFFER.....
%and  compared to the seed sets that are explicitly shown in Figure \ref{fig:fmriaccuracyB}, it is clear that this data-driven approach naturally expands the seed set to other brain regions that for the particular subject shows similar characteristics.
For comparison in our classification task, we consider the leading global eigenvectors of the graph Laplacian, as well as simply extracting the time series as specified by the seed regions. For all of the considered feature extraction approaches we use either the projected or extracted time series as data for a linear SVM that is responsible for the downstream classification task.
Figure \ref{fig:fmriaccuracyA} summarizes the classification accuracies obtained by performing leave-one-out cross validation as a function of the number of components. For each semi-supervised eigenvector we fix $\kappa=\frac{1}{k}$ where $k$ is the number of components. Hence, for two components, each correlates $0.5$ with the seed, and so forth.
In the same plot, the dashed blue line corresponds to classifying the brain state using only voxel time series in the region as defined by PAC. Unsurprisingly, for the dashed green line, corresponding to PMC, it is evident that the primary motor cortex is a much better proxy for predicting motor responses. Due to inter-subject variability there is no guarantee that the rigid body realignment will align the seed perfectly with the physical region, which explains why the data-driven global eigenvectors are able to yield an even higher accuracy than the PAC time series. 
Also seen is the ``bump'' in classification accuracy for the global eigenvectors, when we reach 4-5 components. Thus, for this particular dataset, relevant parts of the are signal are captured in this regime. 
 
%Moreover, motor signals in fMRI are generally known to be characterized by a high SNR, and explains why the global eigenvectors capture the discriminative component within the leading 10 eigenvectors.
In the regime of few semi-supervised eigenvectors, the solutions are too localized to explain relevant local heterogeneities both near and within the seed set. As we increase the number of components they become less localized, and the semi-supervised eigenvectors seeded in PMC eventually surpasses the accuracy of global approach. As we consider more and more components, while distributing the correlation evenly across the semi-supervised eigenvectors, they will eventually converge to the global eigenvectors. Complementary, in the limit of a single component, the projection onto the leading trivial global eigenvector will simply correspond to the average time series, whereas for a leading semi-supervised eigenvector the solution is simply the seed itself, \emph{i.e.}, the projection onto this corresponds to a weighted average in the region defined by the seed. Hence, as seen in Figure \ref{fig:fmriaccuracyA} there exists a regime in which the semi-supervised approach performs better as we are able to pickup the relevant local heterogeneities at that particular scale, given that the seed is relevant with respect to the subject task.

\subsection{Large-scale Network Data}
\label{sec:largescalenetworkdata}
The final datasets we consider are from a collection of large sparse networks~\cite{BCSU3,BRSLLP,BoVWFI}. On these data, we demonstrate that the Push-peeling Heuristic introduced in Section \ref{sec:reid} is attractive due to an improved running time, as compared to solving a system of linear equations. Moreover, we also show that the ability to obtain multiple semi-supervised eigenvectors depends on the degree heterogeneity near the seed. Finally, we empirically evaluate the influence of the $\epsilon$ parameter of the Push algorithm that implicitly determines how many nodes the algorithm will touch. This parameter can be interpreted as a regularization parameter (different from $\gamma$ parameter), and setting it too large means we fail to distribute mass in the network, so that a few semi-supervised eigenvectors will consume all of the correlation. In particular, this behavior was investigated on the MNIST digits in Section \ref{sec:mnistpeeling}.
%and the degree to which this statement is true will of course depend on the seed, degree distribution, and the actual structure graph \emph{etc.}
%The next data sets we consider are large sparse networks.
%XXX.  TOKE, DO THIS SECTION \emph{AFTER} MNIST, FMRI, AND NEWER THEORY SECTIONS ARE FINALIZED.
%These are chosen to illustrate the following points.
%\begin{itemize}
%\item The Push algorithm by Reid is attractive due to an improved running time, compared to solving a system of linear equations.
%\item The ability to obtain multiple semi-supervised eigenvectors depends on the degree heterogeneity near the seed.
%\item The $\epsilon$ parameter of the Push algorithm that determines how many nodes we touch, can be interpreted as a regularization parameter (different from $\gamma$ parameter), -setting it too large means we fail to distribute mass in the network, meaning that few semi-supervised eigenvectors will consume all the correlation, and the remaining will be poorly correlated with the seed. The degree to which this statement is true will of course depend on the seed, degree distribution, and the actual structure graph \emph{etc.}
%\end{itemize}
The basic properties for the networks considered in this section are shown in Table \ref{tab:networks}.

We start by considering the moderately sized networks from the DIMACS implementation challenge, as these networks are commonly used for the purpose of measuring realistic algorithm performance. Figure \ref{fig:DIMACS} shows analysis results for 6 networks from this collection, where we evaluate the performance and feasibility of the Push algorithm for approximating the leading semi-supervised eigenvector. 

 \begin{table*}[ht]
\centering
\begin{tabular}{lrr}
\hline
Network name & Number of nodes & Number of edges\\ %& max degree  & min degree  \\
\hline 
DIMACS10/de2010 & 24,115 & 116,056 \\
DIMACS10/ct2010 & 67,578 & 336,352\\
DIMACS10/il2010 & 451,554 & 2,164,464\\
DIMACS10/smallworld & 100,000 & 999,996 \\
DIMACS10/333SP & 3,712,815 & 22,217,266 \\
DIMACS10/AS365 & 3,799,275 & 22,736,152\\
LAW/arabic-2005 & 22,744,080 & 1,107,806,146 \\
LAW/indochina-2004 & 7,414,866 & 301,969,638 \\
LAW/it-2004 & 41,291,594 & 2,054,949,894 \\
LAW/sk-2005 & 50,636,154 & 3,620,126,660 \\
LAW/uk-2002 & 18,520,486 & 523,574,516 \\
LAW/uk-2005 & 39,459,925 & 1,566,054,250 \\
\hline
\end{tabular} 
\label{tab:auc_irm}
\caption{Summary of the networks considered in this section. Some of these networks are directed and have been symmetrized for the purpose of this analysis, \emph{i.e.}, the number edges in this table refer to the number of edges in the undirected graph.}
\label{tab:networks}
\end{table*} 

As stated in Section \ref{sec:discussion}, diffusion based procedures such as the Push algorithm can be used to solve our objective for $\gamma<0$. The impact of the reduced search range is that such procedures may not be able to produce a uniform correlation distribution for a set of semi-supervised eigenvectors. Hence, the leading solution(s) will instead pickup too much correlation, because sufficient mass cannot to diffuse away from the seed set.
However, the effect of a non-uniform correlation distribution was analyzed on the MNIST data in Section \ref{sxn:empirical-digits}, where we found that the performance of a downstream classifier is fairly robust to such non-uniformities, as seen by the simplex in Figure \ref{fig:dirichlet_kappa}.
Consequently, we emphasize that in a large-scale setting such side effects of diffusion based procedures is offset by the advantage of a greatly improved time complexity as compared to solving the system of linear equations, that implicitly touch every node.

\begin{figure*}[hbt!]
\subfigure[]{
\begin{minipage}[b]{0.31\linewidth}
\centering
\includegraphics[scale=0.38]{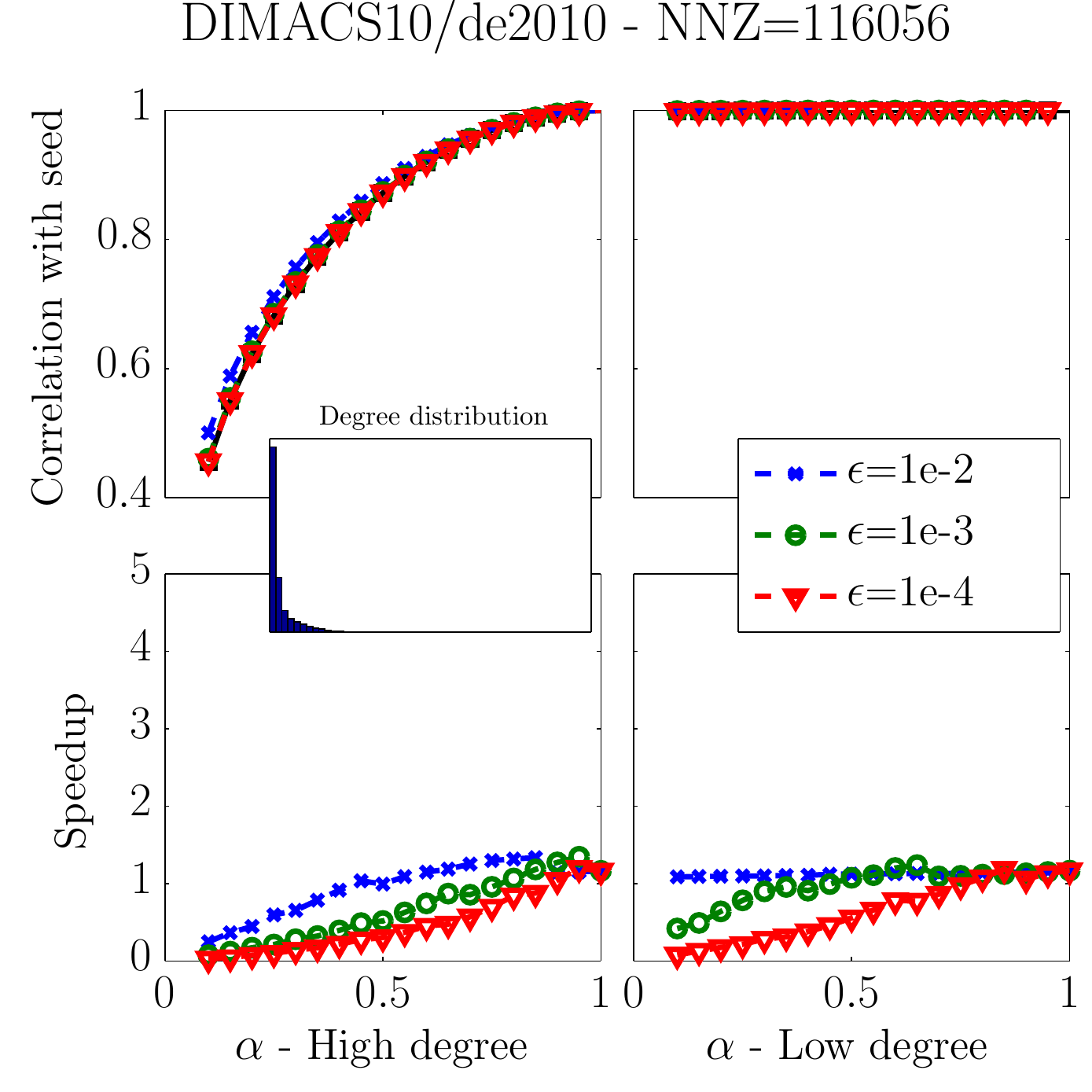}
\end{minipage}
\label{fig:DIMACSA}
}
\subfigure[]{
\begin{minipage}[b]{0.31\linewidth}
\centering
\includegraphics[scale=0.38]{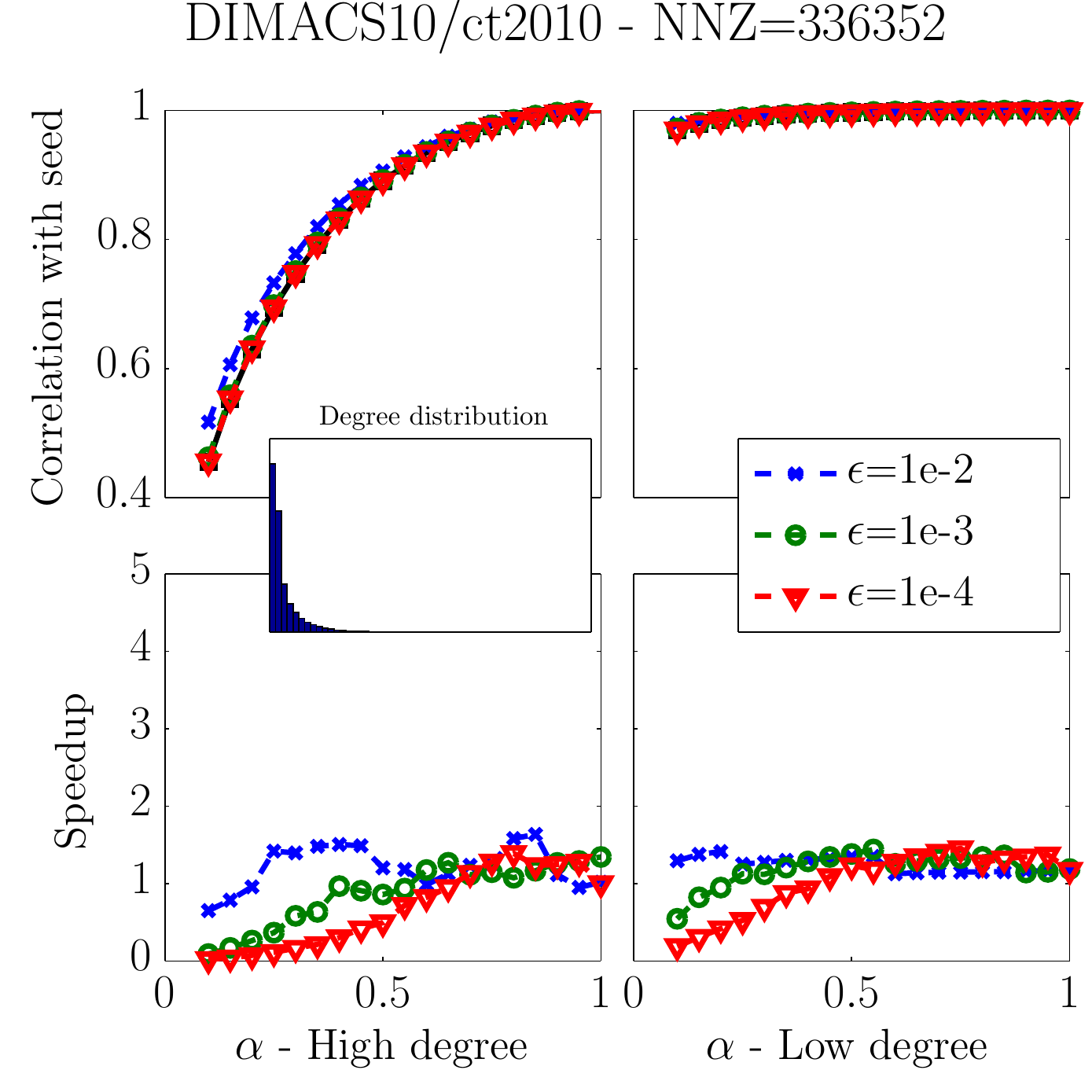}
\end{minipage}
\label{fig:DIMACSB}
}
\subfigure[]{
\begin{minipage}[b]{0.31\linewidth}
\centering
\includegraphics[scale=0.38]{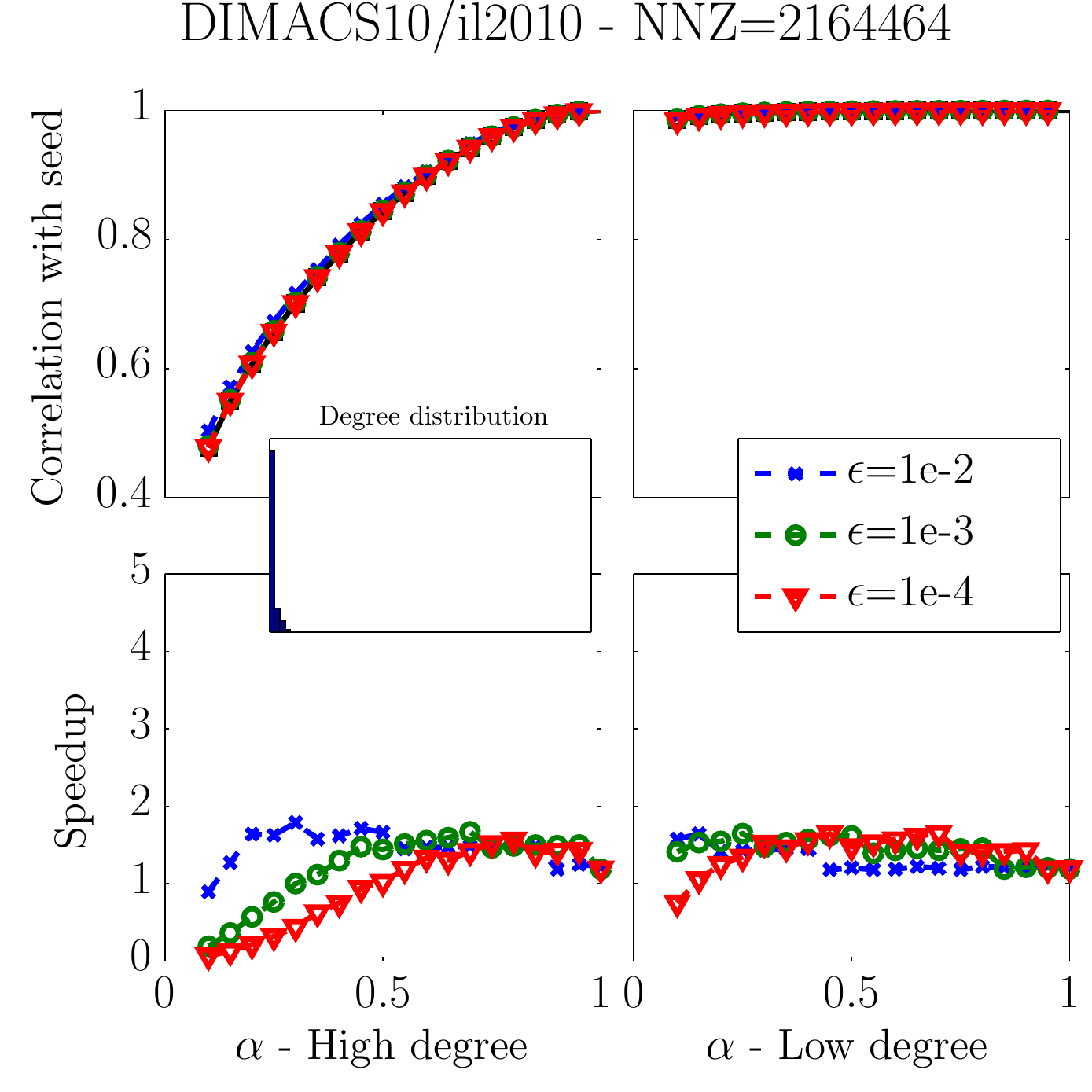}
\end{minipage}
\label{fig:DIMACSC}
}
\subfigure[]{
\begin{minipage}[b]{0.31\linewidth}
\centering
\includegraphics[scale=0.38]{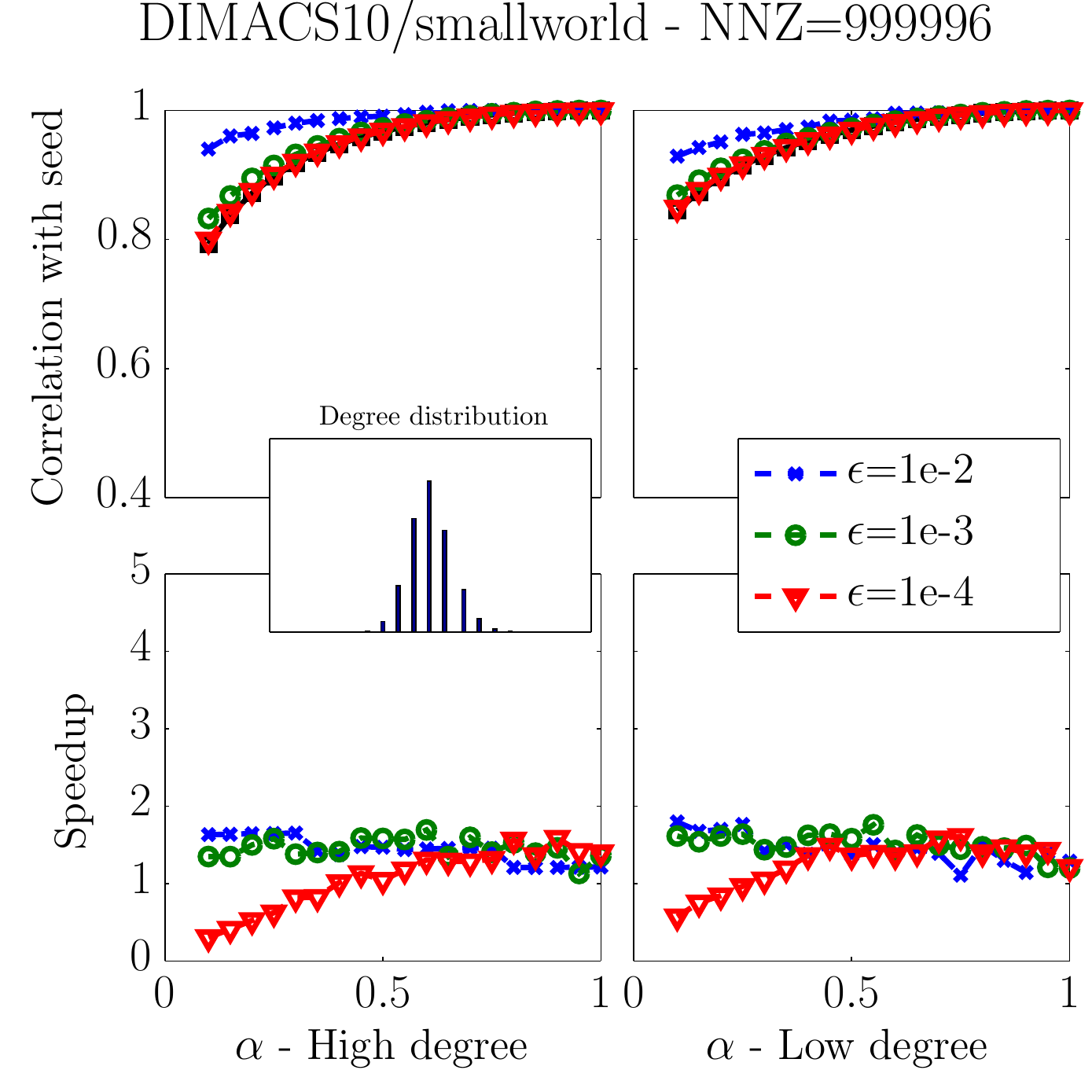}
\end{minipage}
\label{fig:DIMACSD}
}
\subfigure[]{
\begin{minipage}[b]{0.31\linewidth}
\centering
\includegraphics[scale=0.38]{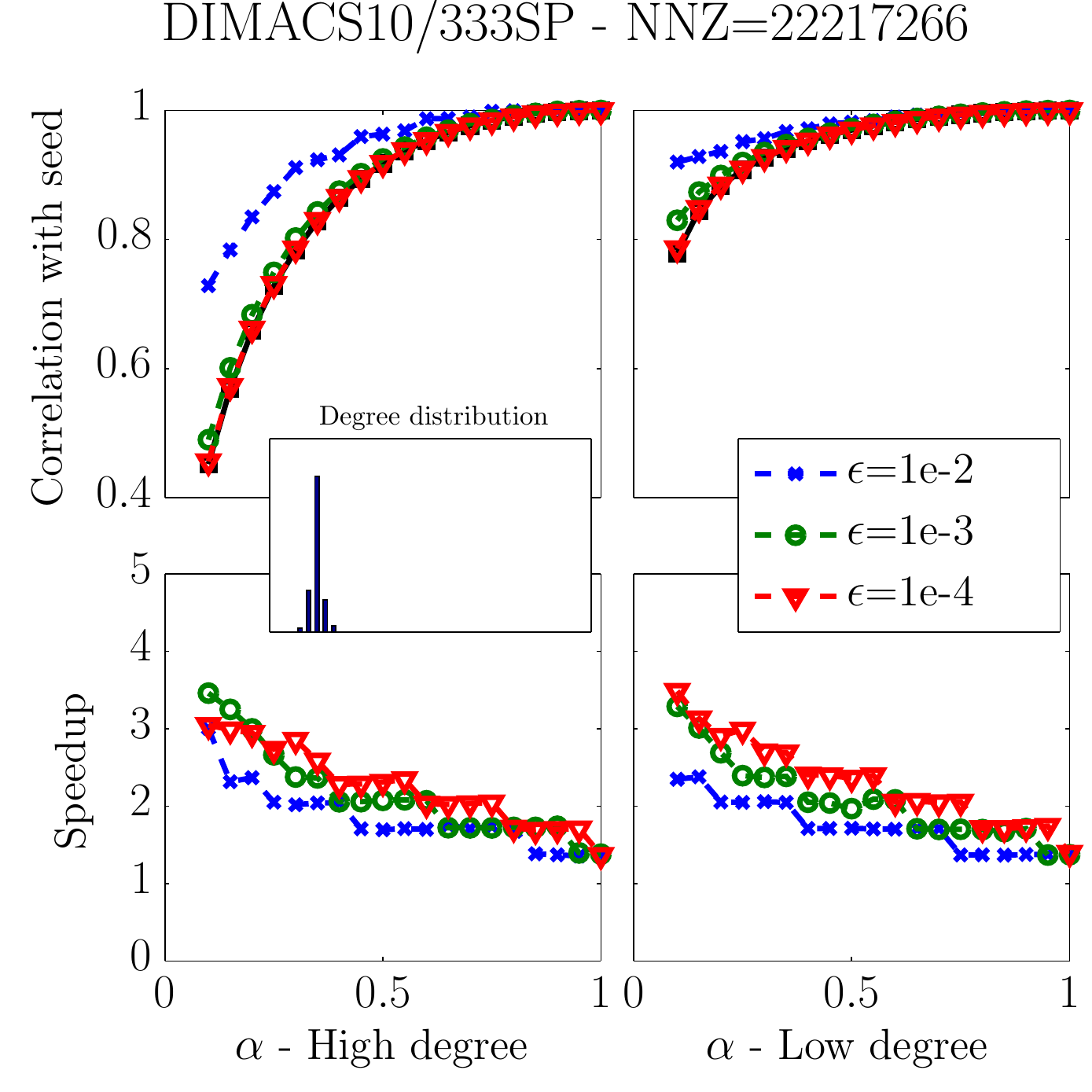}
\end{minipage}
\label{fig:DIMACSE}
}
\subfigure[]{
\begin{minipage}[b]{0.31\linewidth}
\centering
\includegraphics[scale=0.38]{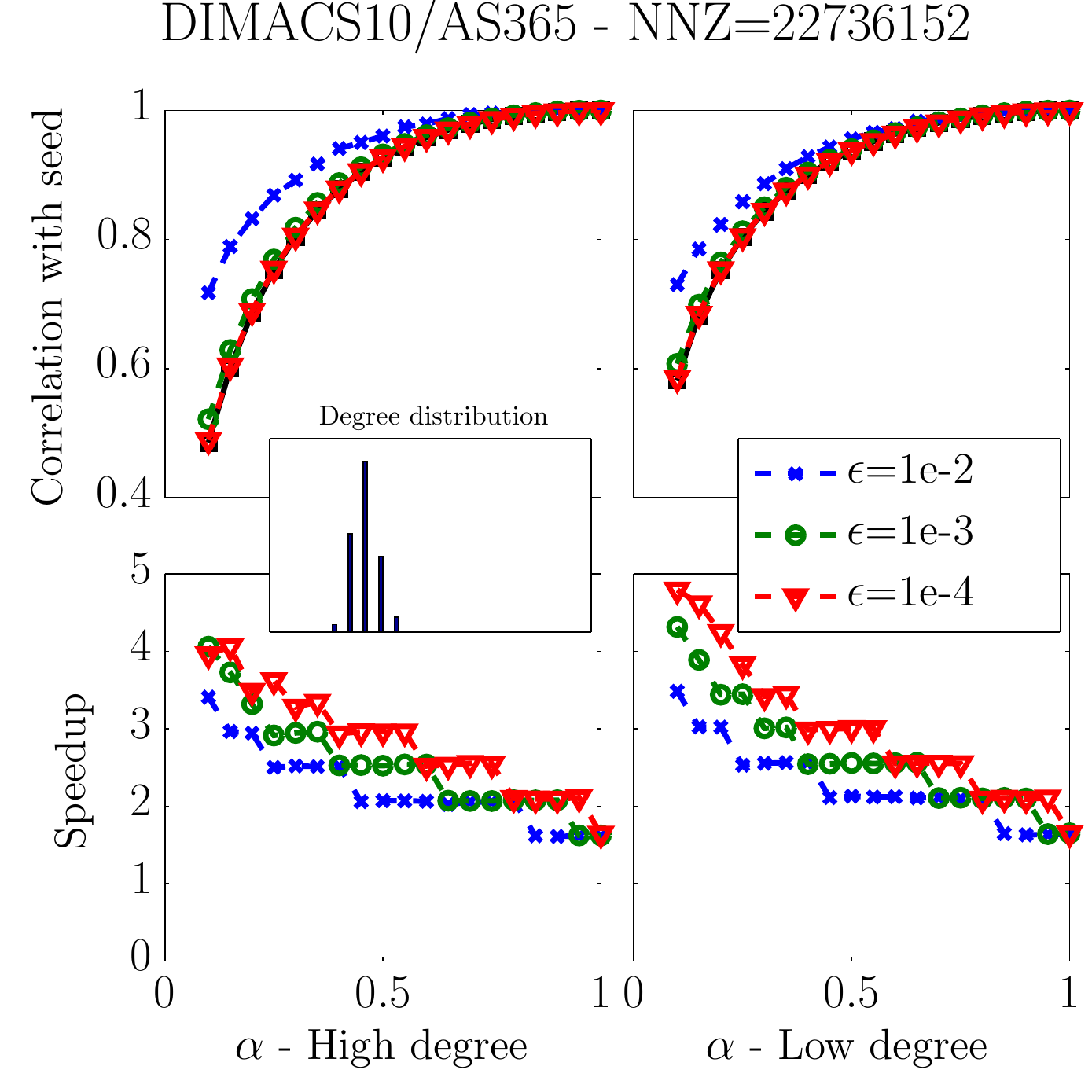}
\end{minipage}
\label{fig:DIMACSF}
}
\caption{For each network the first row depicts how the correlation decays as $\alpha$ tends towards 0, whereas the bottom row shows the speedup relative to the standard approach using conjugate gradient with a tolerance of $1\text{e-6}$, that is the default approach in our software distribution. Besides the three considered values of $\epsilon$ the correlation plots also illustrate the decay based on conjugate gradient (black curve), however this may be difficult to see, as the Push algorithm for $\epsilon=1\text{e-4}$ coincides with that solution.
Finally, seeds based on a high degree and low degree node are presented in respectively the first and last column, and the degree distribution for the network is visualized in a minor overlapping plot. 
}
\label{fig:DIMACS}
\end{figure*}

For each of the 6 analyzed networks in Figure \ref{fig:DIMACS}, we run two experiments considering different seeds, using respectively a high degree and low degree single seed node.
Figure \ref{fig:DIMACSA}-\ref{fig:DIMACSC} considers census block networks characterized by heavy-tailed degree distributions, whereas Figure \ref{fig:DIMACSD}-\ref{fig:DIMACSF} considers more densely connected synthetic networks.  For each of these 6 networks the speedup is measured by comparing with a standard conjugate gradient implementation using a tolerance of $1\text{e-6}$, and we stress that this tolerance cannot be directly compared with $\epsilon$ in the Push algorithm. Moreover, we test three different settings of the $\epsilon$ parameter, and we emphasize that for $\epsilon=1\text{e-4}$, the Push algorithm produces a similar result as the conjugate gradient algorithm. In Figure \ref{fig:DIMACS} this can be seen by the red curve ($\epsilon=1\text{e-4}$) in the correlation decay plots (see the figure caption) being on top of the black curve (conjugate gradient).

Common for Figure \ref{fig:DIMACSA}-\ref{fig:DIMACSC} are that low degree seed nodes yield very localized solutions for the entire range of $\alpha$, opposed to the high degree nodes that all succeed in gradually reducing the correlation when $\alpha$ is reduced. Also, the choice of $\epsilon$ is obviously very important, \emph{i.e.}, choosing it too large results in a solution that correlates too much with the seed, whereas choosing it too small means that we will be touching more nodes than necessary, resulting in a performance penalty. In general the networks analyzed in Figure \ref{fig:DIMACSA}-\ref{fig:DIMACSC} are too small to yield significant performance improvements over the conjugate gradient algorithm, and the Push algorithm is only competitive for large values of $\alpha$.

For the network in Figure \ref{fig:DIMACSD}, we see similar performance characteristics as the networks analyzed in Figure \ref{fig:DIMACSA}-\ref{fig:DIMACSC} due to its small size. However, the two final networks analyzed in Figure \ref{fig:DIMACSE}-\ref{fig:DIMACSF} share similar characteristics in terms of the degree distribution, but due to a much larger size they show significant performance improvements over the conjugate gradient algorithm. Interestingly, the Push algorithm instantiated with $\epsilon=1\text{e-4}$ yields a greater speedup in some settings, which may be explained by faster convergence, caused by a reduced threshold for distributing mass. Hence, the running time of the Push algorithm may not always decrease monotonically as $\epsilon$ increases.

%As we saw in Figure \ref{fig:DIMACS} the degree heterogeneity near the seed has a strong influence on how much the solution will correlate with the seed, but if the seed is sparsely connected the running time of the Push algorithm seem better in practice as fewer nodes are pushing mass away from the seed. 

In general it seems that seeding in a sparsely connected region of a network results in a solution having a large correlation with the seed for most values of $\alpha$. This is obviously a limiting factor if we are interested in using the peeling procedure to find multiple semi-supervised eigenvectors in that particular region. However, for large networks and more densely connected regions the benefit of the Push algorithm is immediate. 

\begin{figure*}[hbt!]
\subfigure[]{
\begin{minipage}[b]{0.31\linewidth}
\centering
\includegraphics[scale=0.38]{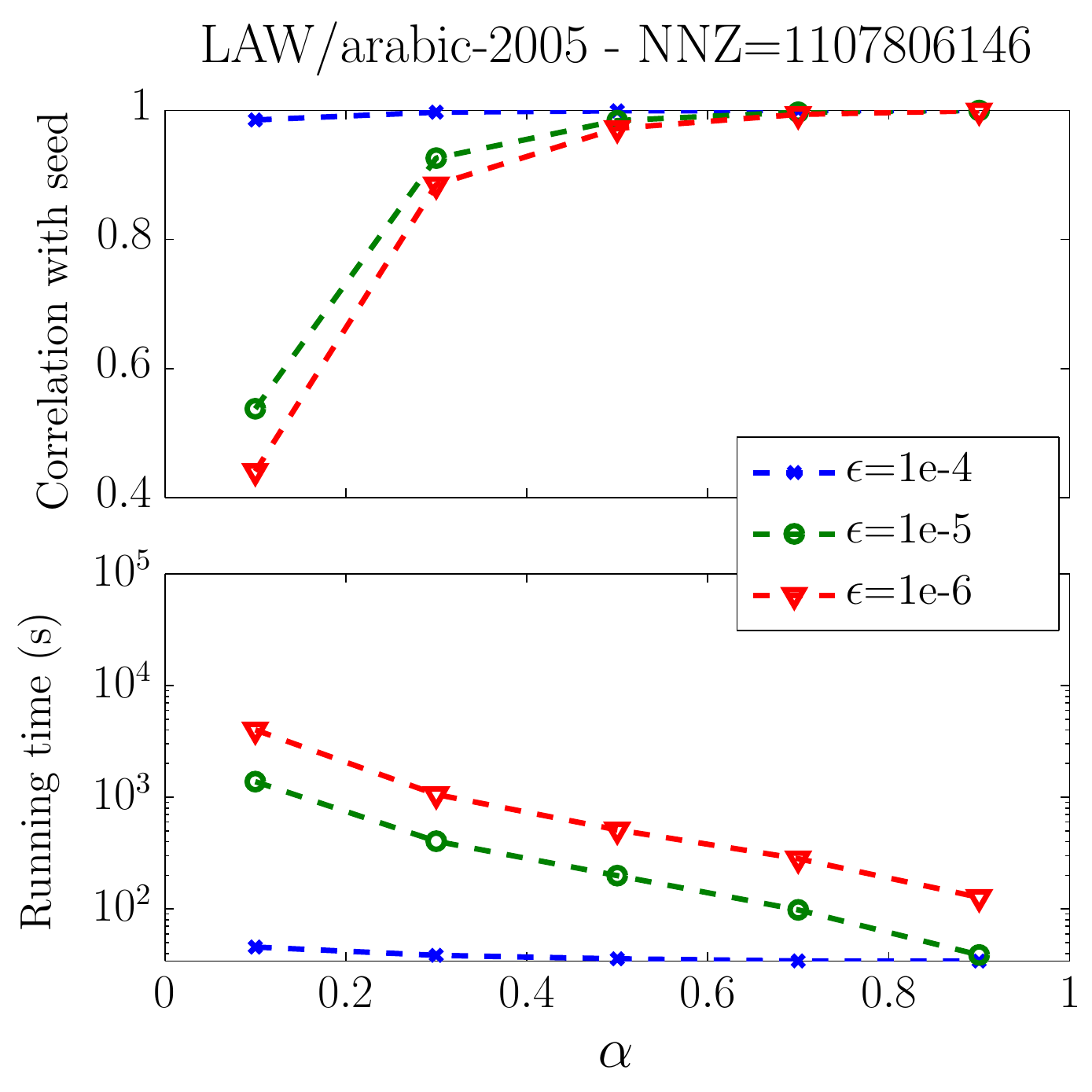}
\end{minipage}
\label{fig:LAWA}
}
\subfigure[]{
\begin{minipage}[b]{0.31\linewidth}
\centering
\includegraphics[scale=0.38]{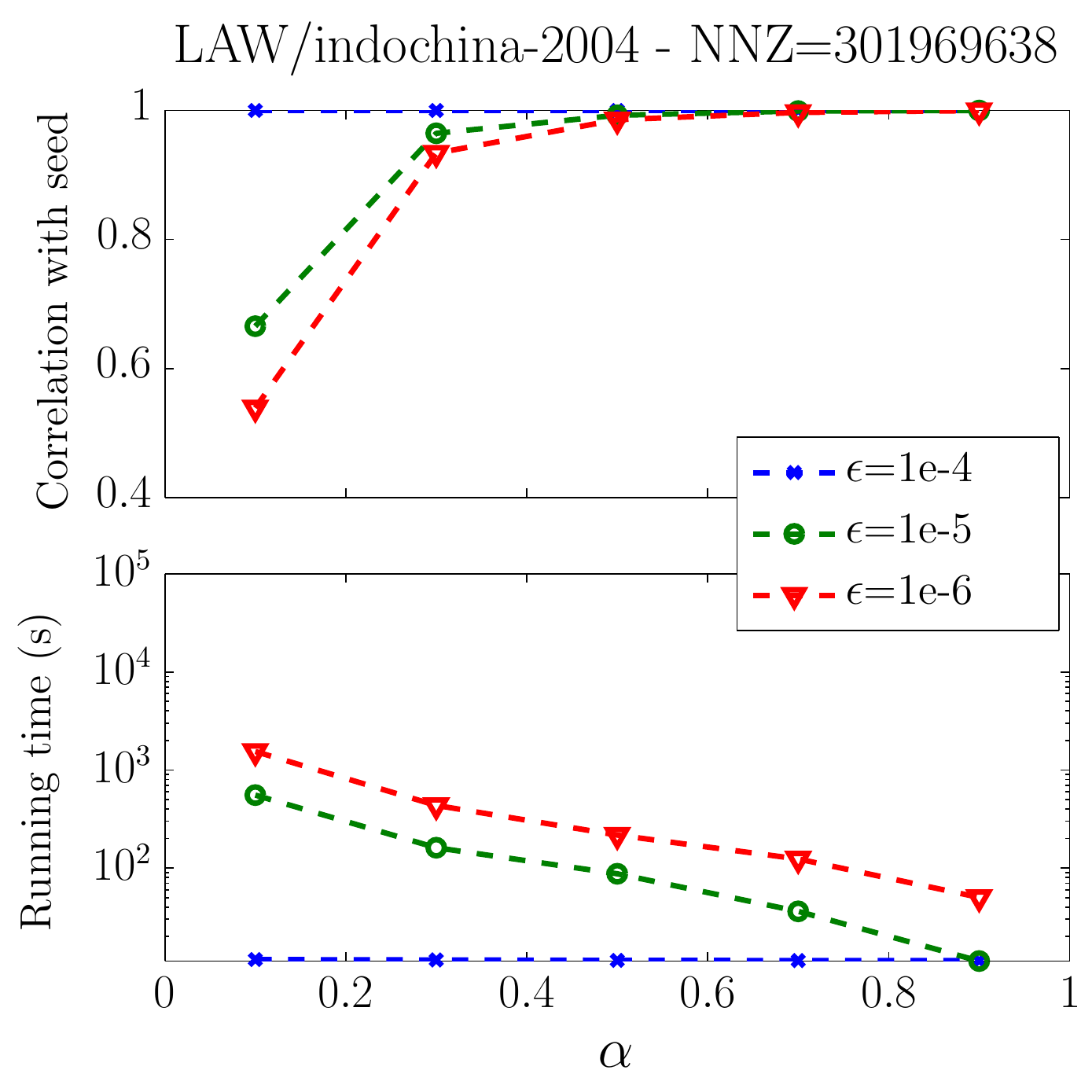}
\end{minipage}
\label{fig:LAWB}
}
\subfigure[]{
\begin{minipage}[b]{0.31\linewidth}
\centering
\includegraphics[scale=0.38]{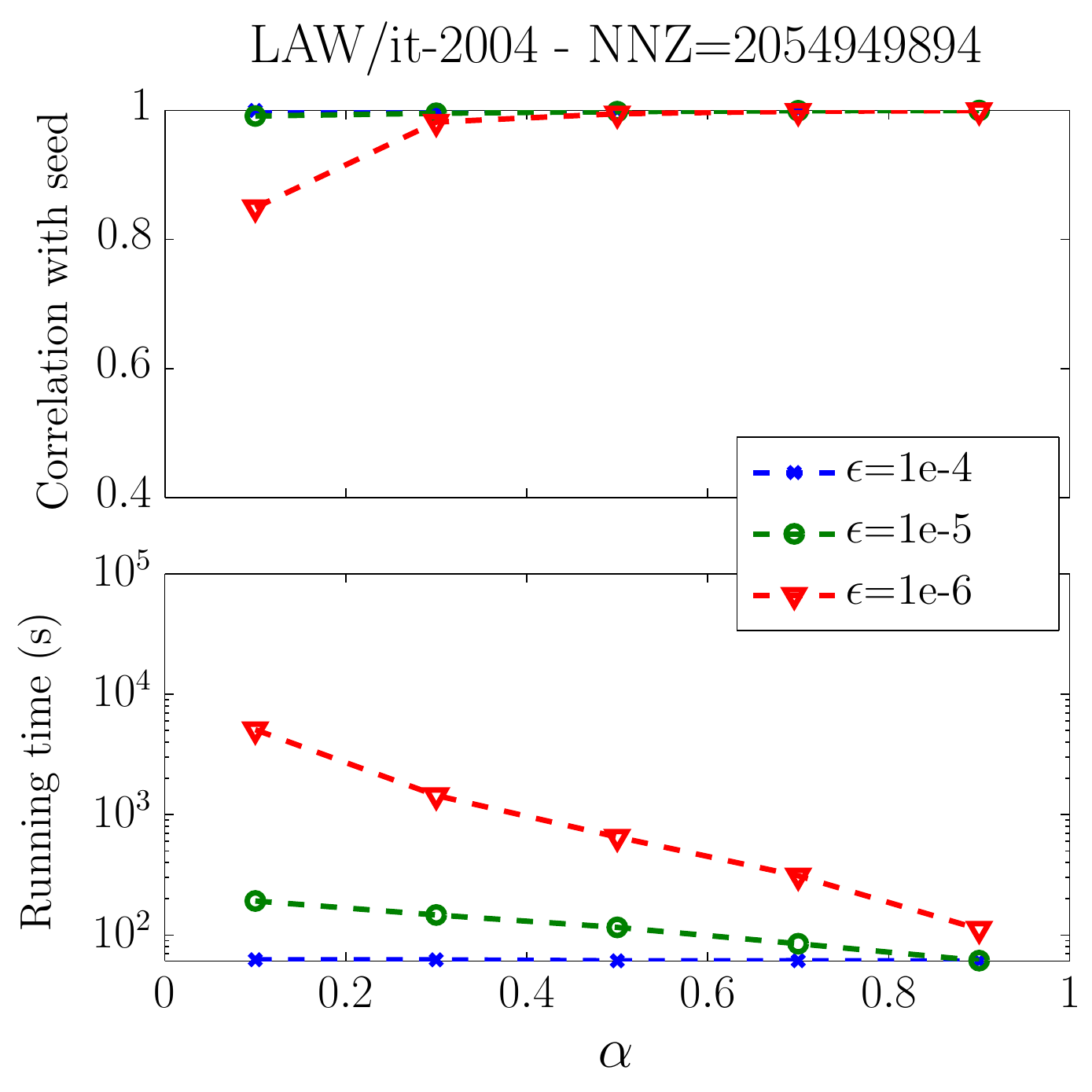}
\end{minipage}
\label{fig:LAWC}
}
\subfigure[]{
\begin{minipage}[b]{0.31\linewidth}
\centering
\includegraphics[scale=0.38]{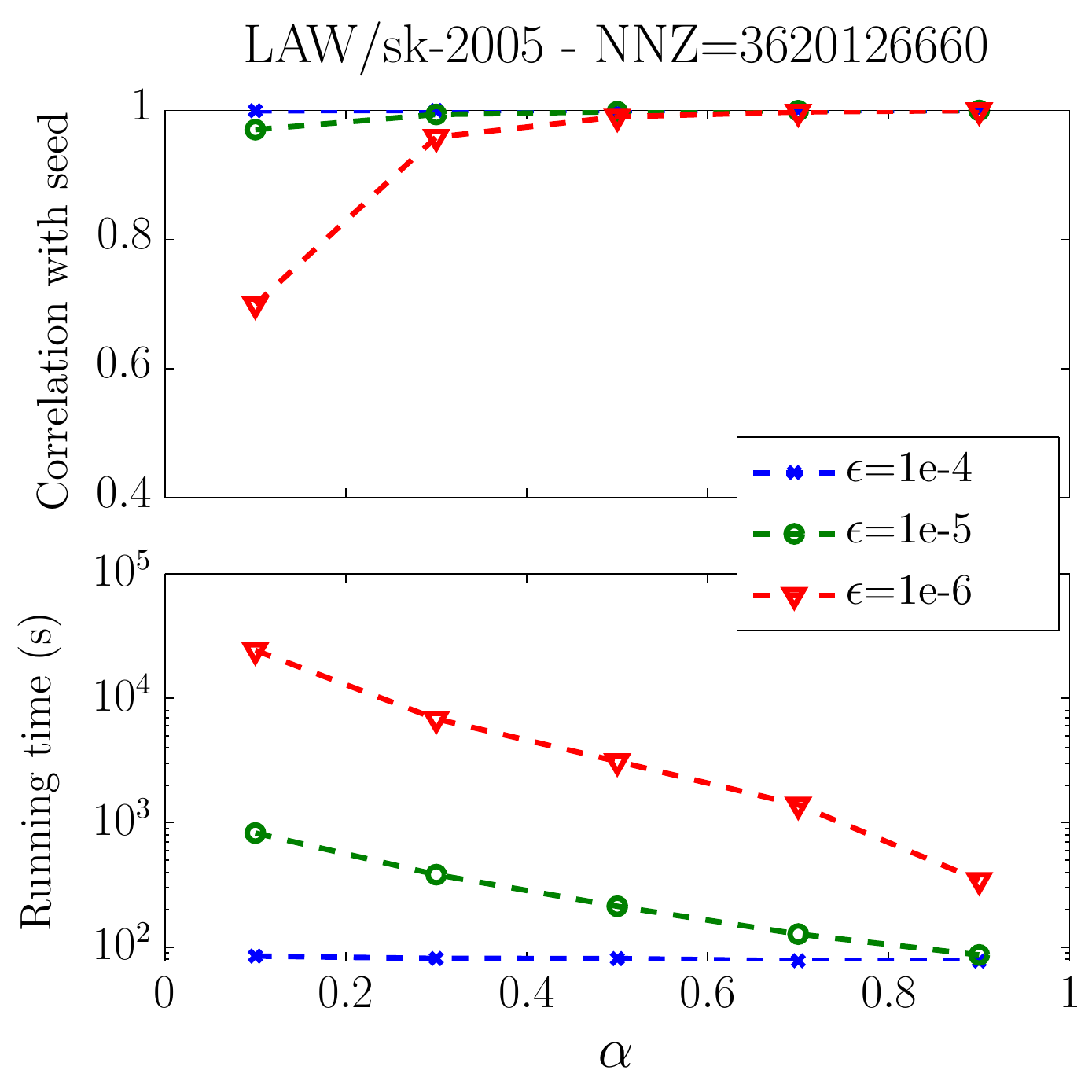}
\end{minipage}
\label{fig:LAWD}
}
\subfigure[]{
\begin{minipage}[b]{0.31\linewidth}
\centering
\includegraphics[scale=0.38]{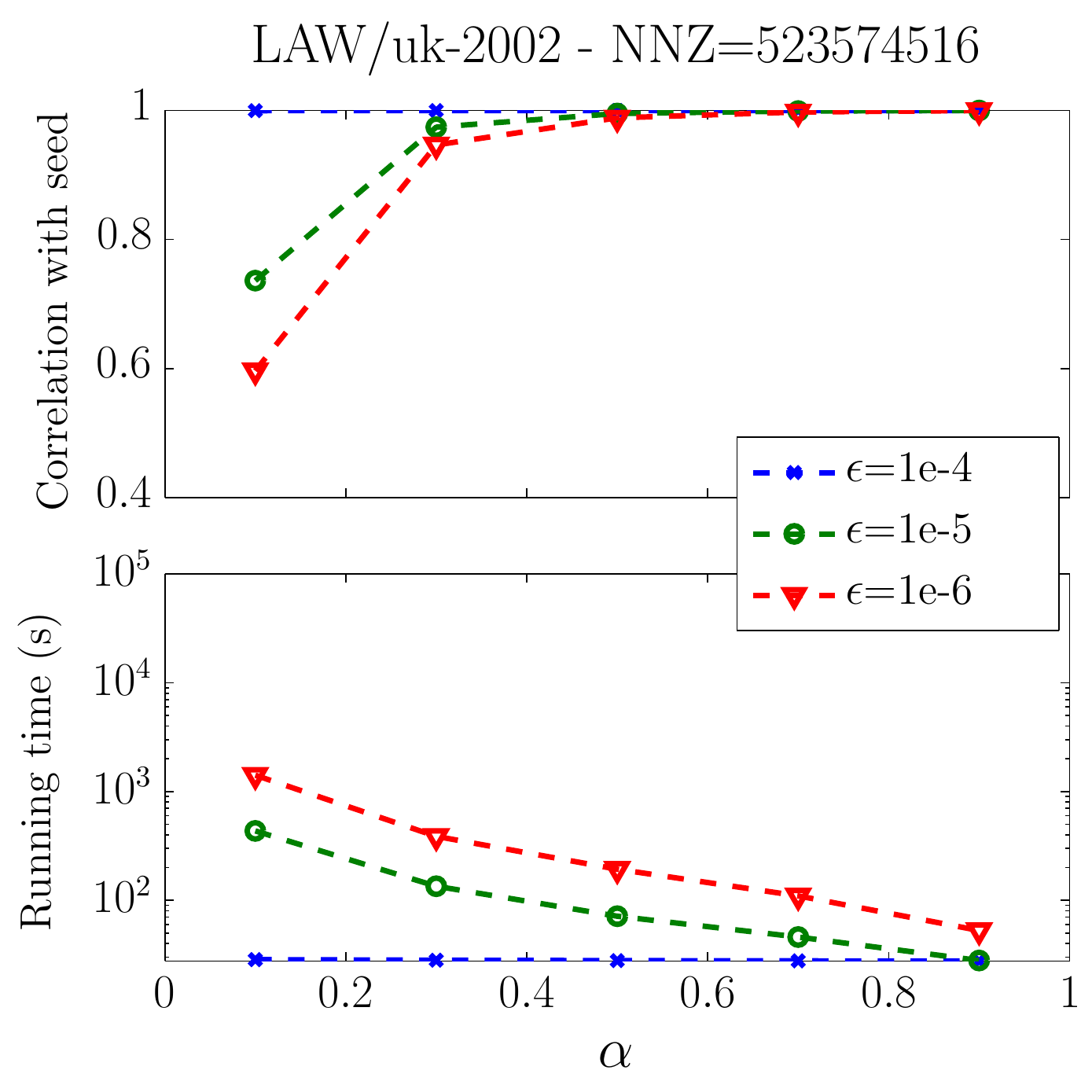}
\end{minipage}
\label{fig:LAWE}
}
\subfigure[]{
\begin{minipage}[b]{0.31\linewidth}
\centering
\includegraphics[scale=0.38]{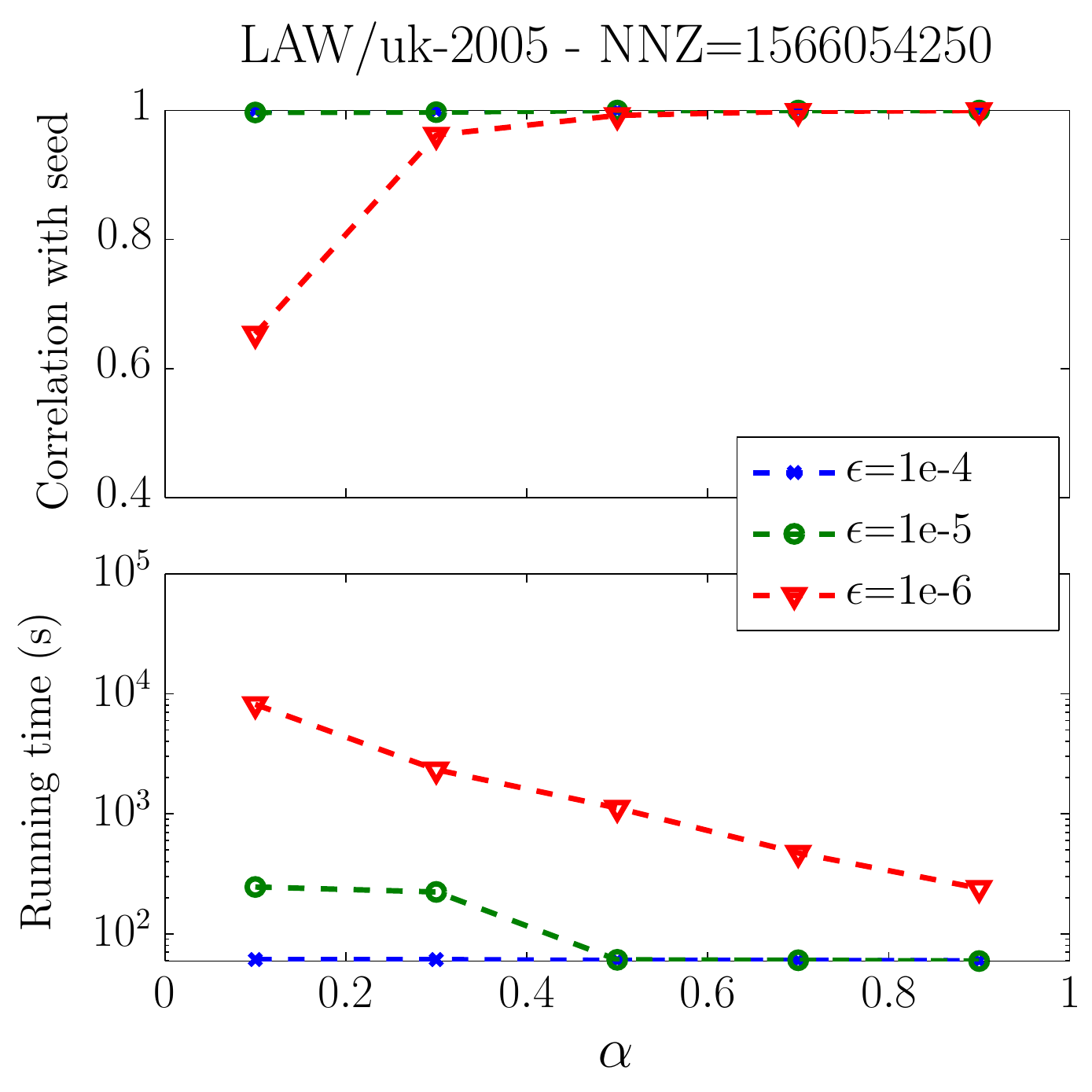}
\end{minipage}
\label{fig:LAWF}
}
\caption{Visualizes results for applying the Push algorithm to 6 very large web-crawl networks. For all networks we seed in the node with the highest degree. The top plot in each subfigure shows the correlation decay as a function of $\alpha$, whereas in the bottom plot we resort to absolute timings as the conjugate gradient algorithm is not feasible in this setting, as opposed to showing speedups as in Figure \ref{fig:DIMACS}.}
\label{fig:LAW}
\end{figure*}

Finally, we scale up to demonstrate that we can adapt the notion of semi-supervised eigenvectors to large datasets, and we do so by analyzing 6 large web-crawl networks.  These networks are large enough that touching all nodes is infeasible, \emph{i.e.}, conjugate gradient is not a feasible option, so in Figure \ref{fig:LAW} we resort to absolute timings. 
For the analysis results shown in Figure \ref{fig:LAW}, we are solely interested in giving the reader some intuition about the running time in a large-scale setting, as well as an idea on how the parameters interplay. Hence, we only consider experiments where we seed in a high degree node, as these are likely yield the worst running times, but also succeed in reducing the correlation the most. This will make the peeling procedure described in Section \ref{sec:reid} applicable, allowing us to obtain multiple semi-supervised eigenvectors.
As seen for all networks analyzed in Figure \ref{fig:LAWA}-\ref{fig:LAWF} the solution is highly sensitive to the choice of $\epsilon$, but for all networks we are able to reduce the correlation when $\alpha$ tends towards $0$ in case of $\epsilon=1\text{e-6}$. We emphasize that the reason for $\epsilon$ being smaller for these experiments, as compared to the previous is that the seed is normalized to have unit norm, implicitly requiring a lower $\epsilon$ when the network increases in size. 

For diffusion based procedures to be useful with respect to the computation of semi-supervised eigenvectors, mass must be able ``bleed'' away from the seed set and into the surrounding network. Otherwise only few semi-supervised eigenvectors can be found as the leading solution(s) become too correlated with the seed set. For moderately sized problems conjugate gradient performs very well, but in a large-scale setting, as considered here, the presented approach proves very efficient, allowing us to compute approximations to semi-supervised eigenvectors in networks consuming more than 30GB of working memory.
Obtaining an improved understanding of how the method of semi-supervised 
eigenvectors can be used to perform common machine learning tasks on 
graphs of that size is an obvious direction raised by our work.

\section{Conclusion}
\label{sxn:conclusion}

We have introduced the concept of semi-supervised eigenvectors as local 
analogues of the global eigenvectors of a graph Laplacian that have proven 
so useful in a wide range of machine learning and data analysis applications.
These vectors are biased toward prespecified local regions of interest in a 
large data graph; and we have shown that since they inherit many of the nice 
properties of the usual global eigenvectors, except in a locally-biased 
context, they can be used to perform locally-biased machine learning.
The basic method is conceptually simple and involves solving a sequence of
linear equation problems; we have also presented several extensions of the 
basic method that have improved scaling properties; and we have illustrated
the behavior of the method.
Due to the speed, simplicity, stability, and intuitive appeal of the method, 
as well as the range of applications in which local regions of a large data
set are of interest, we expect that the method of semi-supervised 
eigenvectors can prove useful in a wide range of machine learning and data 
analysis applications.

\section*{Ackowledgements}
We acknowledge Kristoffer H. Madsen, researcher at the Danish Research Centre for Magnetic Resonance at the University Hospital in Hvidovre, for providing the analyzed fMRI data.

%\newpage

%-----------------------------------------------------------------------
% \setlinespacing{1}
\bibliographystyle{plain}
%\bibliographystyle{unsrt}

%
%\bibliographystyle{plain}
%%\bibliographystyle{unsrt}
%%\bibliography{mwmbib_jrnl,mwmbib_proc,mwmbib_book,mwmbib_misc,mwmbib_drft,communities,geom-struct}
%\bibliography{communities,mwmbib_jrnl,mwmbib_proc,mwmbib_book,mwmbib_misc,mwmbib_drft,communities,geom-struct}

%TOKE% \bibliography{papers,communities,mwmbib_jrnl,mwmbib_proc,mwmbib_book,mwmbib_misc,mwmbib_drft,haribib,communities}
%\bibliography{tjhbib,addlbib,communities,mwmbib_jrnl,mwmbib_proc,mwmbib_book,mwmbib_misc,mwmbib_drft}
\bibliography{paper}

%-----------------------------------------------------------------------

\clearpage
\appendix

\section{Supplementary Proofs} \label{app:proofs}
\begin{claimenv}\label{proof1}
Given an SPSD matrix $ M$ and some vector $ x$ where $ x^\top  x=1$, it holds that
\begin{align}
\displaystyle \lim_{\omega \to \infty} \left ( M + \omega  x  x^\top \right )^+ = \left ( \left ( I -  x  x^\top \right)  M \left ( I -  x  x^\top \right ) \right )^+.\label{eq:equivalence}
\end{align}
\end{claimenv}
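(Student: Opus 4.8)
The plan is to prove~(\ref{eq:equivalence}) by moving to a convenient coordinate system and then analyzing the pseudoinverse on a range space that does not move with $\omega$. First I would note that both sides are equivariant under orthogonal conjugation: replacing $(M,x)$ by $(QMQ^\top,Qx)$ for orthogonal $Q$ conjugates each side by $Q$, so I may assume without loss of generality that $x=e_1$ and write $M=\left(\begin{smallmatrix} a & b^\top \\ b & C\end{smallmatrix}\right)$ with $a\in\mathbb{R}$, $b\in\mathbb{R}^{n-1}$, and $C$ an $(n-1)\times(n-1)$ SPSD block. Then $P:=I-xx^\top=\mathrm{diag}(0,I)$ gives $PMP=\mathrm{diag}(0,C)$, hence $(PMP)^+=\mathrm{diag}(0,C^+)$, and with $A_\omega:=M+\omega xx^\top=\left(\begin{smallmatrix} a+\omega & b^\top \\ b & C\end{smallmatrix}\right)$ the claim reduces to showing $\lim_{\omega\to\infty}A_\omega^+=\mathrm{diag}(0,C^+)$.

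The key structural input, and the place where semidefiniteness is essential, is that $\mathrm{range}(A_\omega)$ is frozen. Since $M$ and $\omega xx^\top$ are both SPSD, $\ker(A_\omega)=\ker M\cap\ker(xx^\top)$ for every $\omega>0$, so $\mathrm{range}(A_\omega)=\mathrm{range}(M)+\mathrm{span}(x)=:R$ independently of $\omega$. In block form this is equivalent to the range condition $b\in\mathrm{range}(C)$ for an SPSD matrix. (Definiteness is genuinely needed: for the indefinite $M=\left(\begin{smallmatrix}a&b\\ b&0\end{smallmatrix}\right)$ with $b\neq0$ the lower-right entry of $A_\omega^{-1}$ diverges, so no limit exists — exactly the behavior the SPSD hypothesis excludes.)

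With $R$ fixed I would decompose it orthogonally as $R=\mathrm{span}(x)\oplus W$, $W:=R\cap x^\perp$, and use $b\in\mathrm{range}(C)$ to verify $W=\{0\}\oplus\mathrm{range}(C)=\mathrm{range}(PMP)$. In this decomposition the restriction has the block form
\[
A_\omega|_R=\begin{pmatrix} a+\omega & q^\top \\ q & S\end{pmatrix},
\]
where $q$ (the projection of $A_\omega x$ onto $W$, which equals the image of $b$) and $S$ (the action of $C$ on $\mathrm{range}(C)$) are \emph{independent} of $\omega$, and $S$ is invertible with $S^{-1}=(PMP)^+|_W$. Since $A_\omega^+$ agrees with $(A_\omega|_R)^{-1}$ on $R$ and vanishes on $R^\perp$, it then suffices to invert this $2\times2$ block operator by the Schur-complement formula: the Schur complement $\sigma_\omega=(a+\omega)-q^\top S^{-1}q$ tends to $+\infty$, so $\sigma_\omega^{-1}\to0$, the $(1,1)$ and off-diagonal blocks vanish, and the lower-right block tends to $S^{-1}$. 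This yields $A_\omega^+\to\mathrm{diag}(0,S^{-1})=(PMP)^+$.

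I expect the main obstacle to be the singular case, i.e.\ when $C$ (equivalently $M$) is not invertible, since one cannot simply invert $M$ and pass to the limit. The entire argument hinges on the two SPSD consequences that the range $R$ is frozen and that the coupling vector $b$ lies in $\mathrm{range}(C)$, so that the limiting lower-right block is exactly $C^+$. Establishing the identification $W=\mathrm{range}(PMP)$ together with the $\omega$-independence of $q$ and $S$ is the technical heart; once these are in place, the Schur-complement limit is routine.
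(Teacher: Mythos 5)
Your proof is correct, and it takes a genuinely different---and in fact more careful---route than the paper's. The paper's own proof diagonalizes: it asserts that ``without loss of generality'' one may write $M=\alpha xx^\top + X_\perp \Lambda X_\perp^\top$ with $X_\perp^\top x=0$, i.e.\ that $x$ is an eigenvector of $M$, after which both sides of (\ref{eq:equivalence}) collapse to $X_\perp\Lambda^+X_\perp^\top$ by a one-line diagonal computation. That reduction is not actually without loss of generality: a generic SPSD $M$ does not have $x$ as an eigenvector (and in the paper's own use of the lemma, with $M=P_\gamma$ and $x$ a previously computed solution, it typically will not), so the paper's argument really only covers a special case. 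Your argument instead isolates the two consequences of semidefiniteness that make the statement true in general---that $\ker(M+\omega xx^\top)=\ker M\cap x^\perp$ freezes the range $R$ independently of $\omega$, and that $b\in\mathrm{range}(C)$ identifies $R\cap x^\perp$ with $\mathrm{range}(PMP)$---and then the Schur-complement limit does the rest; your $2\times 2$ indefinite example also correctly shows the SPSD hypothesis cannot be dropped, which the paper never addresses. What the paper's approach buys is brevity; what yours buys is a proof of the lemma in the generality in which it is stated and actually needed. The only step worth spelling out when you write it up is the standard fact that for symmetric $A_\omega$ with range $R$ the pseudoinverse equals $(A_\omega|_R)^{-1}$ on $R$ extended by zero on $R^\perp$, which is what lets you pass from the limiting block inverse on the fixed subspace $R$ to the limit of $A_\omega^+$ itself.
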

\begin{proof}
Prior to applying the pseudo inverse, $x$ is clearly an eigenvector with eigenvalue $\lambda=0$ on the right hand side, and for left hand side $x$ is an  eigenvector with eigenvalue $\lambda=\infty$. Hence, without loss of generalizability we can decompose $ M =  \alpha  x  x^\top + X_\perp  \Lambda  X_\perp^\top$, where $ \Lambda$ is a diagonal matrix, such that $ M^+=\frac{1}{\alpha}  x  x^\top + X_\perp  \Lambda^+  X_\perp^\top$. First we consider the expansion of the left hand side of Eqn.~(\ref{eq:equivalence}) 
\begin{align*}
\displaystyle \lim_{\omega \to \infty} \left ( \left (\alpha+\omega \right )  x  x^\top + X_\perp  \Lambda  X_\perp^\top \right )^+ = \displaystyle \lim_{\omega \to \infty} \frac{1}{\alpha+\omega}  x  x^\top + X_\perp  \Lambda^+  X_\perp^\top
&=  X_\perp  \Lambda^+  X_\perp^\top.
\intertext{Similar, by expanding the right hand side we get}
 \left (  \left ( I -  x  x^\top \right) \left (\alpha  x  x^\top + X_\perp  \Lambda  X_\perp^\top \right ) \left ( I -  x  x^\top \right ) \right )^+ =  \left( X_\perp  \Lambda  X_\perp^\top \right )^+
 &=  X_\perp  \Lambda^+  X_\perp^\top. 
\end{align*}
\end{proof}

\begin{claimenv}\label{proof2}
For $ M'= M +\omega \sum_{i} x_i  x_i^\top $ where $\omega \geq 0$ it holds that $\lambda_k( M') \geq \lambda_k( M)$.
\end{claimenv}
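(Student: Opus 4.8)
The plan is to recognize this as a standard eigenvalue-monotonicity statement under a positive semidefinite perturbation and to prove it via the Courant--Fischer min--max characterization. First I would observe that, because $\omega \geq 0$ and each $x_i x_i^\top$ is a rank-one positive semidefinite matrix, the perturbation $\Delta \defeq \omega \sum_i x_i x_i^\top$ is itself positive semidefinite; consequently $v^\top \Delta v = \omega \sum_i (x_i^\top v)^2 \geq 0$ for every $v$. Since $M$ is symmetric (indeed SPSD in the context of Lemma~\ref{proof1}), so is $M' = M + \Delta$, and both have real eigenvalues that I order increasingly as $\lambda_1 \leq \lambda_2 \leq \cdots$, consistent with the convention used throughout the paper.

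Next I would invoke the Courant--Fischer theorem, which expresses the $k$-th smallest eigenvalue as
\begin{equation*}
\lambda_k(M) = \min_{\dim(S) = k} \; \max_{0 \neq v \in S} \frac{v^\top M v}{v^\top v},
\end{equation*}
the outer minimum ranging over all $k$-dimensional subspaces $S \subseteq \mathbb{R}^n$, and similarly for $M'$. The key step is the pointwise comparison of Rayleigh quotients: for every nonzero $v$,
$$\frac{v^\top M' v}{v^\top v} = \frac{v^\top M v}{v^\top v} + \frac{v^\top \Delta v}{v^\top v} \geq \frac{v^\top M v}{v^\top v}.$$
Hence for each fixed subspace $S$ the inner maximum for $M'$ dominates that for $M$, and taking the outer minimum preserves this inequality (if $g(S) \geq h(S)$ for all $S$ then $\min_S g \geq \min_S h$). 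This yields $\lambda_k(M') \geq \lambda_k(M)$ for every $k$, as claimed.

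I do not anticipate a genuine obstacle here, as the conclusion is a one-line consequence of min--max once the perturbation is identified as PSD; equivalently, one could cite Weyl's inequality $\lambda_k(M+\Delta) \geq \lambda_k(M) + \lambda_{\min}(\Delta)$ together with $\lambda_{\min}(\Delta) \geq 0$. The only points requiring care are bookkeeping ones: fixing the increasing-order convention so that the correct $\min$--$\max$ (rather than $\max$--$\min$) form is used, and noting that it is the symmetry of $M$ that legitimizes the variational characterization in the first place. In the application of interest (bounding $\top$ in Algorithm~\ref{alg_new_1}), this monotonicity is exactly what guarantees that a previously computed upper bound remains a valid underestimate, so no recomputation of $\top$ is forced until the binary search fails to saturate the correlation constraint.
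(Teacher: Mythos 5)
Your proof is correct and rests on the same key observation as the paper's: the perturbation $\omega\sum_i x_i x_i^\top$ is positive semidefinite, hence $M' \succeq M$. The paper stops there and leaves the passage from the Loewner ordering to eigenvalue monotonicity implicit, whereas you make that step explicit via Courant--Fischer (or Weyl), which is a reasonable and slightly more complete rendering of the same argument.
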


\begin{proof}
All eigenvalues of the sum of rank-1 perturbations are non-negative 
\begin{align*}
\omega \sum_{i}  x_i  x_i^\top \succeq 0 \Rightarrow  M' \succeq  M. &
\end{align*}
%The remaining part of the proof follows from the Courant-Fisher Theorem 
%%%MWM%% 
%\begin{align}
%%%MWM%% \lambda_k( L_G') = \stackbin[\mathcal{S}:\text{dim}(\mathcal{S})=k]{}{\text{max}} \;  \stackbin[ v \in \mathcal{S}]{}{\text{min}} \frac{ v^\top  L_G '  v }{ v^\top  v} \\ \geq 
%\lambda_k( M') = 
%\mbox{max}_{ \mathcal{S}:\text{dim}(\mathcal{S})=k }
%\;  
%\mbox{min}_{  v \in \mathcal{S} }
%\frac{ v^\top  M '  v }{ v^\top  v} \geq 
%%%MWM%% \stackbin[\mathcal{S}:\text{dim}(\mathcal{S})=k]{}{\text{max}} \;  \stackbin[ v \in \mathcal{S}]{}{\text{min}} \frac{ v^\top  L_G   v }{ v^\top  v} = \lambda_k( L_G)
%\mbox{max}_{ \mathcal{S}:\text{dim}(\mathcal{S})=k }
%\;  
%\mbox{min}_{  v \in \mathcal{S} }
%\frac{ v^\top  M   v }{ v^\top  v} = \lambda_k( M)
%%%MWM%% 
%\end{align}
\end{proof}

\begin{claimenv}\label{claim:corrsum}
Given an orthonormal basis, $ X = \left [ x_1,\ldots, x_{n-1} \right ]$, \emph{i.e.}, $ X^\top  D_G  X =  I$, and unit length seed $ s^\top  D_G  s = 1$. Then, any unit length vector $ x_n^\top  D_G  x_n=1$, perpendicular to the subspace $ X^\top  D_G  x_n =  0$, will have a correlation with the seed bounded by
\begin{align*}
0 \leq ( x_n^\top  D_G  s)^2 \leq 1-\sum_{i=1}^{n-1}( x_i^\top  D_G  s)^2.
\end{align*}
\end{claimenv}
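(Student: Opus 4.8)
The plan is to reduce the statement to a standard fact about orthonormal systems in Euclidean space, namely Bessel's inequality, by symmetrizing the degree-weighted inner product. First I would introduce the change of variables $\tilde{x}_i = D_G^{1/2} x_i$ for $i = 1,\ldots,n$ and $\tilde{s} = D_G^{1/2} s$, which is well defined because $D_G$ is a diagonal matrix with strictly positive diagonal entries (the vertex degrees of the connected graph $G$), so $D_G^{1/2}$ exists and is invertible. Under this substitution the degree-weighted inner product becomes the ordinary Euclidean one: $x_i^\top D_G x_j = \tilde{x}_i^\top \tilde{x}_j$ and $x_i^\top D_G s = \tilde{x}_i^\top \tilde{s}$.

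Next I would record what the hypotheses become. The condition $X^\top D_G X = I$ together with $x_n^\top D_G x_n = 1$ and $X^\top D_G x_n = 0$ says exactly that $\tilde{x}_1,\ldots,\tilde{x}_n$ form an orthonormal system in $\mathbb{R}^n$ under the standard inner product; and the normalization $s^\top D_G s = 1$ becomes $\|\tilde{s}\|_2^2 = 1$. The quantity to be bounded, $(x_i^\top D_G s)^2$, is then simply $(\tilde{x}_i^\top \tilde{s})^2$, the squared coefficient of $\tilde{s}$ along the $i$-th basis direction.

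With the problem in this form the upper bound is immediate from Bessel's inequality applied to the orthonormal set $\{\tilde{x}_1,\ldots,\tilde{x}_n\}$: the sum of squared projections onto an orthonormal system cannot exceed the squared norm of the vector, so $\sum_{i=1}^{n}(\tilde{x}_i^\top \tilde{s})^2 \leq \|\tilde{s}\|_2^2 = 1$. Isolating the last term yields $(\tilde{x}_n^\top \tilde{s})^2 \leq 1 - \sum_{i=1}^{n-1}(\tilde{x}_i^\top \tilde{s})^2$, which is exactly the claimed upper bound once translated back through the change of variables. The lower bound $0 \leq (x_n^\top D_G s)^2$ is trivial, being the square of a real number.

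I do not expect any genuine obstacle; the only point requiring a moment's care is confirming that the symmetrization is legitimate, i.e.\ that $D_G^{1/2}$ is invertible and that the substitution is a bijection preserving the relevant inner products. One remark worth including is that, since $X$ has $n-1$ columns, the orthonormal system $\{\tilde{x}_1,\ldots,\tilde{x}_n\}$ is in fact a complete basis of $\mathbb{R}^n$, so Bessel's inequality holds with equality (Parseval) and the bound is actually tight; the inequality form is stated for convenience and for the regime in which fewer than $n-1$ constraint vectors are present.
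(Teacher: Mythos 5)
Your proof is correct and follows essentially the same route as the paper's, which also reduces the claim to the Pythagorean theorem/Parseval identity for an orthonormal system in the degree-weighted inner product; your use of Bessel's inequality is the same idea with the $D_G^{1/2}$ symmetrization made explicit. The only caveat is your closing remark that the system is necessarily a complete basis of the ambient space: that holds only when the number of vectors equals the number of vertices, but since Bessel does not require completeness this does not affect the argument.
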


\begin{proof}
The proof follows directly from the Pythagorean theorem. Let $ X = \left [ x_1,\ldots, x_N \right ]$ be the orthonormal basis of $\mathbb{R}^N$, \emph{i.e.}, spanning $ s$. Then
\begin{align*}
\sum_{i=1}^{N}( x_i^\top  D_G  s)^2 = ( s^\top  D_G  s)^2 = 1.
\end{align*}   
\end{proof}

\begin{claimenv}\label{claim:structural}
%For the rank deficient matrix $P_\gamma=FF^T(\mathcal{L}_G-\gamma I)FF^T$ it holds that 
For the matrix $P_\gamma=\mathcal{L}_G-\gamma I$ it holds that 
\begin{align}
P_\gamma^+ -P_{\hat \gamma}^+  =   ( \gamma - \hat \gamma)  P_{\hat \gamma}^+ P_\gamma^+,\label{eq:identity1}
\end{align} 
given that neither $\gamma$ nor $\hat \gamma$ coincides with an eigenvalue of $\mathcal{L}_G$.
\end{claimenv}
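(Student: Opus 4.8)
The plan is to recognize the claimed formula as an instance of the (second) resolvent identity, specialized to the one-parameter family $P_\gamma = \mathcal{L}_G - \gamma I$. The crucial preliminary observation is that, although the statement is phrased using the Moore--Penrose pseudoinverse, under the hypothesis the pseudoinverses are in fact genuine inverses: since $\mathcal{L}_G$ is symmetric with eigenvalues $0 = \lambda_1 < \lambda_2 \le \cdots \le \lambda_N$, the shifted matrix $P_\gamma$ has eigenvalues $\lambda_i - \gamma$, and the assumption that $\gamma$ is not an eigenvalue of $\mathcal{L}_G$ guarantees $\lambda_i - \gamma \ne 0$ for every $i$. Hence $P_\gamma$ is nonsingular and $P_\gamma^+ = P_\gamma^{-1}$; the same holds for $P_{\hat\gamma}$. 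I expect this reduction of the pseudoinverse to the ordinary inverse to be the only conceptually nontrivial point, since everything afterward is purely algebraic.

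With both matrices invertible, I would invoke the elementary identity $A^{-1} - B^{-1} = B^{-1}(B - A)A^{-1}$, valid for any invertible $A,B$, which itself is verified in one line by expanding $B^{-1}(B-A)A^{-1} = B^{-1}BA^{-1} - B^{-1}AA^{-1} = A^{-1} - B^{-1}$. Taking $A = P_\gamma$ and $B = P_{\hat\gamma}$ and using that the two shifts share the same $\mathcal{L}_G$ part, I compute $B - A = P_{\hat\gamma} - P_\gamma = (\mathcal{L}_G - \hat\gamma I) - (\mathcal{L}_G - \gamma I) = (\gamma - \hat\gamma) I$. Substituting then gives $P_\gamma^+ - P_{\hat\gamma}^+ = P_{\hat\gamma}^{-1} (\gamma - \hat\gamma) I \, P_\gamma^{-1} = (\gamma - \hat\gamma) P_{\hat\gamma}^+ P_\gamma^+$, which is exactly Eqn.~(\ref{eq:identity1}). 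Since $P_{\hat\gamma}^+$ and $P_\gamma^+$ are simultaneously diagonalizable they commute, so the ordering of the two factors on the right-hand side is immaterial; I would remark on this to reconcile the stated form with the symmetric variant $A^{-1}(B-A)B^{-1}$.

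If one prefers to avoid appealing to invertibility as a black box, the same conclusion follows just as quickly by working in a common orthonormal eigenbasis $\mathcal{L}_G = \sum_i \lambda_i u_i u_i^\top$. There $P_\gamma^+ = \sum_i (\lambda_i - \gamma)^{-1} u_i u_i^\top$, where each coefficient is well defined \emph{precisely} because $\gamma \ne \lambda_i$; comparing the coefficients of $u_i u_i^\top$ on the two sides of the claim reduces it to the scalar identity $(\lambda_i - \gamma)^{-1} - (\lambda_i - \hat\gamma)^{-1} = (\gamma - \hat\gamma)\big[(\lambda_i - \hat\gamma)(\lambda_i - \gamma)\big]^{-1}$, obtained by placing the two fractions over a common denominator. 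This eigenbasis computation has the advantage of making transparent why the hypothesis that neither $\gamma$ nor $\hat\gamma$ coincides with an eigenvalue of $\mathcal{L}_G$ is exactly what is required: it is what keeps every scalar reciprocal finite. Either route is short, so I would present the resolvent-identity argument as the main proof and mention the eigenbasis verification as a remark.
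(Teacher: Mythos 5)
Your proposal is correct. Your primary argument is a slightly different packaging than the paper's: you first observe that the hypothesis forces $P_\gamma$ and $P_{\hat\gamma}$ to be nonsingular (so the pseudoinverses are ordinary inverses) and then apply the resolvent identity $A^{-1}-B^{-1}=B^{-1}(B-A)A^{-1}$ with $B-A=(\gamma-\hat\gamma)I$, which dispatches the claim without ever diagonalizing. The paper instead substitutes the eigendecomposition $P_\gamma = V\Lambda_\gamma V^T$, decouples the system, and verifies the scalar identity $\tfrac{1}{\lambda_i-\gamma}-\tfrac{1}{\lambda_i-\hat\gamma}=\tfrac{\gamma-\hat\gamma}{(\lambda_i-\hat\gamma)(\lambda_i-\gamma)}$ by putting the fractions over a common denominator --- which is precisely the eigenbasis verification you relegate to a remark. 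The resolvent route is more abstract and makes the role of invertibility explicit up front; the paper's coordinate computation makes it transparent that the hypothesis $\gamma,\hat\gamma\notin\mathrm{spec}(\mathcal{L}_G)$ is exactly what keeps each scalar reciprocal finite (and the paper additionally notes that any degenerate direction would contribute only the trivial identity $0=0$ under the pseudoinverse convention). Both arguments are complete; there is no gap in yours.
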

\begin{proof}
The proof follows directly by plain algebra. Simply substitute the SVD $P_\gamma=V\Lambda_\gamma V^T$, where $\Lambda_\gamma$ is a diagonal matrix with the eigenvalues shifted by $\gamma$, into Eqn. (\ref{eq:identity1})
  \begin{align*}
V\Lambda_\gamma^+V^T-V\Lambda_{\hat \gamma}^+V^T&=(\gamma - \hat \gamma)V\Lambda_{\hat \gamma}^+V^TV\Lambda_\gamma^+V^T\\
V\Lambda_\gamma^+V^T-V\Lambda_{\hat \gamma}^+V^T&=(\gamma - \hat \gamma)V\Lambda_{\hat \gamma}^+\Lambda_\gamma^+V^T\\
\Rightarrow \Lambda_\gamma^+-\Lambda_{\hat \gamma}^+&=(\gamma - \hat \gamma)\Lambda_{\hat \gamma}^+\Lambda_\gamma^+.
\intertext{The system is decoupled so it will be sufficient to consider a single eigenvalue}
\frac{1}{\lambda_i-\gamma}-\frac{1}{\lambda_i-\hat \gamma}&=\frac{\gamma -\hat \gamma}{(\lambda_i-\hat \gamma)(\lambda_i-\gamma)}\\
\frac{\lambda_i-\hat \gamma}{(\lambda_i-\hat \gamma)(\lambda_i-\gamma)}-\frac{\lambda_i-\gamma}{(\lambda_i-\hat \gamma)(\lambda_i-\gamma)}&=\frac{\gamma -\hat \gamma}{(\lambda_i-\hat \gamma)(\lambda_i-\gamma)}\\
\frac{\gamma-\hat \gamma}{(\lambda_i-\hat \gamma)(\lambda_i-\gamma)}&=\frac{\gamma -\hat \gamma}{(\lambda_i-\hat \gamma)(\lambda_i-\gamma)}.
\end{align*}
Also, this trivially holds for the rank deficient case, \emph{i.e.}, $0=0$.
\end{proof}

\begin{claimenv}\label{claim:peeling}
As pointed out in Section \ref{sec:discussion}, it is already immediate that the initial semi-supervised eigenvector can be computed using a diffusion-based procedure, such as the Push algorithm. However, from that discussion it remains unclear how the approach can be generalized for the consecutive $k-1$ semi-supervised eigenvectors. It turns out that the $k^{th}$ solution is approximated by
\begin{align}
x_k &\approx  c(I-XX^TD_G)(L_G-\gamma_k D_G)^+D_Gs, \label{eq:peeling}
\end{align}
given that $(L_G-\gamma_k D_G)^+D_Gs$ is linearly independent with respect to the previous $k-1$ solutions contained in $X$.
\end{claimenv}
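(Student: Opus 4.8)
The plan is to pass to the normalized-Laplacian frame, where the exact constrained solution has the clean Lagrangian form already derived in Section~\ref{sec:nystrom}, and then to recognize the peeling formula Eqn.~(\ref{eq:peeling}) as the operator one obtains when the previously-computed solutions are treated as exact eigenvectors of $\mathcal{L}_G$. Concretely, I would substitute $y=D_G^{1/2}x$, $Y=D_G^{1/2}X$, and $b=D_G^{1/2}s$, so that the degree-weighted projection $I-XX^TD_G$ becomes the ordinary orthogonal projection $I-YY^T$ (with $Y^TY=I$) and Eqn.~(\ref{eq:peeling}) reads $y_k\approx c\,(I-YY^T)P_{\gamma_k}^{+}b$, writing $P_{\gamma_k}=\mathcal{L}_G-\gamma_k I$. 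By Lemma~\ref{proof1}, applied columnwise exactly as in the $\omega\to\infty$ derivation of Eqn.~(\ref{eq:lagrange_exact}), the \emph{exact} $k$-th solution is $y_k^{*}=c\,\bigl(P_{\gamma_k}^{+}-P_{\gamma_k}^{+}Y(Y^TP_{\gamma_k}^{+}Y)^{+}Y^TP_{\gamma_k}^{+}\bigr)b$. Thus it suffices to compare these two operators applied to $b$.

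First I would reduce the claim to a single residual. Subtracting the two expressions,
\begin{align*}
y_k^{*}-y_k \;=\; c\Bigl(YY^TP_{\gamma_k}^{+}-P_{\gamma_k}^{+}Y(Y^TP_{\gamma_k}^{+}Y)^{+}Y^TP_{\gamma_k}^{+}\Bigr)b,
\end{align*}
so the whole statement comes down to showing this term is negligible under the stated hypotheses. The key step is to verify that it vanishes \emph{exactly} when the columns of $Y$ are eigenvectors of $\mathcal{L}_G$: if $P_{\gamma_k}^{+}Y=YM$ for a diagonal $M$, then $Y^TP_{\gamma_k}^{+}=MY^T$, $Y^TP_{\gamma_k}^{+}Y=M$, and a one-line computation gives $P_{\gamma_k}^{+}Y(Y^TP_{\gamma_k}^{+}Y)^{+}Y^TP_{\gamma_k}^{+}=YMM^{+}MY^T=YMY^T=YY^TP_{\gamma_k}^{+}$, so the residual is zero.

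The substance of the proof is then to argue that the already-computed columns of $Y$ are close to eigenvectors of $\mathcal{L}_G$, with the error governed by the separation of the $\gamma_j$'s. Here I would invoke the Rayleigh-quotient-iteration reading of $P_{\gamma_j}^{+}b=\sum_i \frac{u_i^Tb}{\lambda_i-\gamma_j}u_i$ (with $u_i$ the orthonormal eigenvectors of $\mathcal{L}_G$): when $\gamma_j$ saturates its correlation constraint near $\lambda_j$, the $u_j$-component dominates and $y_j$ aligns with $u_j$. To quantify the cross-contamination between distinct solutions I would use the resolvent identity of Lemma~\ref{claim:structural}, $P_{\gamma_j}^{+}P_{\gamma_k}^{+}=\frac{1}{\gamma_k-\gamma_j}\bigl(P_{\gamma_k}^{+}-P_{\gamma_j}^{+}\bigr)$, which shows that the overlap between solutions built at well-separated $\gamma_j\neq\gamma_k$ decays like $1/|\gamma_k-\gamma_j|$; hence ``sufficiently apart'' $\gamma$'s make $Y$ nearly diagonalize the relevant part of the spectrum, driving the residual above to zero. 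The linear-independence hypothesis on $(L_G-\gamma_kD_G)^{+}D_Gs$ relative to $X$ is exactly what guarantees $(I-YY^T)P_{\gamma_k}^{+}b\neq 0$, so that after normalization the approximation is well-posed; its failure is precisely the degenerate regime $\gamma_k\to\gamma_j$ flagged in Section~\ref{sec:discussion}.

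I expect the main obstacle to be the third step: turning the qualitative ``the $y_j$ are approximately eigenvectors'' into a quantitative bound on $\|y_k^{*}-y_k\|$ in terms of the gaps $|\gamma_j-\gamma_k|$ and the spectral gaps $|\lambda_i-\gamma_j|$. The clean cancellation holds only in the exact-eigenvector limit, and controlling the first-order term is delicate because $P_{\gamma_k}^{+}$ has large norm precisely when $\gamma_k$ is near an eigenvalue---the same regime that makes the dominant direction emerge. Balancing these two effects, most naturally via the resolvent identity together with a perturbation expansion of $(Y^TP_{\gamma_k}^{+}Y)^{+}$, is the hard part; everything else is bookkeeping in the normalized frame.
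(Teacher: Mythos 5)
You have assembled the right ingredients (the normalized frame, the exact Lagrangian form Eqn.~(\ref{eq:lagrange_exact}) via Lemma~\ref{proof1}, and the resolvent identity of Lemma~\ref{claim:structural}), and your observation that the residual $c\bigl(YY^TP_{\gamma_k}^{+}-P_{\gamma_k}^{+}Y(Y^TP_{\gamma_k}^{+}Y)^{+}Y^TP_{\gamma_k}^{+}\bigr)b$ vanishes exactly when the columns of $Y$ are eigenvectors of $\mathcal{L}_G$ is correct. But the step you flag as the ``main obstacle''---quantifying how close the $y_j$ are to eigenvectors and propagating that through a perturbation expansion of $(Y^TP_{\gamma_k}^{+}Y)^{+}$---is a genuine gap, and it is not merely a technical one: the near-eigenvector condition is far stronger than what is needed, and the bound you would have to prove fights against the blow-up of $\|P_{\gamma_k}^{+}\|$ near the spectrum, exactly as you note. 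The proposal as written does not close.

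The paper's proof sidesteps this entirely by never treating $y_1$ as an approximate eigenvector. Instead it uses the \emph{explicit} form $y_1\propto P_{\gamma_1}^{+}b$ and the fact that all the resolvents commute: by Lemma~\ref{claim:structural}, $P_{\gamma_2}^{+}y_1\propto P_{\gamma_2}^{+}P_{\gamma_1}^{+}b=\tfrac{1}{\gamma_1-\gamma_2}\bigl(P_{\gamma_1}^{+}-P_{\gamma_2}^{+}\bigr)b$, so the vector $P_{\gamma_2}^{+}y_1$ appearing in the rank-one correction of Eqn.~(\ref{eq:lagrange_exact}) already lies in $\mathrm{span}\{y_1,\,P_{\gamma_2}^{+}b\}$. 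Writing $\rho_{\gamma_i\gamma_j}=b^TP_{\gamma_i}^{+}P_{\gamma_j}^{+}b$ and choosing the free normalization $c=(\rho_{\gamma_1\gamma_1}-\rho_{\gamma_1\gamma_2})/\rho_{\gamma_1\gamma_1}$, the correction term then collapses algebraically to $\rho_{\gamma_1\gamma_2}P_{\gamma_1}^{+}b/\rho_{\gamma_1\gamma_1}$, which is precisely $y_1y_1^TP_{\gamma_2}^{+}b$; this gives $y_2\approx c(I-y_1y_1^T)P_{\gamma_2}^{+}b$ with the only degradation coming from the conditioning of $P_{\gamma_1}^{+}-P_{\gamma_2}^{+}$ (singular at $\gamma_1=\gamma_2$), and the general case follows by recursion and by undoing the $D_G^{1/2}$ change of variables. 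If you want to rescue your residual-based formulation, the fix is to substitute the explicit resolvent form of the columns of $Y$ into your residual and apply Lemma~\ref{claim:structural} there, rather than trying to prove the columns are near-eigenvectors.
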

\begin{proof}
By Eqn. (\ref{eq:lagrange_exact}) the solution for the second semi-supervised eigenvector can be expressed as
\newcommand{\PI}{{P_{\gamma_1}}}
\newcommand{\PII}{{P_{\gamma_2}}}
\newcommand{\rab}{{\rho_{\gamma_1 \gamma_2}}}
\newcommand{\raa}{{\rho_{\gamma_1 \gamma_1}}}
\begin{align}
y_2&=c \left (\PII^+- \PII^+y_1 (y_1^T\PII^+y_1)^+ y_1^T\PII^+ \right )D_G^{1/2}s, \notag
\intertext{where $(y_1^T\PII^+y_1)^+$ is a constant. For notational convenience we start by substituting $b=D_G^{1/2}s$ together with the explicit solution $y_1\propto  \PI^+b$}
%Our main goal is now to derive an expression allowing us to compute the solutions based on a series of graph diffusions.
%For the following derivation we consider solutions based on the normalized graph Laplacian, \emph{i.e.}, the first solution takes the following form
%\begin{align}
%y_1 = \left( \mathcal{L}_G - \gamma_1 I \right)^{+} D_G^{1/2}s 
%\end{align}
%where $x_1^{*}=cD_G^{-1/2}y_1$. This approach is convenient since the projection operator with null space defined by previous solutions can be expressed as $FF^T=I-YY^T$, where we without loss of generality assume $Y^TY=1$. 
y_2&=c \PII^+b- \frac{c\PII^+\PI^+b  b^T\PI^+ \PII^+ b}{b^T \PI^+\PII^+\PI^+b},\notag\\
\intertext{and for the same reason we also introduce $\rho_{\gamma_1 \gamma_2}=b^\top \PI^+   \PII^+b$}
y_2&=c  \PII^+b- \frac{c \rab\PII^+\PI^+b }{b^T \PI^+\PII^+\PI^+b}.\notag\\
 \intertext{We can approximate this expression by exploiting the structural result of Lemma \ref{claim:structural}, namely that $\PI^+ -\PII^+  =   ( \gamma_1 - \gamma_2)  \PII^+ \PI^+$ }
y_2 &\approx c  \PII^+b- \frac{c \rab(\PI^+-\PII^+)b }{b^T \PI^+( \PI^+-\PII^+)b}\notag\\
 &=c  \PII^+b- \frac{c \rab(\PI^+-\PII^+)b }{\raa-\rab}.\notag\\
\intertext{We emphasize that this approximation is exact whenever $\PI^+ -\PII^+$ is well-conditioned, and singular for $\gamma_1=\gamma_2$. Then, substitute $c=\frac{\raa-\rab}{\raa}$}
y_2 &\approx \frac{\raa \PII^+b-\rab \PII^+b}{\raa}- \frac{ \rab(\PI^+-\PII^+)b }{\raa}\notag\\
 &=\frac{\raa \PII^+b-\rab \PII^+b- \rab\PI^+b+\rab \PII^+b }{\raa}\notag\\
 &=\frac{\raa \PII^+b- \rab\PI^+b}{\raa}\notag\\
 &=\PII^+b-\frac{ \rab\PI^+b}{\raa}\notag.
 \intertext{By resubstituting for the auxiliary variables we obtain the desired result}
 y_2&\approx c(I-y_1y_1^T)(\mathcal{L}_G-\gamma I)^+D_G^{1/2}s,\notag
 \intertext{and by applying this procedure recursively it follows that}
 y_k&\approx c(I-YY^T)(\mathcal{L}_G-\gamma_k I)^+D_G^{1/2}s.\notag
\intertext{Finally, we can relate this result to the combinatorial graph Laplacian as follows}
 y_k&\approx c (I-D_G^{1/2}XX^TD_G^{1/2})D_G^{1/2}(L_G-\gamma_k D_G)^+D_Gs\notag\\
 &=c(D_G^{1/2}-D_G^{1/2}XX^TD_G)(L_G-\gamma_k D_G)^+D_Gs\notag\\
  &= cD_G^{1/2} (I-XX^TD_G)(L_G-\gamma_k D_G)^+D_Gs,\notag
\intertext{and due to the relationship $x_k=D_G^{-1/2}y_k$ it follows that}
 x_k &\approx  c (I-XX^TD_G)(L_G-\gamma_k D_G)^+D_Gs. \notag
 \end{align}
% Our empirical evaluation demonstrates that this approximation rarely becomes an issue. Moreover, it is easy to detect when the approximation turns bad, since in that case, the approximation will no longer be perpendicular to the previous solutions.
\end{proof}

%\section{Supplementary Material}

%\subsection{Peeling Procedure for Semi-supervised Eigenvectors}\label{sec:peeling}

\section{Derivation of sparse graph diffusions.}\label{app:sparsediffusion}
To allow efficient computation of semi-supervised eigenvectors by graph diffusions, we must make the relationship with the sparse seed vector explicit. Here we specifically consider the derivation of Eqn. (\ref{eq:pushfast}).
Given a sparse seed indicator $s_0$, we can write the seed vector $s$ as $s\propto D_G^{-1/2}(I-v_0v_0^T)s_0$, where $v_0\propto \text{diag}(D^{1/2})$ is the leading eigenvector of the normalized graph Laplacian (corresponding to the all-one vector of the combinatorial graph Laplacian).
Using this explicit form of $s$ we can rewrite the leading solution as
\begin{align*}
x_1&=c (L_G-\gamma D_G)^+D_Gs\\
&= cD_G^{-1/2}(\mathcal{L}_G-\gamma I)^+D_G^{1/2}s\\
&= cD_G^{-1/2}(\mathcal{L}_G-\gamma I)^+D_G^{1/2} D_G^{-1/2}(I-v_0v_0^T)s_0 \\
&= cD_G^{-1/2}\left ( (\mathcal{L}_G-\gamma I)^+s_0 - (\mathcal{L}_G-\gamma I)^+ v_0v_0^Ts_0 \right).
\intertext{Since $\mathcal{L}_G-\gamma I$ simply shifts the eigenvalues of $\mathcal{L}_G$ by $-\gamma$, the latter term simplifies to}
x_1&= cD_G^{-1/2}\left ( (\mathcal{L}_G-\gamma I)^+s_0 - \left (\frac{1}{-\gamma}v_0v_0^T \right) v_0v_0^Ts_0 \right)\\
&= cD_G^{-1/2}\left ( (\mathcal{L}_G-\gamma I)^+s_0 +\frac{1}{\gamma}v_0 v_0^Ts_0 \right)\\ 
&= cD_G^{-1/2}\left ( \frac{1}{-\gamma} D_G^{-1/2}  \text{pr}_\epsilon \left(\frac{\gamma}{\gamma-2},D_G^{1/2}s_0 \right)   +\frac{1}{\gamma}v_0 v_0^Ts_0 \right ).
%&\propto D_G^{-1/2}\left ( \frac{1}{-\gamma} D_G^{-1/2}\textsc{Push}\left \{A_G\leftarrow A_G,s_\text{pr}\leftarrow D_G^{1/2}s_\text{sparse},\alpha'\leftarrow \frac{\gamma}{\gamma-2}\right \}  +\frac{1}{\gamma}v_0 v_0^Ts_\text{sparse} \right ) 
\end{align*}
Finally, by exploiting the peeling result in Eqn. (\ref{eq:peeling}), we can use the Push algorithm to approximate the sequence of semi-supervised eigenvectors in an extremely efficient manner
\begin{align*}
x_t^{*}&\approx c \left (I-XX^TD_G \right )\left ( D_G^{-1}    \text{pr}_\epsilon \left(\frac{\gamma_t}{\gamma_t-2},D_G^{1/2}s_0 \right)  - D_G^{-1/2} v_0 v_0^Ts_0 \right ),
\end{align*}
as the Push algorithm is only applied on the sparse seed set.

\section{Nystr{\"{o}}m Approximation for the Normalized Graph Laplacian}\label{app:nystrom}
The vanilla procedure is as follows; we choose $m$ samples at random from the full data set, and for notational simplicity we reorder the samples so that these $m$ samples are followed by the remaining $n=N-m$ samples, \emph{i.e.}, we can partition the adjacency matrix as
\begin{align*}
A_G=\left( \begin{array}{cc}
A & B \\
B^T & C \end{array} \right),
\end{align*}
where $A \in \mathbb{R}^{m \times m}$, $B \in \mathbb{R}^{m \times n}$, and $C\in \mathbb{R}^{n\times n}$, with $N=m+n$ and $m\ll n$. The Nystr{\"{o}}m extension then approximates the huge $C$ matrix in terms of $A$ and $B$, so the resulting approximation to weight matrix becomes
\begin{align*}
A_G \approx \hat A_G = \left( \begin{array}{cc}
A & B \\
B^T & B^TA^{-1} B \end{array} \right).
\end{align*}
Hence, rather than encoding only each nodes k-nearest-neighbors into the weight matrix, the Nystr{\"{o}}m methods provides a low-rank approximation to the entire dense weight matrix. Since the leading eigenvectors of $D_G^{-1/2}A_GD_G^{-1/2}$ correspond to the smallest of $\mathcal{L}_G$, our goal is to diagonalize $D_G^{-1/2}A_GD_G^{-1/2}$.
%As shown by \cite{...} the approximation is a diagonalization of $\hat W$ expressed as $\hat W=V \Lambda V^T$ where 
%\begin{align}
%V=\left( \begin{array}{c}
%A \\
%B^T \end{array} \right) A^{-1/2}U \Lambda^{-1/2}
%\end{align}
%where $S=A+A^{-1/2}BB^TA^{-1/2}$ is diagonalized by $S=U\Lambda U^T$. In order to diagonalize $D_G^{-1/2}WD_G^{-1/2}$ by the Nystr{\"{o}}m approximation, $A$ and $B$ must simply be normalized by the row sums $\hat d=\hat W 1$ as
At the risk of washing out the local hetrogeneties the Nystr{\"{o}}m procedure approximates the largest eigenvectors of $D_G^{-1/2}A_GD_G^{-1/2}$ using the normalized matrices $\tilde A$ and $\tilde B$
\begin{align*}
\tilde A_{ij}&=\frac{A_{ij}}{\sqrt{\hat d_i \hat d_j}}, \quad i,j=1,\ldots m\\
\tilde B_{ij}&=\frac{B_{ij}}{\sqrt{\hat d_i \hat d_{j+m}}}, \quad i=1,\ldots m,\quad j=1,\ldots,n.
\end{align*}
Finally, let $U \Lambda U^T$ be the SVD of $\tilde A + \tilde A^{-1/2} \tilde B \tilde B^T \tilde A^{-1/2}$, then the $m$ leading eigenvectors are approximated by
\begin{align*}
V=\left( \begin{array}{c}
\tilde A \\
\tilde B^T \end{array} \right) \tilde A^{-1/2}U \Lambda^{-1/2},
\end{align*}
and the normalized graph Laplacian by $\mathcal{L}_G \approx I - V \Lambda V^T$.

\end{document}